\documentclass[twoside,11pt]{article}

\usepackage{amsfonts}
\usepackage{amsmath}
\usepackage{multirow}

%

\usepackage{jmlr2e}

\usepackage{booktabs}
\usepackage[table,xcdraw]{xcolor}
\usepackage{multirow}
\usepackage{subcaption}
\usepackage[figuresleft]{rotating}
\usepackage{adjustbox}
\usepackage{tabularx}
\usepackage{pdflscape}
\usepackage[ruled,vlined]{algorithm2e}
\usepackage{xcolor}
\usepackage{makecell}
\usepackage{lastpage}


\numberwithin{equation}{section}


\begin{document}

\jmlrheading{N/A}{2023}{1-\pageref{LastPage}}{04/23; Revised N/A}{N/A}{N/A}{Valentin Delchevalerie, Alexandre Mayer, Adrien Bibal and Benoît Frénay}

\ShortHeadings{(S-)O(2) Equivariance in Image Recognition with B-CNNs}{Delchevalerie, Mayer, Bibal and Frénay}

\firstpageno{1}

\title{SO(2) and O(2) Equivariance in Image Recognition with Bessel-Convolutional Neural Networks}

\author{\name Valentin Delchevalerie \email valentin.delchevalerie@unamur.be \\
       \addr Faculty of Computer Science \\
       NaDI $\&$ naXys institutes \\
       University of Namur \\
       Rue Grandgagnage 21, 5000 Namur, Belgium
       \AND
       Alexandre Mayer \email alexandre.mayer@unamur.be \\
       \addr Department of Physics \\
       naXys institute \\
       University of Namur \\
       Rue de Bruxelles 61, 5000 Namur, Belgium
       \AND
       Adrien Bibal \email adrien.bibal@uclouvain.be \\
       \addr CENTAL, UCLouvain \email @cuanschutz.edu \\
       \addr Place Montesquieu 3, 1348 Louvain-la-Neuve, Belgium \\
       \addr School of Medicine, Univerity of Colorado Anschutz Medical Campus \\
       1890 N Revere Ct, Aurora, CO 80045, USA
       \AND
       \name Benoît Frénay \email benoit.frenay@unamur.be \\
       \addr Faculty of Computer Science \\
       NaDI institute \\
       University of Namur \\
       Rue Grandgagnage 21, 5000 Namur, Belgium}

\editor{None Yet Assigned}

\maketitle

\begin{abstract}
    For many years, it has been shown how much exploiting equivariances can be beneficial when solving image analysis tasks. For example, the superiority of convolutional neural networks (CNNs) compared to dense networks mainly comes from an elegant exploitation of the translation equivariance. Patterns can appear at arbitrary positions and convolutions take this into account to achieve translation invariant operations through weight sharing. Nevertheless, images often involve other symmetries that can also be exploited. It is the case of rotations and reflections that have drawn particular attention and led to the development of multiple equivariant CNN architectures. Among all these methods, Bessel-convolutional neural networks (B-CNNs) exploit a particular decomposition based on Bessel functions to modify the key operation between images and filters and make it by design equivariant to all the continuous set of planar rotations. In this work, the mathematical developments of B-CNNs are presented along with several improvements, including the incorporation of reflection and multi-scale equivariances. Extensive study is carried out to assess the performances of B-CNNs compared to other methods. Finally, we emphasize the theoretical advantages of B-CNNs by giving more insights and in-depth mathematical details.
\end{abstract}

\begin{keywords}
    Convolutional neural networks; steerable filters; Bessel functions; SO(2) invariance; O(2) invariance
\end{keywords}

\section{Introduction}

For years now, convolutional neural networks (CNNs) are known to be the most powerful tool that we have for image analysis. Their efficiency compared to classic multi-layers perceptrons (MLPs) mainly comes from an elegant exploitation of the translation equivariance involved in image analysis tasks. Indeed, CNNs exploit the fact that patterns can arise at different positions in images by sharing the weights over translations thanks to convolutions. The translation equivariance can be seen as a particular form of prior knowledge and weights can be saved compared to an MLP architecture with similar performances.

By building on the success of exploiting translation equivariance in image analysis, we advocate here that generalizing this to other types of appropriate symmetries can also be useful. For example, in biomedical or satellite imaging, objects of interest can appear at arbitrary positions with arbitrary orientations. To illustrate this, Figure~\ref{fig:sec1:same_galaxy} shows four versions of the exact same galaxy that are equally plausible images that could occur in the data set. If the task is to determine the morphology of the galaxy, it is relevant to want these images to be processed in the exact same way. Therefore, introducing rotation equivariance will lead to a more optimal use of the weights and to a better overall efficiency of the models. Being able to guarantee rotation equivariance is also useful to put more trust into models. For instance, experts would be more confident in models that extract the exact same latent features for an object, no matter its particular orientation (of course, depending on the application).

\begin{figure}[h]
\begin{center}
    \begin{subfigure}[t]{0.21\textwidth}
        \centering
        \includegraphics[width=1\textwidth]{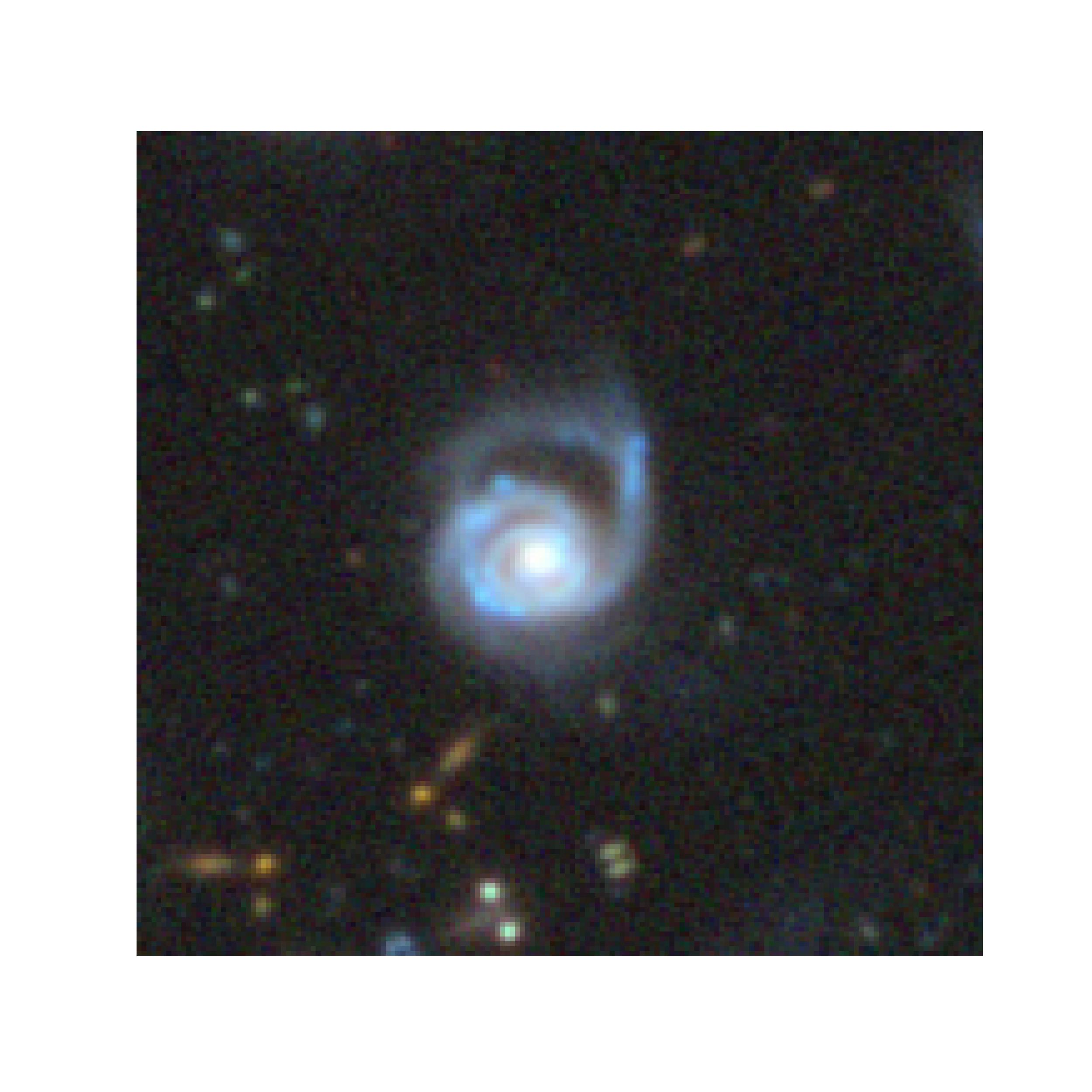}
    \end{subfigure}
    ~
    \begin{subfigure}[t]{0.21\textwidth}
        \centering
        \includegraphics[width=1\textwidth]{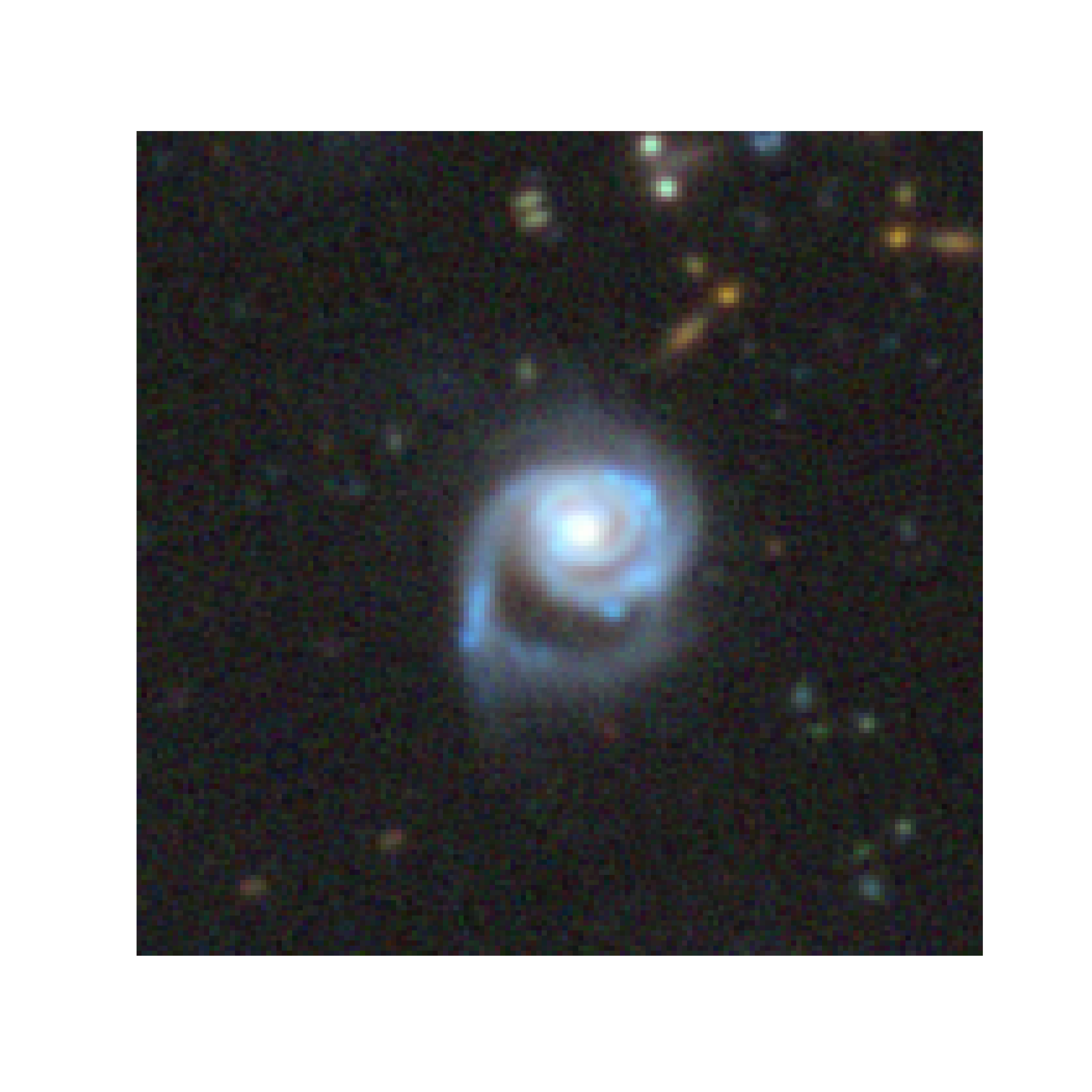}
    \end{subfigure}
    ~
    \begin{subfigure}[t]{0.21\textwidth}
        \centering
        \includegraphics[width=1\textwidth]{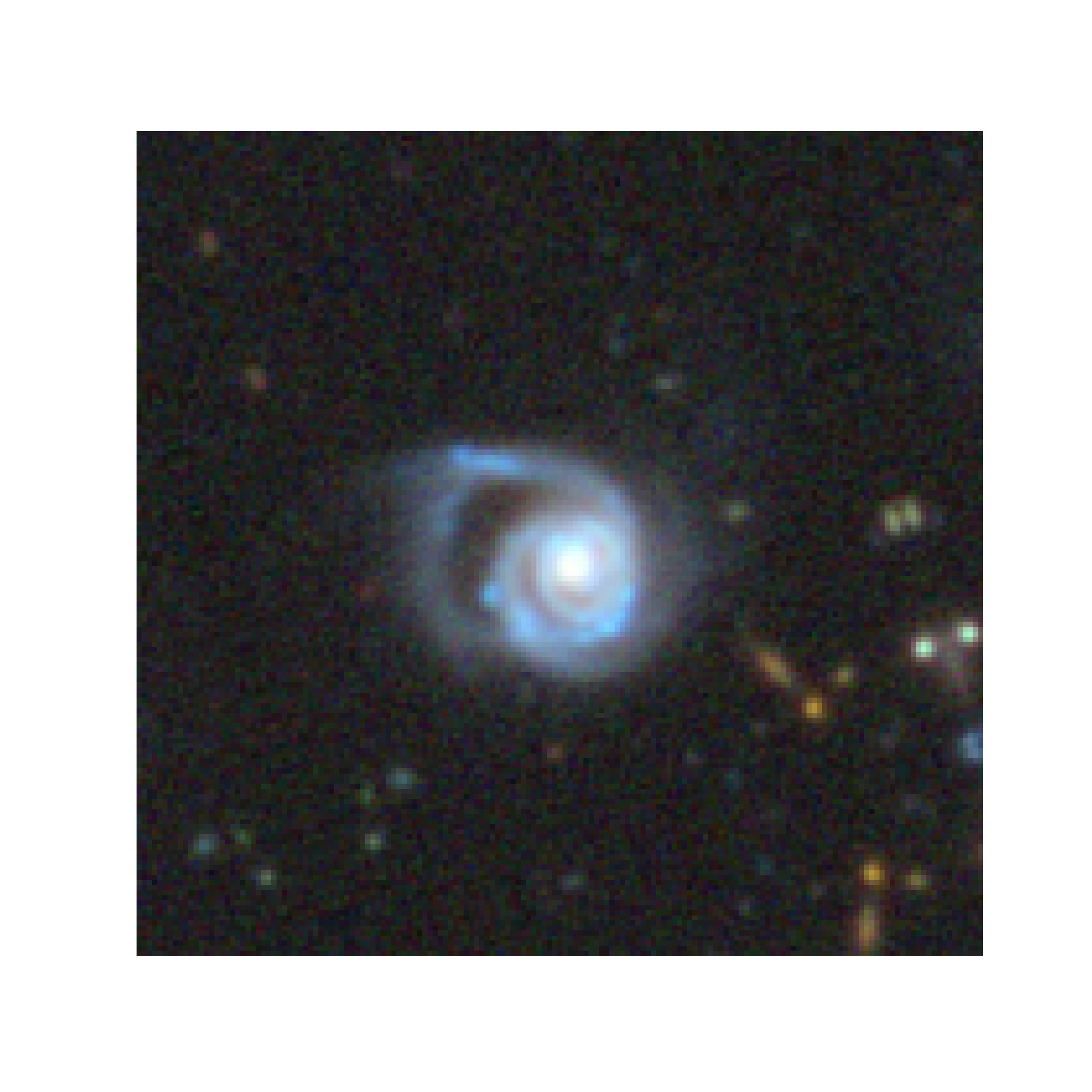}
    \end{subfigure}
    ~
    \begin{subfigure}[t]{0.21\textwidth}
        \centering
        \includegraphics[width=1\textwidth]{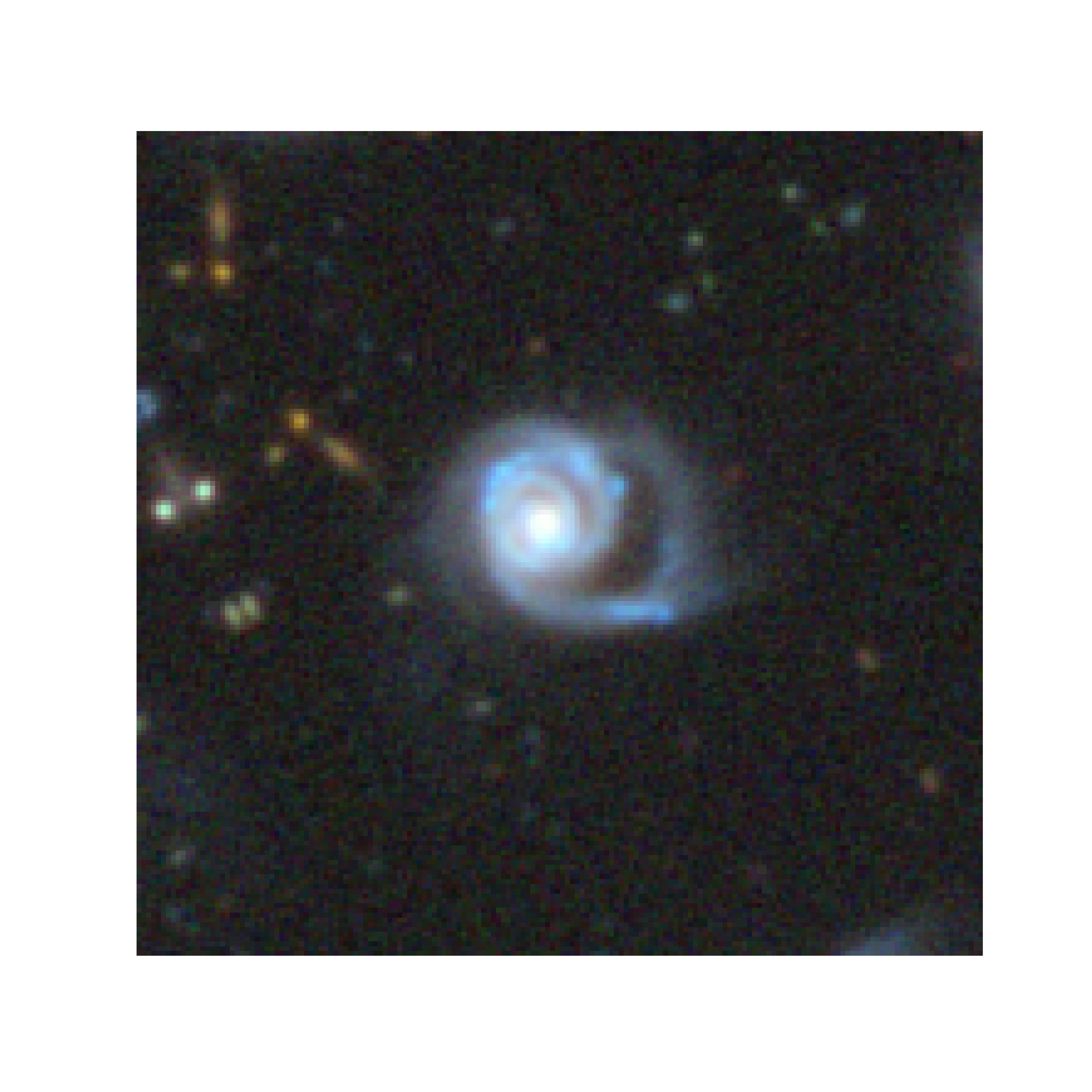}
    \end{subfigure}
    \caption{Four rotated versions of the same galaxy image, retrieved from the Galaxy Zoo data set by~\cite{willett_galaxy_2013}. This data set contains images of galaxies as well as their morphologies, according to experts. For this application, the orientation is arbitrary and does not contain any information.}
    \label{fig:sec1:same_galaxy}
\end{center}
\end{figure}

Still, introducing other types of equivariance in an efficient way in CNNs is not straightforward. On the one hand, many works propose brute-force solutions like (i)~considerably increasing the training set (data augmentation), or (ii)~artificially multiplying the number of filters by directly applying the desired symmetries onto them. In practice, these solutions will lead both to an increase of the training time and the size of the models. Furthermore, most of these methods do not provide any mathematical guarantee regarding the equivariance. On the other hand, a few works propose solutions to efficiently bring more general equivariances in CNNs while providing mathematical guarantees. Bessel-convolutional neural networks (B-CNNs) are one of those, and rely on a particular representation of images that is more convenient to deal with rotations and reflections.

In this work, improvements of B-CNNs compared to the initial work of~\cite{BCNN} are presented, which are mainly an extension to $O(2)$-equivariance and an optimal choice for the initial $\nu_{\rm max}$ and $j_{\rm max}$ meta-parameters based on the Nyquist sampling theorem. Also, we present how multi-scale equivariance can easily be achieved in B-CNNs. Finally, a more extensive study is performed to assess the performances of B-CNNs compared to other state-of-the-art methods on different data sets. To do so, we present the full mathematical developments of B-CNNs and we give more detailed explanations. The advantage of using B-CNNs regarding both the size of the model and the training set is highlighted, along with theoretical and experimental evidences of the equivariance. Our implementation is available online at \url{https://github.com/ValDelch/B_CNNs}.

\section{Background and Definition of Invariance and Equivariance}

Invariance and equivariance are two different notions that need to be clearly defined for the next sections. Let $\Psi\left(x,y\right)$ be an image where $x$ and $y$ represent the pixel coordinates, $K\left(x,y\right)$ be an arbitrary filter and $\mathcal{G}$ be a set of transformations that can be applied on the image. The operation defined by $\ast$ is $\mathcal{G}$-invariant if $\left(g\Psi\right)\left(x,y\right) \ast K\left(x,y\right) = \Psi\left(x,y\right) \ast K\left(x,y\right), \forall g \in \mathcal{G}$, and $\mathcal{G}$-equivariant if $\left(g\Psi\right)\left(x,y\right) \ast K\left(x,y\right) = g \left(\Psi \ast K\right)\left(x,y\right), \forall g \in \mathcal{G}$. In other words, invariance means that the results will be exactly the same for all transformations $g$ of the input image, while equivariance means that the results will be also transformed by the action of $g$.

Convolutional Neural Networks (CNNs) work by applying a succession of convolutions between an input image $\Psi\left(x,y\right)$ and some filters. If $K\left(x,y\right)$ is one of those particular filters, convolutions are expressed by
\begin{equation}
   \Psi\left(x,y\right) \ast K\left(x,y\right) = \int_{-R}^{R} \int_{-R}^{R} \Psi\left(x-x',y-y'\right) K\left(x',y'\right) dx' dy', \nonumber
\end{equation}
where $R$ defines the size of the filter. One can now show that CNNs exhibit a translation equivariance (that is, patterns are detected the same way regardless of their particular positions). Indeed, if $\mathcal{T}_{u,v}$ is a translation operator such that it translates the image by an amount of pixels $(u,v)$
\begin{equation}
    \mathcal{T}_{u,v} \Psi\left(x,y\right) = \Psi\left(x+u,y+v\right), \nonumber
\end{equation}
one can show that
\begin{align}
    \mathcal{T}_{u,v} \left(\Psi \ast K\right)\left(x,y\right) &= \int_{-R}^{R} \int_{-R}^{R} \Psi\left((x-x')+u,(y-y')+v\right) K\left(x',y'\right) dx' dy' \nonumber \\
    &= \int_{-R}^{R} \int_{-R}^{R} \Psi\left((x+u)-x',(y+v)-y'\right) K\left(x',y'\right) dx' dy' \nonumber \\
    &= \int_{-R}^{R} \int_{-R}^{R} \mathcal{T}_{u,v}\Psi\left(x-x',y-y'\right) K\left(x',y'\right) dx' dy' \nonumber \\
    &= \left(\mathcal{T}_{u,v}\Psi\right)\left(x,y\right) \ast K\left(x,y\right), \nonumber
\end{align}
which matches the definition of $\mathcal{G}$-equivariance defined earlier, and therefore proves the translation equivariance of CNNs. Figure~\ref{fig:sec2:CNN_translation_equivariance} illustrates this equivariance. 

\begin{figure}[h]
    \begin{subfigure}[t]{0.5\textwidth}
        \centering
        \includegraphics[width=1\textwidth]{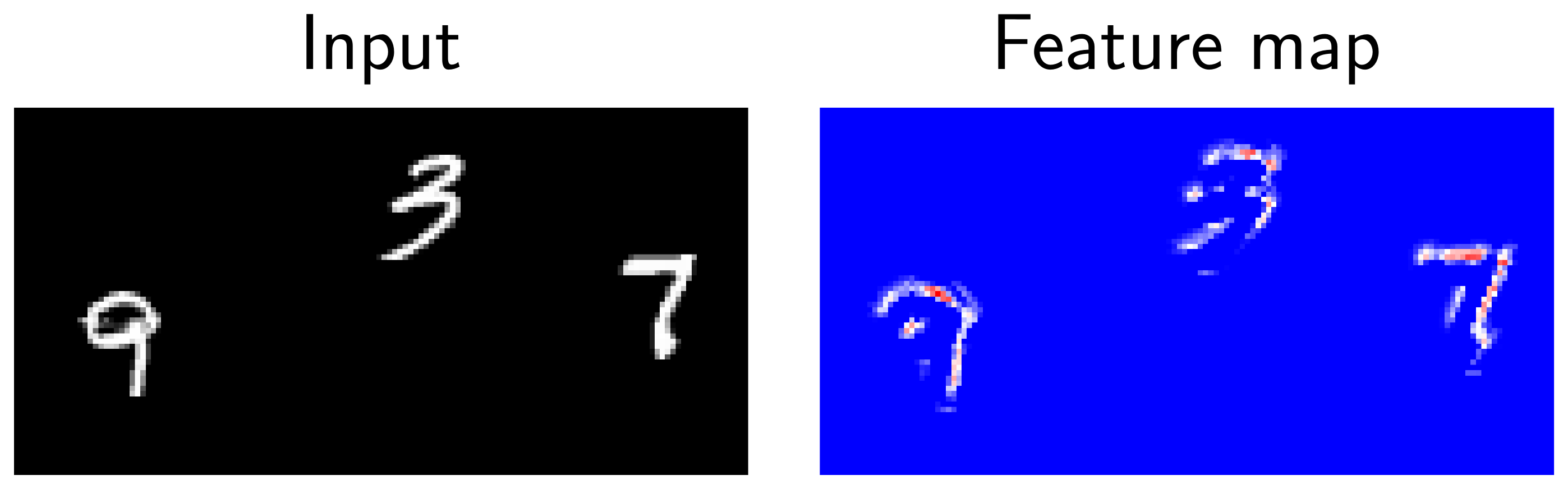}
        \caption{Initial case}
    \end{subfigure}
    ~
    \begin{subfigure}[t]{0.5\textwidth}
        \centering
        \includegraphics[width=1\textwidth]{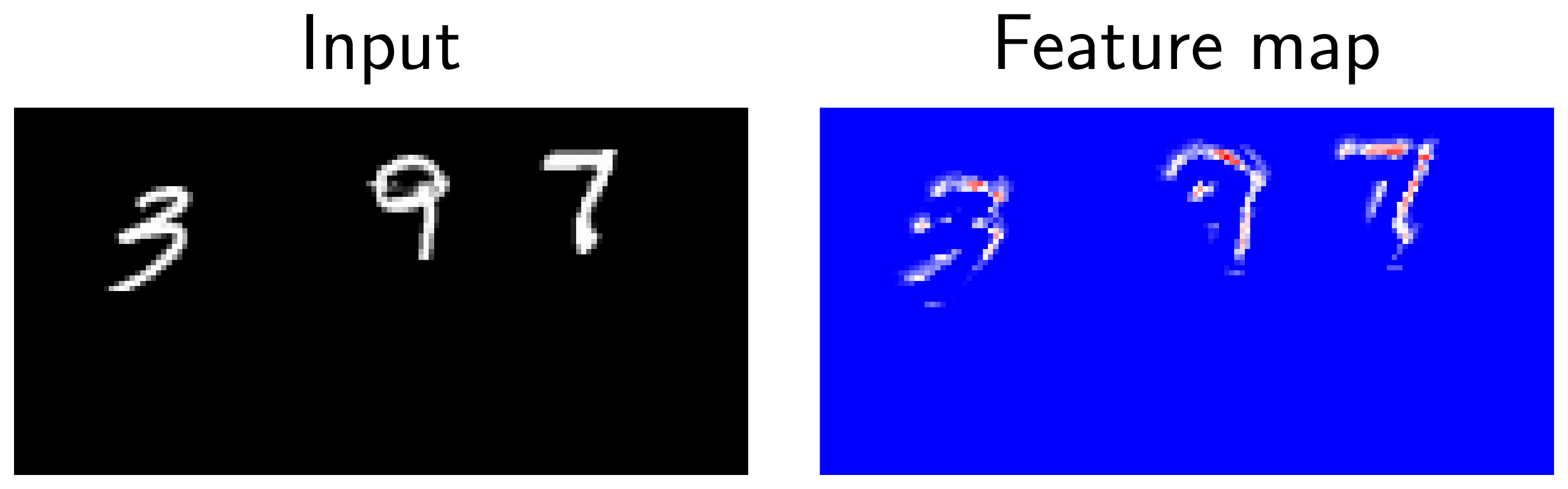}
        \caption{After translating objects}
    \end{subfigure}
    \caption{Illustration of the translation equivariance in CNNs. Both (a) and (b) are made of the same objects, but at different positions. Nevertheless, CNNs process objects the same way independently of their particular absolute positions.}
    \label{fig:sec2:CNN_translation_equivariance}
\end{figure}

 However, regarding other types of transformations, the equivariance in CNNs is generally not achieved. One can for example consider rotations, by defining $R_\alpha$ as an operator that applies a rotation of an angle $\alpha$, such that
\begin{equation}
    R_\alpha \Psi\left(x,y\right) = \Psi\left(x\cos\alpha-y\sin\alpha,x\sin\alpha+y\cos\alpha\right). \nonumber
\end{equation}
By applying a similar development, it clearly appears that
\begin{equation}
    R_\alpha \left(\Psi \ast K\right)\left(x,y\right) \neq \left(R_\alpha\Psi\right)\left(x,y\right) \ast K\left(x,y\right). \nonumber
\end{equation}
This is expected as convolution can be seen as element-wise multiplications with a sliding window, and the result of element-wise multiplications depend on the particular orientation of the matrices. This lack of rotation equivariance will be illustrated in Figure~\ref{fig:sec5:no_equivariance_cnn}.

\section{Related Works}

Many techniques propose to bring more general equivariance in convolutional neural networks (CNNs). A particular interest was taken in satisfying $SO(2)$ and $O(2)$-equivariance as it is an interesting prior for many applications in image recognition; see for example~\cite{chidester_rotation_2019} for medical imaging,~\cite{ dieleman_rotation-invariant_2015} for astronomical imaging,~\cite{li_novel_2020} for satellite imaging and~\cite{marcos_learning_2016} for texture recognition. $SO(2)$ is called the special orthogonal group and contains the continuous set of planar rotations, while $O(2)$ is called the orthogonal group and also add all the planar reflections. The different proposed methods can be categorized in different groups: (i)~methods that only increase robustness to planar transformations without mathematical guarantees of equivariance, (ii)~methods that bring some mathematical guarantees but only for a discrete set of planar transformations (as for example, cyclic $C_n$ and dihedral $D_n$ groups), and (iii)~methods that bring mathematical guarantees for the continuous set of transformations.

The most famous technique from the first category is data augmentation~\citep{quiroga2018DataAugmentation}. While robustness can be considerably increased with data augmentation, it still requires for the model to learn the equivariance, as it is not used as an explicit constraint. No theoretical guarantees can then be provided, and extracted features will generally not be the same for rotated versions of a particular object. Next to data augmentation, one can also cite spatial transformer networks by~\cite{SpatialTransformerNetworks}, rotation invariant and Fisher discriminative CNNs by~\cite{RIFD-CNN}, deformable CNNs by~\cite{DeformableCNN}, and SIFT-CNNs by~\cite{SIFT-CNN}. The main drawback of such methods lies in the fact that, as models still learn the equivariance by themselves, many parameters are used to encode redundant information. Therefore, it leads to methods of category~(ii) that aim to make model equivariant to discrete groups like $C_n$ or $D_n$. One can for example cite Group-CNNs by~\cite{GroupCNNs}, deep symmetry networks by~\cite{DSN}, steerable CNNs by~\cite{SteerableCNN}, steerable filter CNNs by~\cite{steerableFiltersCNN}, dense steerable filter CNNs by~\cite{,graham_dense_2020}, spherical CNNs by~\cite{SphericalCNN} and Deformation Robust Roto-Scale-Translation Equivariant CNNs by~\cite{roto-scale-trans}. Compared to category~(i), equivariance to a finite number of planar transformations is generally obtained by tying the weights for several transformed versions of the filter. Nevertheless, even if guarantees are now obtained, it is only for a finite set of transformations and it still involves computations with many parameters to encode the equivariance (for example, $5\times5$ filters in a $D_8$-invariant convolutional layer will be made of $5\times5\times8\times2=400$ parameters\footnote{However, note that only $5\times5=25$ parameters are learnable as the other ones are just transformed versions of the initial filter.}). Finally, for the third category~(iii), one can cite general $E(2)$-equivariant steerable CNNs ($E(2)$-CNNs) by~\cite{E2_equiv}, where equivariance to continuous groups can be obtained by using a finite number of irreducible representations, harmonic networks (HNets) by~\cite{worrall_hnets_2016} that use spherical harmonics to achieve a rotational equivariance by maintaining a disentanglement of rotation orders in the network, and~\cite{LieConv} who generalize equivariance to arbitrary transformations from Lie groups. However, authors of $E(2)$-CNNs highlight that approximating $SO(2)$ (resp, $O(2)$) by using $C_n$ (resp, $D_n$) groups instead of using a finite number of irreducible representations leads to better results. It follows that $E(2)$-equivariant CNNs are most of the time equivalent to methods of category~(ii). Regarding HNets, they are only $SO(2)$-equivariant and involve complex values in the network that are poorly compatible with many already existing tools (for example, activation functions and batch normalization layers should be adapted, saliency maps cannot be easily computed, etc.).

Recently, another type of equivariant CNNs also emerged. While symmetries can be seen as a user constraint for all the previously mentioned techniques, these new equivariant CNNs architectures find by themselves during the training phase the symmetries that should be considered. One can for example cite the work of~\cite{dehmamy2021automatic} in this direction. This is particularly useful when users do not know and have no insight about the symmetries that can be involved in data, or when symmetries are unexpected. However, the aim of such methods differs from the previous ones because symmetries are no longer applied as constraints. Therefore, those methods  rely more on the training data, and are useful in a different context of applications. A discussion about the strengths and weaknesses of these methods compared to others is provided at the end of the paper, in Section~\ref{sec:discussion}.

Our work is a direct follow-up of the previous work of~\cite{BCNN}, which built on the use of Bessel functions in order to propose a new method that belongs to the third category. Compared to the state of the art, Bessel-convolutional neural networks (B-CNNs) initially proposed a new original technique to bring $SO(2)$ equivariance, while being easy to use with already existing frameworks. In this work, we emphasize the theoretical advantages of B-CNNs by giving more mathematical details. Also, further improvements compared to the prior work of B-CNNs are presented, as for example by making them $O(2)$ and multi-scale equivariant, and automatically inferring optimal choices for some meta-parameters. Finally, a more extensive comparative study is also carried out to highlight the strengths and weaknesses of different methods.

\section{Using Bessel Functions in Image Analysis}

In Bessel-convolutional neural networks (B-CNNs), Bessel coefficients are used instead of the raw pixel values conventionally used in vanilla convolutional neural networks (CNNs). This section describes the Bessel functions, and how they can be used to compute these Bessel coefficients.  Also, some particular properties of Bessel functions and Bessel coefficients are presented. The aim of this section is to give more insights about the reasons that motivate the use of Bessel functions to achieve different kind of equivariance in CNNs. Compared to the initial work of~\cite{BCNN}, additional mathematical details are provided as well as a discussion on how to perform an optimal choice for the initial meta-parameters $\nu_{\rm max}$ and $j_{\rm max}$, and how Bessel coefficients can also be used to express reflections.

\subsection{Bessel Functions and Bessel Coefficients}\label{subsec:sec4:Bessel_functions_coefficients}

Bessel functions are particular solutions of the differential equation
\begin{equation}
    x^2\frac{d^2y}{dx^2}+x\frac{dy}{dx}+\left(x^2-\nu^2\right)y=0, \nonumber
\end{equation}
which is known as the Bessel's equation. The solution of this equation can be written as
\begin{equation}
    y\left(x\right)=A J_\nu\left(x\right)+B Y_\nu\left(x\right), \nonumber
\end{equation}
where $A$ and $B$ are two constants, and $J_\nu\left(x\right)$ and $Y_\nu\left(x\right)$ are called the Bessel functions of the first and second kind, respectively. It has to be noted that these functions are well-defined for orders $\nu \in \mathbb{R}$ in general. In B-CNNs, only the Bessel functions of the first kind are used since $Y_\nu\left(x\right)$ diverges for $x=0$. Indeed, Bessel functions will be used to express images that can take arbitrary values, including at the origin. Examples of Bessel functions of the first kind for different integer orders $\nu$ can be seen in Figure~\ref{subfig:sec4:Bessel_functions_first_kind}.

\begin{figure}[h]
    \begin{subfigure}[t]{0.49\textwidth}
        \centering
        \includegraphics[width=1\textwidth]{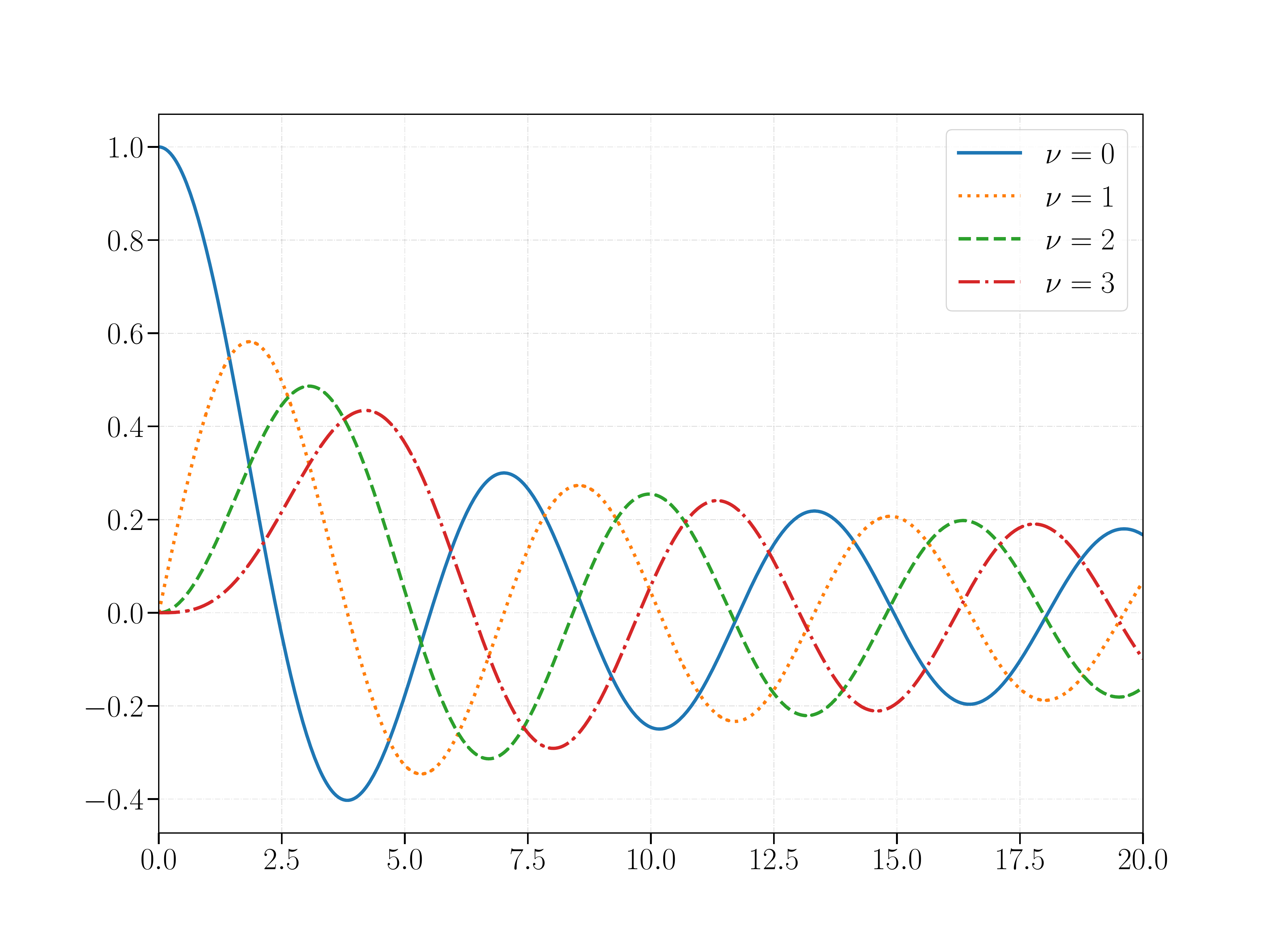}
        \caption{$J_\nu\left(x\right)$ for $\nu \in \{0,1,2,3\}$}
        \label{subfig:sec4:Bessel_functions_first_kind}
    \end{subfigure}
    ~
    \begin{subfigure}[t]{0.49\textwidth}
        \centering
        \includegraphics[width=1\textwidth]{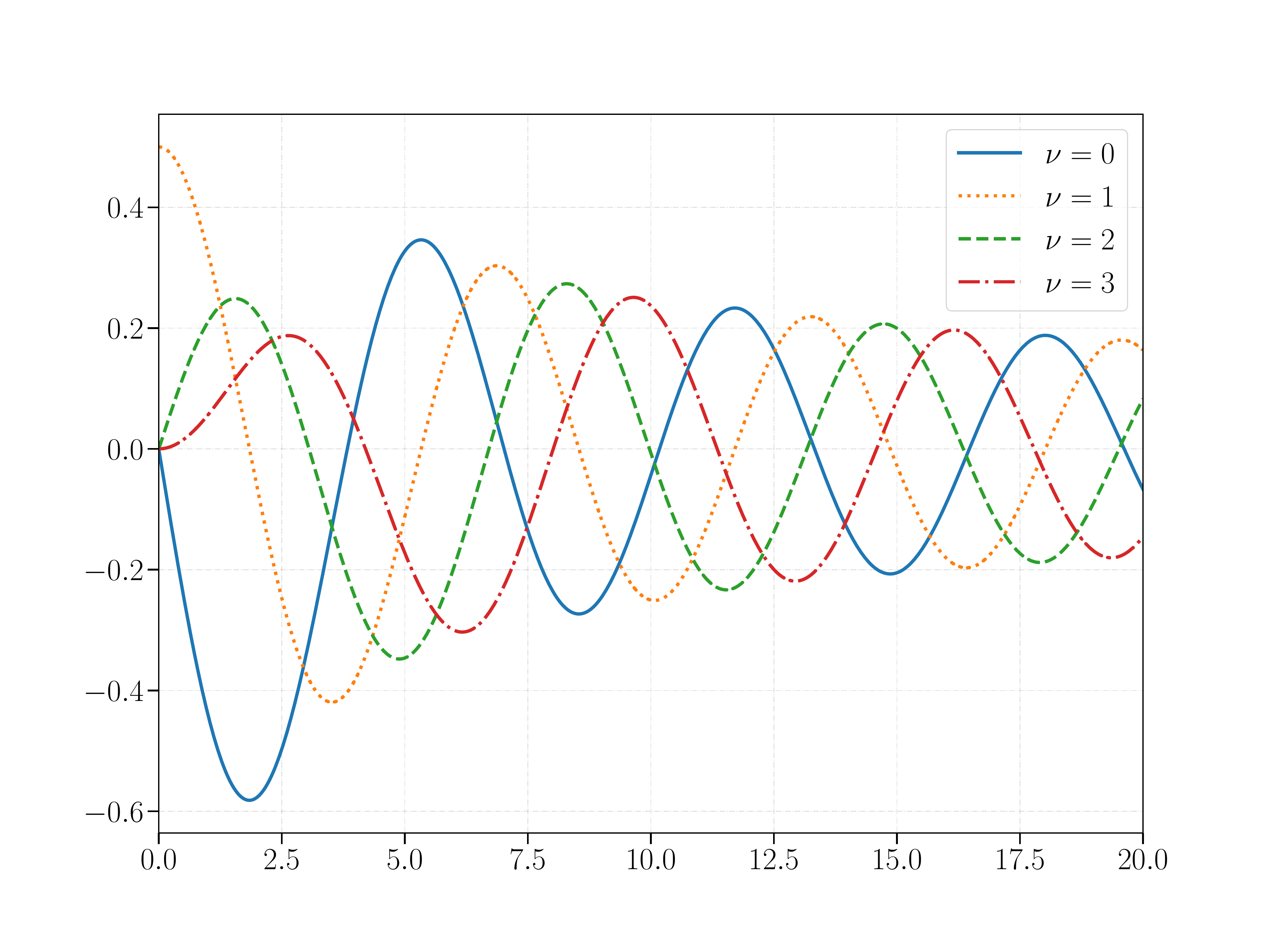}
        \caption{$\frac{d J_\nu\left(x\right)}{d x} = J'_\nu\left(x\right)$ for $\nu \in \{0,1,2,3\}$}
        \label{subfig:sec4:derivatives_Bessel_functions_first_kind}
    \end{subfigure}
    \caption{Bessel functions of the first kind are presented along with their derivatives for several integer orders $\nu \in \{0,1,2,3\}$.}
    \label{fig:sec4:J_nu_and_Jp_nu}
\end{figure}

From a mathematical point of view, Bessel's equation arises when solving Laplace's or Helmholtz's equation in cylindrical or spherical coordinates. Bessel functions are thus particularly well-known in physics as they appear naturally when solving many important problems, mainly when dealing with wave propagation in cylindrical or spherical coordinates~\citep{riley_hobson_bence_2006}. Since Bessel functions naturally arise when modeling different problems with circular symmetries in physics, these functions are particularly useful to express more conveniently  problems with circular symmetries in other domains. This ascertainment motivated the prior work of~\cite{BCNN} to express images in a particular basis made of Bessel functions of the first kind.

Bessel functions of the first kind can be used to build a particular basis 
\begin{equation}
    \Bigg\{N_{\nu,j} J_\nu\left(k_{\nu,j} \rho\right)  e^{i\nu\theta}, \forall \nu,j \in \mathbb{N}\Bigg\}, \text{ where }N_{\nu,j} = 1 / \sqrt{2\pi\int_0^R\rho J_{\nu}^2\left(k_{\nu,j} \rho\right)d\rho},
    \label{eq:sec4:Bessel_basis}
\end{equation}
for the representation of images defined in a circular domain of radius $R$, where $\rho$ and $\theta$ are the polar coordinates (the Euclidean distance from the origin and the angle with the horizontal axis, respectively). By carefully choosing $k_{\nu,j}$, this basis can be made orthonormal for all squared-integrable functions $f$ such that $f: D^2 \subset \mathbb{R}^2 \longrightarrow \mathbb{R}$ (where the domain $D^2$ is a disk in $\mathbb{R}^2$). To do so, one can choose $k_{\nu,j}$ such that $J_\nu'\left(k_{\nu,j} R\right)=0$. The proof for the orthonormality of the basis in this case is presented in Appendix~\ref{appendix:A}. Another common choice that also leads to orthonormality is to use $J_\nu\left(k_{\nu,j} R\right)=0$. Indeed, these two constraints are suitable since a property of the Bessel functions is that $J_\nu'\left(x\right)=\frac{1}{2}\left(J_{\nu-1}\left(x\right) - J_{\nu+1}\left(x\right)\right)$. Therefore, applying the constraint on $J_\nu\left(x\right)$ or on $J_\nu'\left(x\right)$ are both valid solutions that bring orthonormality. However, in our particular case, we choose to apply the constraint on $J_\nu'\left(x\right)$ because it makes it more convenient to represent arbitrary functions, as shown by~\cite{mayer_transfer_1999}. The reason is that it exists a solution $k_{\nu,j}=0$ for $\nu=0$
such that $J_\nu'\left(k_{\nu,j} R\right)=0$, which would not be the case with the constraint based on $J_\nu\left(x\right)$ (see Figure~\ref{fig:sec4:J_nu_and_Jp_nu}). Therefore, the first element in the basis $N_{0,0} J_0\left(k_{0,0}\rho\right) e^{i 0 \theta}$ will be equal to $N_{0,0}$. As the result is constant and does not depend on $\rho$ and $\theta$, this element can be used to describe an arbitrary constant intensity in $f$. Figure~\ref{sec4:fig:basis_representation} presents some elements of the basis, including the first one. Also, one can point out that when the order $\nu$ increases, the angular frequency (the number of zeros along the $\theta$-polar-coordinate) of the basis element increases. On the other side, when the order $j$ increases, the radial frequency (the number of zeros along the $\rho$-polar-coordinate) increases.

\begin{figure}[h]
    \centering
    \begin{subfigure}[t]{0.45\textwidth}
        \centering
        \includegraphics[width=1\textwidth]{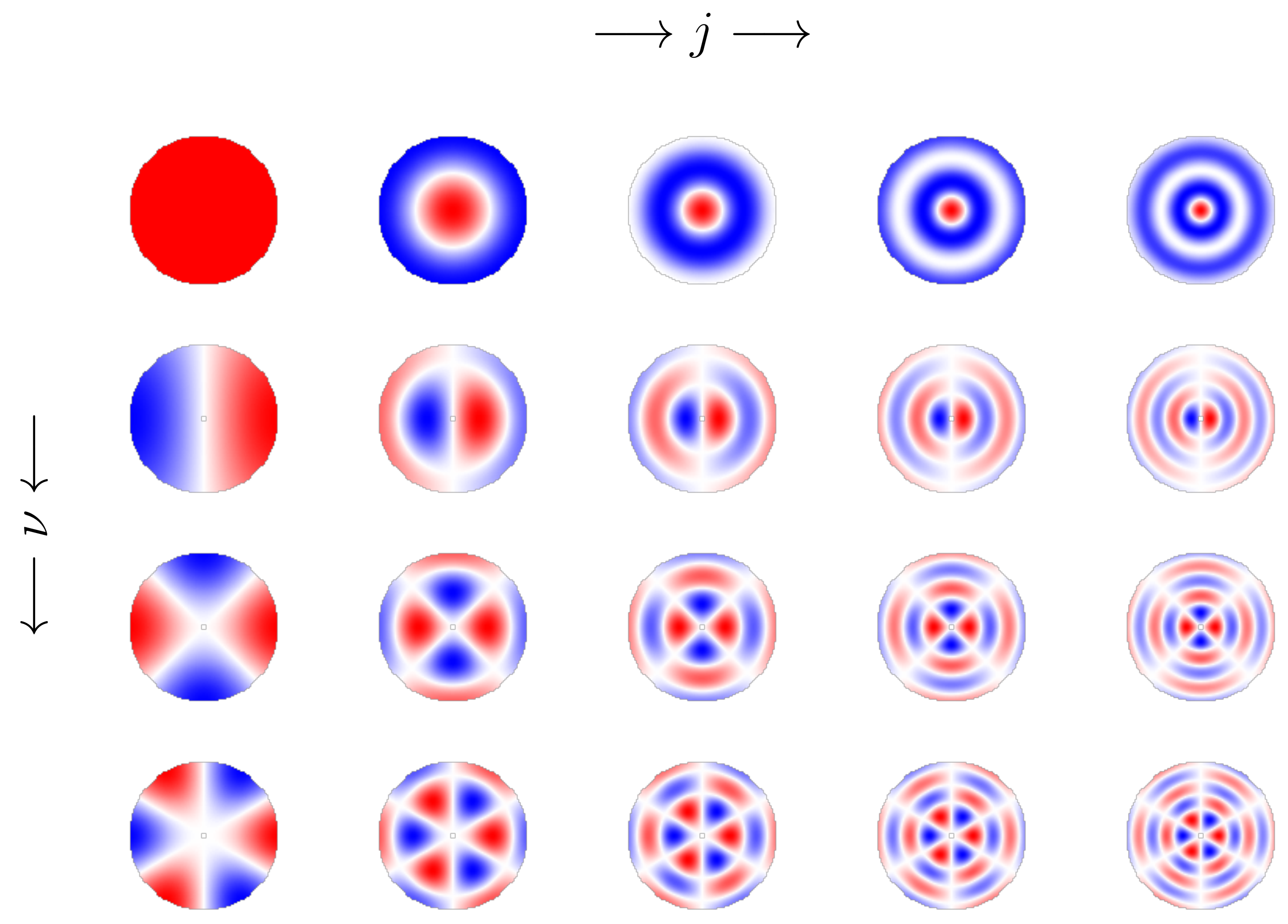}
        \caption{$\Bigg\{N_{\nu,j} J_\nu\left(k_{\nu,j} \rho\right)  \cos\left(\nu\theta\right)\Bigg\}$}
    \end{subfigure}
    ~
    \begin{subfigure}[t]{0.45\textwidth}
        \centering
        \includegraphics[width=1\textwidth]{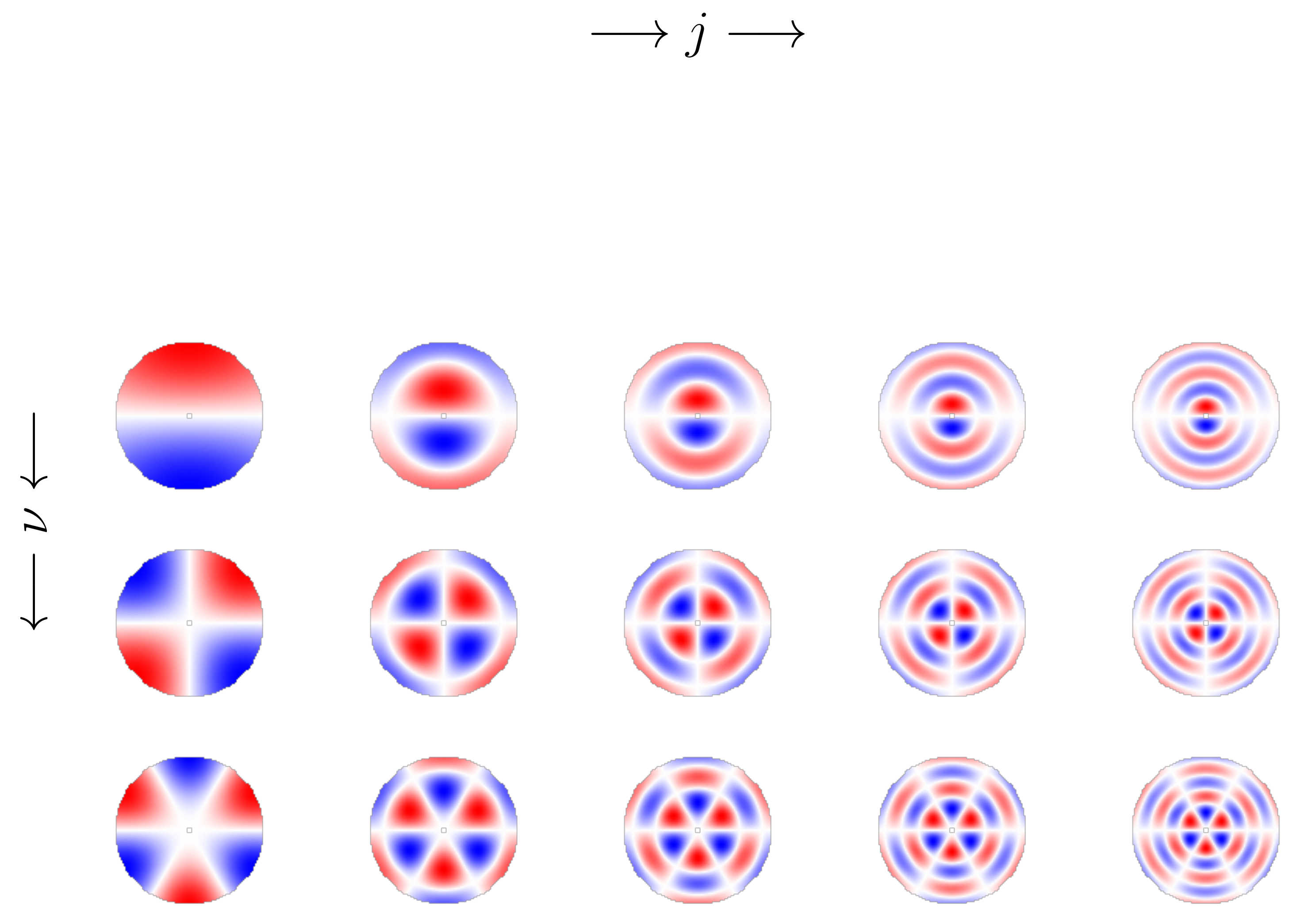}
        \caption{$\Bigg\{N_{\nu,j} J_\nu\left(k_{\nu,j} \rho\right)  \sin\left(\nu\theta\right)\Bigg\}$}
    \end{subfigure}
    \caption{Real (a) and imaginary (b) parts of the basis described by Equation~\eqref{eq:sec4:Bessel_basis} for different $\nu$ and $j$. Red and blue correspond to positive and negative values, respectively. One can see that $\nu$ is linked to an angular frequency, and $j$ to a radial frequency. Note that for $\nu=0$, there is no imaginary part.
    }
    \label{sec4:fig:basis_representation}
\end{figure}

An arbitrary function in polar coordinates $\Psi\left(\rho,\theta\right): D^2 \subset \mathbb{R}^2 \longrightarrow \mathbb{R}$ can be represented in the basis presented in Equation~\eqref{eq:sec4:Bessel_basis} as
\begin{equation}
    \Psi\left(\rho,\theta\right)=\sum_{\nu=-\infty}^{\infty} \sum_{j=0}^{\infty} \varphi_{\nu,j}\ N_{\nu,j} J_\nu\left(k_{\nu,j} \rho\right)  e^{i\nu\theta},
    \label{eq:sec4:Bessel_recomposition}
\end{equation}
where $\varphi_{\nu,j} \in \mathbb{C}$ are the Bessel coefficients of $\Psi\left(\rho,\theta\right)$. These $\varphi_{\nu,j}$ are the mathematical projection of $\Psi\left(\rho,\theta\right)$ on the Bessel basis. Therefore, they are obtained by
\begin{equation}
    \varphi_{\nu,j}=\int_0^{2\pi} \int_0^R \rho \left[N_{\nu,j} J_\nu\left(k_{\nu,j} \rho\right) e^{i\nu\theta}\right]^* \Psi\left(\rho,\theta\right) d\rho d\theta,
    \label{eq:sec4:Bessel_coefficients}
\end{equation}
where the element inside the brackets correspond to the element $\left(\nu,j\right)$ in the Bessel basis. By integrating on $D^2$, it computes the representation of $\Psi\left(\rho, \theta\right)$ in this basis.

In B-CNNs, images are represented by a set of those Bessel coefficients instead of directly using the raw pixel values. Further motivations about this will be given later. However, one can already point out that Equation~\eqref{eq:sec4:Bessel_recomposition} needs in principle an infinite number of Bessel coefficients in order to faithfully represent the initial function $\Psi\left(\rho,\theta\right)$. From a numerical point of view, these two infinite summations need to be truncated. First of all, one can show that it is not necessary to compute $\varphi_{\nu,j}$ when $\nu$ is a negative integer, since $\varphi_{\nu,j}$ and $\varphi_{-\nu,j}$ are not independent. Indeed, if $\nu \in \mathbb{N}$, $J_\nu\left(x\right)$ and $J_{-\nu}\left(x\right)$ are linked by the relation
\begin{equation}
    J_{-\nu}\left(x\right)=\left(-1\right)^{\nu}J_\nu\left(x\right).
    \label{eq:sec4:Bessel_functions_property_1}
\end{equation}
Furthermore, Bessel functions also satisfy
\begin{equation}
    J_{\nu}\left(-x\right)=\left(-1\right)^\nu J_{\nu}\left(x\right),
    \label{eq:sec4:Bessel_functions_property_2}
\end{equation}
which means that $J_{\nu}$ is an even function if $\nu$ is even, and an odd function otherwise. By injecting Equations~\eqref{eq:sec4:Bessel_functions_property_1} and \eqref{eq:sec4:Bessel_functions_property_2} in Equation~\eqref{eq:sec4:Bessel_coefficients}, one can show that (proof can be found in Appendix~\ref{appendix:B})
\begin{equation}
    \begin{cases}
        \Re\left(\varphi_{-\nu,j}\right)=\left(-1\right)^\nu \Re\left(\varphi_{\nu,j}\right)  \\
        \Im\left(\varphi_{-\nu,j}\right)=\left(-1\right)^{\nu+1} \Im\left(\varphi_{\nu,j}\right). \\
    \end{cases}
    \label{eq:sec4:properties}
\end{equation}
The infinite summation for $\nu$ in Equation~\eqref{eq:sec4:Bessel_recomposition} can be decomposed in two summations, one for $\nu \in \{-\infty,\dots,-1\}$ and another one for $\nu \in \{0,\dots,\infty\}$. By exploiting the link between $\varphi_{\nu,j}$ and $\varphi_{-\nu,j}$, the infinite summation for $\nu \in \{-\infty,\dots,\infty\}$ can then be reduced to a summation for $\nu \in \{0,\dots,\infty\}$, and it is not necessary to compute Bessel coefficients for negative $\nu$ orders. Finally, in order to truncate the infinite summations, two meta-parameters $\nu_{\rm max}$ and $j_{\rm max}$ are defined, and the Bessel coefficients are only computed for $\nu$ (resp. $j$) in $\{0,...,\nu_{\rm max}\ \text{(resp. }j_{\rm max})\}$. Nonetheless, it is difficult to make a good choice for these meta-parameters and this may be rather automated by constraining $k_{\nu,j}$ with an upper limit. This is clearly supported by Figure~\ref{sec4:fig:basis_representation}, as it shows that high $\nu$ ($j$, respectively) orders correspond to basis elements with an high angular (radial, respectively) frequency. Therefore, as images are sampled on a discrete Cartesian grid, information about frequencies higher than an upper limit cannot be conserved. This upper limit can be determined by the Nyquist frequency, as done by~\cite{zhao_fourierbessel_2013}, in order to both minimize the aliasing effect and maximize the amount of information preserved by the Bessel coefficients. From now, let us suppose that the radius $R$ of an image is arbitrarily set to $1$. If the image is made up of $2n\times 2n$ pixels sampled on a Cartesian grid, it leads to a resolution of $1/n$. Hence, the sampling rate is $n$ and the associated Nyquist frequency (the band-limit) is $n/2$. Therefore, it is optimal to use only the $\varphi_{\nu,j}$ that satisfy the constraint
\begin{equation}
    \frac{k_{\nu,j}}{2\pi} \leq \frac{n}{2}, \nonumber
\end{equation}
because those are the only ones that carry information really contained on the finite Cartesian grid. We then define\footnote{It is interesting to mention that this constraint is also a common choice in numerical physics, where $\frac{k}{2\pi}=\frac{1}{\lambda}$, $\lambda$ being the wavelength. It is meaningless to use larger values for $k$, as it corresponds to wavelengths smaller than the resolution of space.}
\begin{equation}
    k_{\rm max} = \max_{{\nu,j}} k_{\nu,j} \text{ s.t. } \frac{k_{\nu,j}}{2\pi} \leq \frac{n}{2}. 
    \label{eq:sec4:kmax_constraint}
\end{equation}
One of the consequences of this constraint is that, for larger $\nu$ orders, a smaller number of Bessel coefficients will be computed, as $k_{\nu,j}$ will reach $k_{\rm max}$ more rapidly. Indeed, the zeros of $J'_\nu\left(x\right)$ (that are the $k_{\nu,j}$'s if
$R$=1) are shifted toward higher $x$ values (see the shifting toward the right for $J'_\nu\left(x\right)$ when $\nu$ increases in Figure~\ref{subfig:sec4:derivatives_Bessel_functions_first_kind}). To conclude this section, the function $\Psi\left(\rho,\theta\right)$ will be represented by a matrix with the general form
\begin{equation}
    \begin{pmatrix}
      \varphi_{0,0} & \cdots & \varphi_{0,j} & \cdots & \varphi_{0,j_{\rm max}} \\ 
      \vdots & \ddots & \vdots & \ddots & \vdots \\
      \varphi_{\nu,0} & \cdots & \varphi_{\nu,j} & \cdots & 0 \\
      \vdots & \ddots & \vdots & \ddots & \vdots \\
      \varphi_{\nu_{\rm max},0} & \cdots & 0 & \cdots & 0 \\ 
    \end{pmatrix}, \nonumber
\end{equation}
where each non-zero element corresponds to values for $\nu$ and $j$ that satisfy $k_{\nu,j} \leq k_{\rm max}$. Figure~\ref{sec4:fig:transformation_and_inverse_transformation} presents an example where $\Psi\left(\rho,\theta\right)$ is an arbitrary image. Bessel coefficients are computed in this particular case with Equation~\eqref{eq:sec4:Bessel_coefficients}, and the inverse transformation described by Equation~\eqref{eq:sec4:Bessel_recomposition} is also performed in the middle part of the figure to check how much information is preserved by the Bessel coefficients. It also shows how easy it is to apply rotations and reflections with Bessel coefficients as explained below, in Section~\ref{subsec:effect_rotations} and Section~\ref{subsec:effect_reflections}.

\begin{figure}[ph!]

    \hspace{1.75cm}
    \raisebox{14mm}{
        \begin{subfigure}[b]{0.2\textwidth}
            \centering
            \includegraphics[width=1\textwidth]{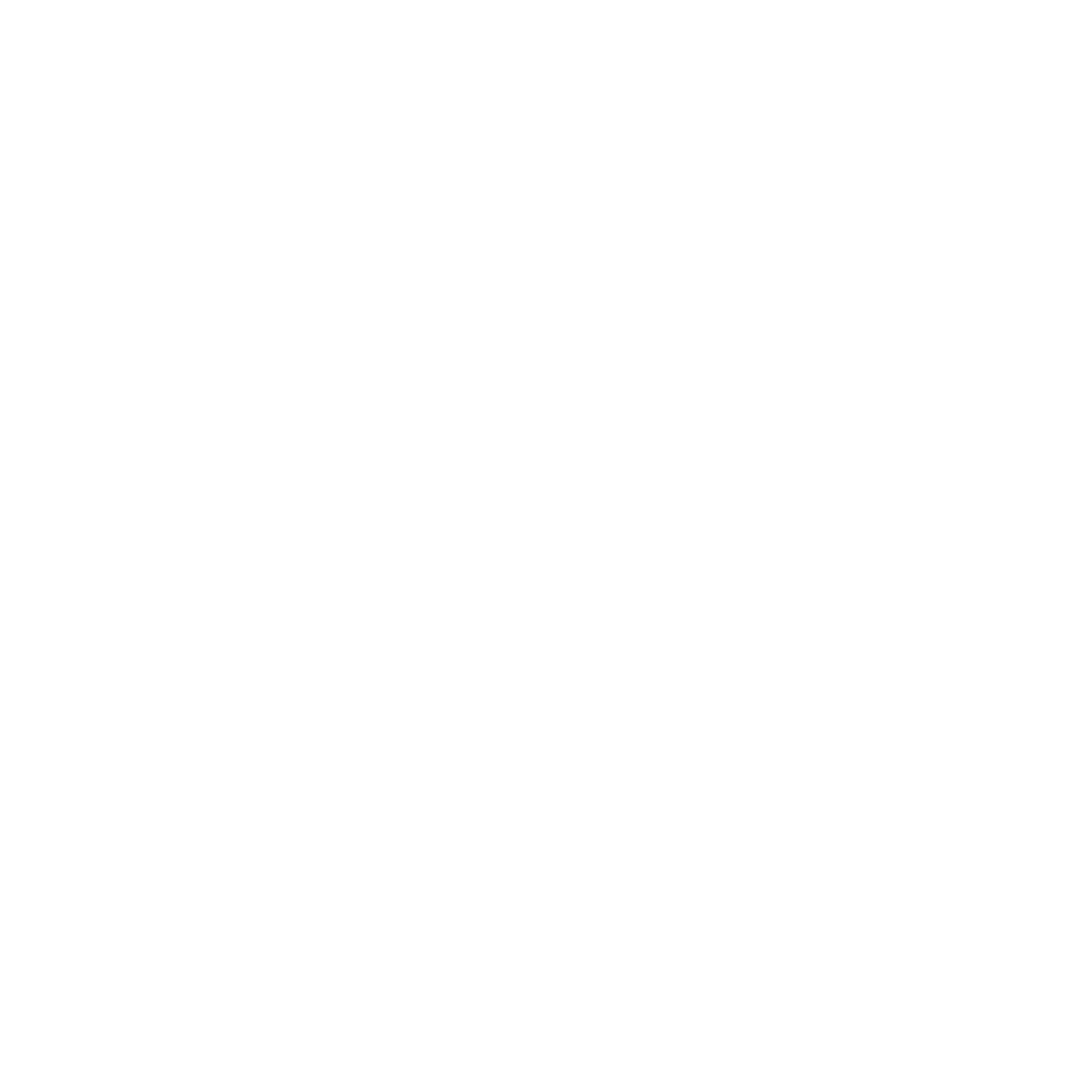}
        \end{subfigure}
    }
    \hspace{-0.3cm}
    \begin{subfigure}[b]{0.3\textwidth}
        \centering
        \includegraphics[width=0.97\textwidth]{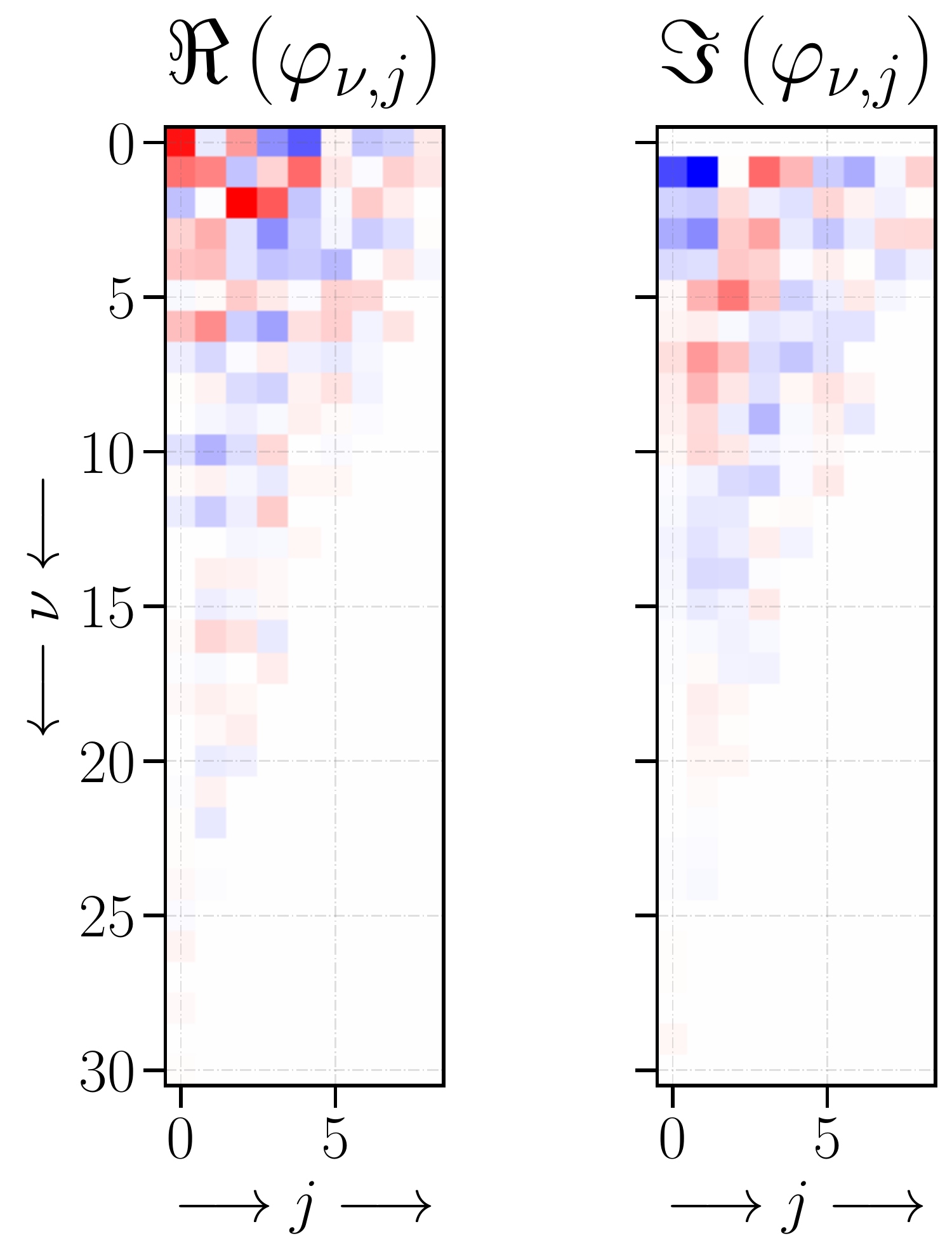}
    \end{subfigure}
    \raisebox{28mm}{
        {\LARGE$\xrightarrow{\eqref{eq:sec4:Bessel_recomposition}}$}%
    }
    \raisebox{14mm}{
        \begin{subfigure}[b]{0.2\textwidth}
            \centering
            \includegraphics[width=1\textwidth]{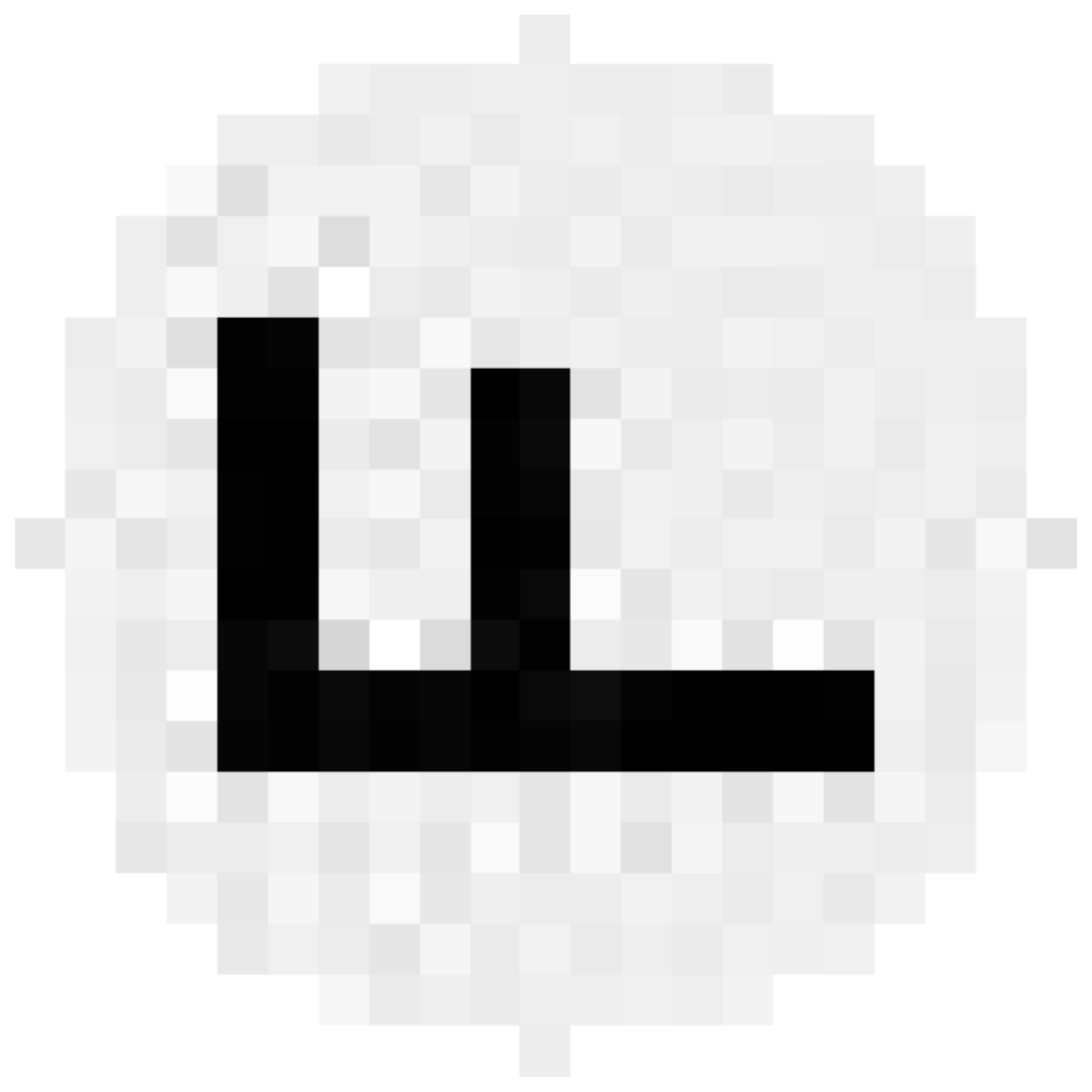}
        \end{subfigure}
    }
    \center{\textbf{rotation} \LARGE$\Uparrow{\times e^{-i\nu\frac{\pi}{2}}}$}%
    \vspace{0.4cm}
    
    
    \raisebox{14mm}{
        \begin{subfigure}[b]{0.2\textwidth}
            \centering
            \includegraphics[width=1\textwidth]{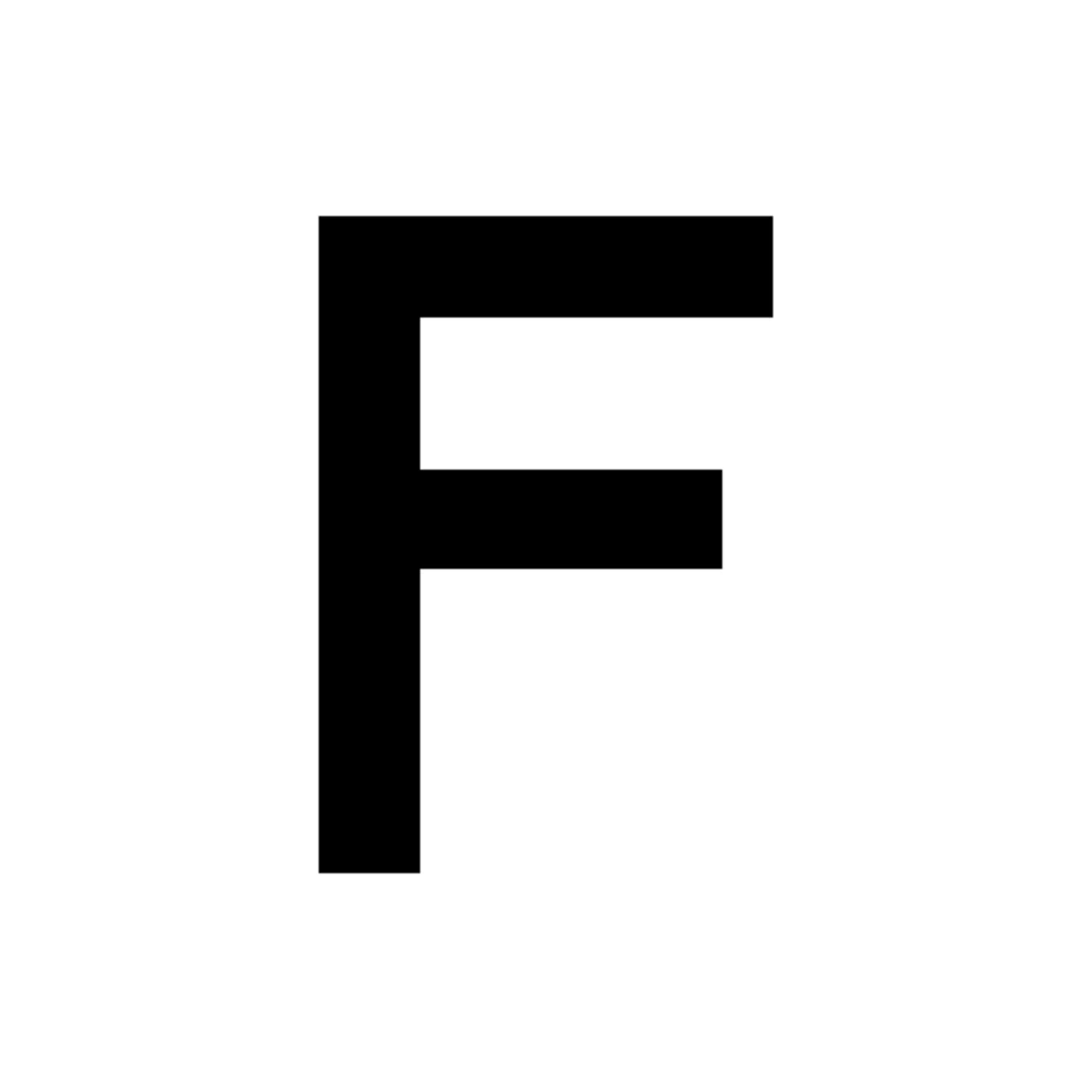}
        \end{subfigure}
    }
    \hspace{-0.5cm}
    \raisebox{28mm}{
        {\LARGE$\xrightarrow{\eqref{eq:sec4:Bessel_coefficients}}$}%
    }
    \begin{subfigure}[b]{0.3\textwidth}
        \centering
        \includegraphics[width=0.97\textwidth]{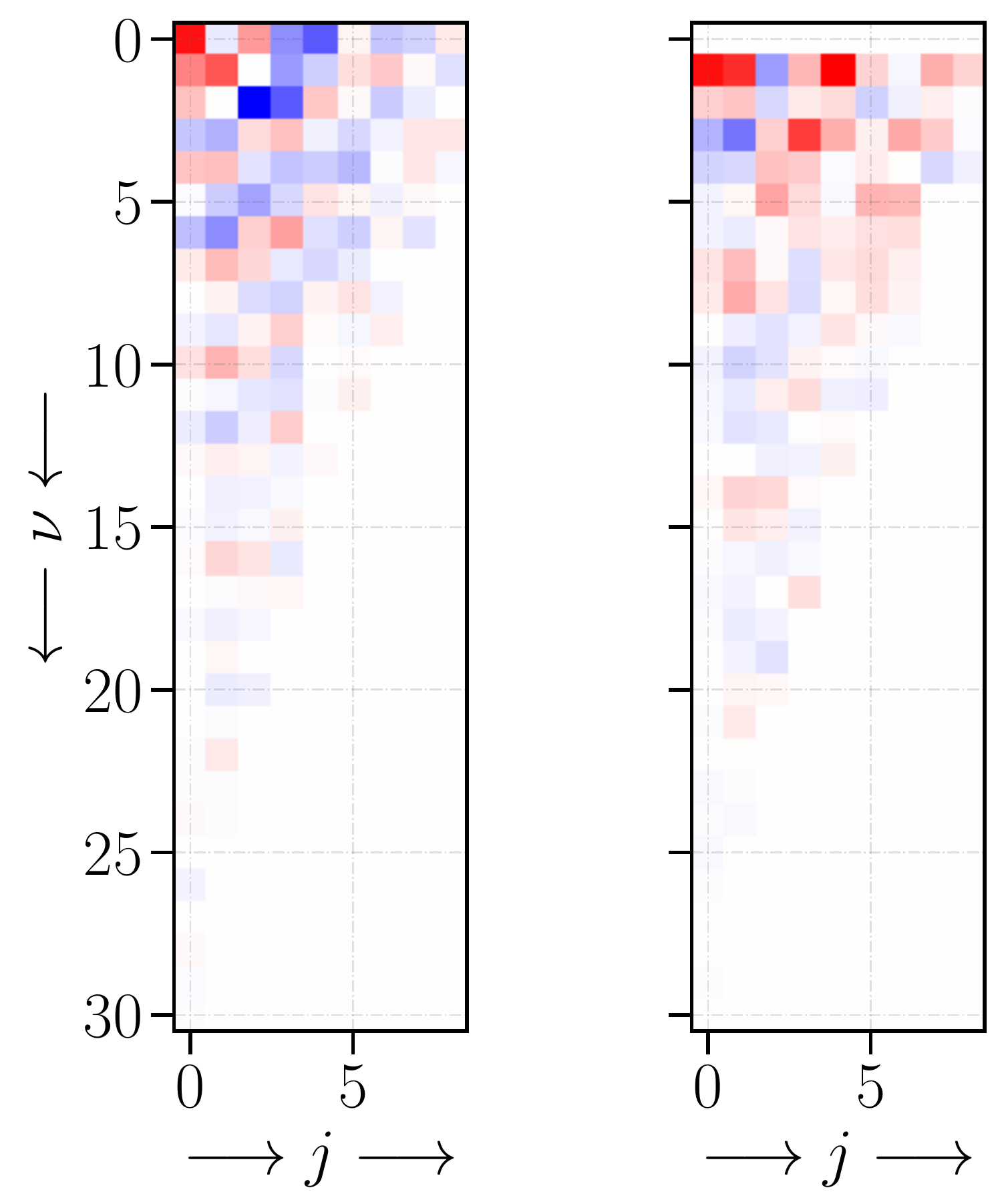}
    \end{subfigure}
    \raisebox{28mm}{
        {\LARGE$\xrightarrow{\eqref{eq:sec4:Bessel_recomposition}}$}%
    }
    \raisebox{14mm}{
        \begin{subfigure}[b]{0.2\textwidth}
            \centering
            \includegraphics[width=1\textwidth]{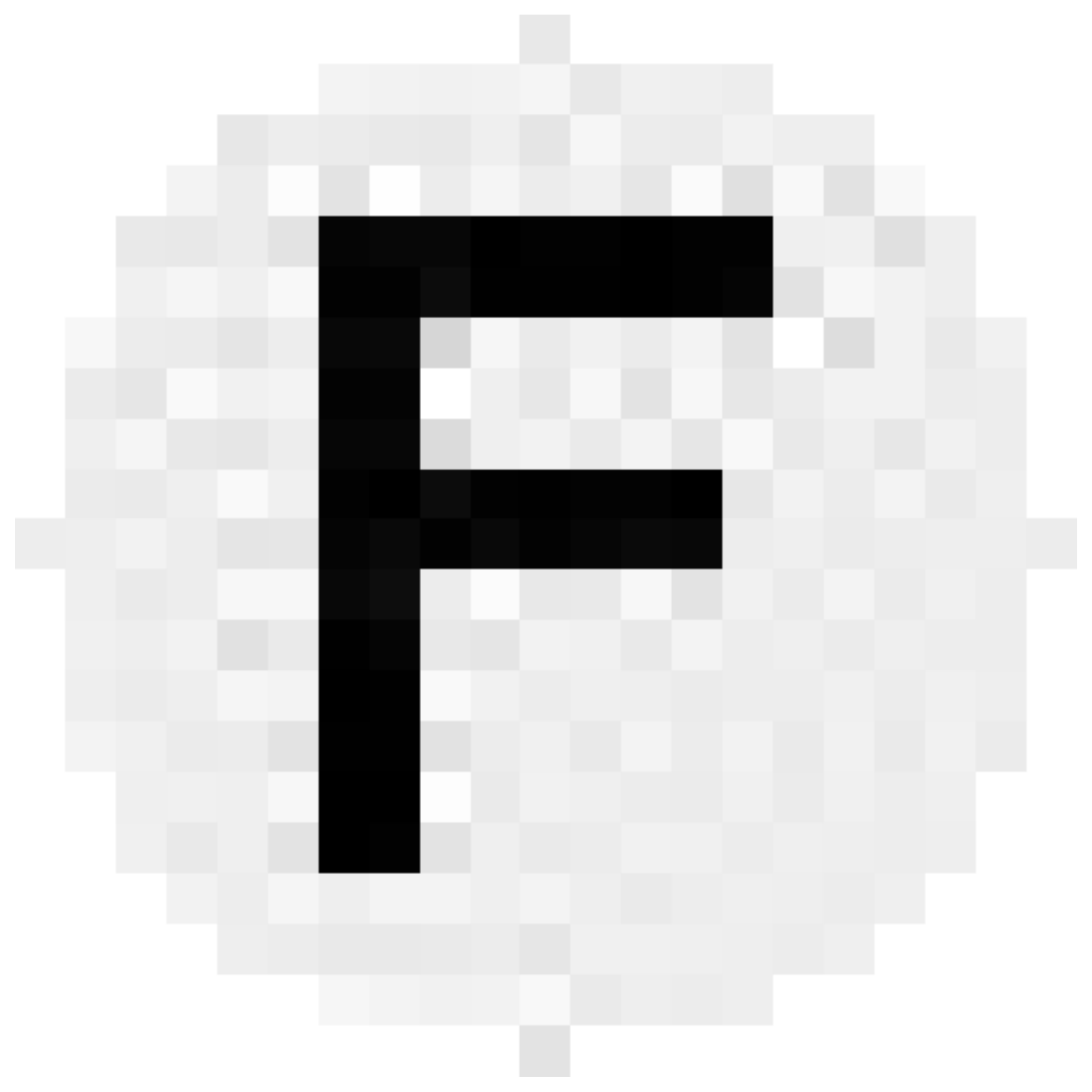}
        \end{subfigure}
    }
    \center{\textbf{reflection} \LARGE$\Downarrow{\nu \rightarrow -\nu}$}%
    \vspace{0.4cm}
    
    
    \hspace{1.3cm}
    \raisebox{14mm}{
        \begin{subfigure}[b]{0.2\textwidth}
            \centering
            \includegraphics[width=1\textwidth]{images/sec4/white.pdf}
        \end{subfigure}
    }
    \hspace{-0.5cm}
    \begin{subfigure}[b]{0.3\textwidth}
        \centering
        \includegraphics[width=0.97\textwidth]{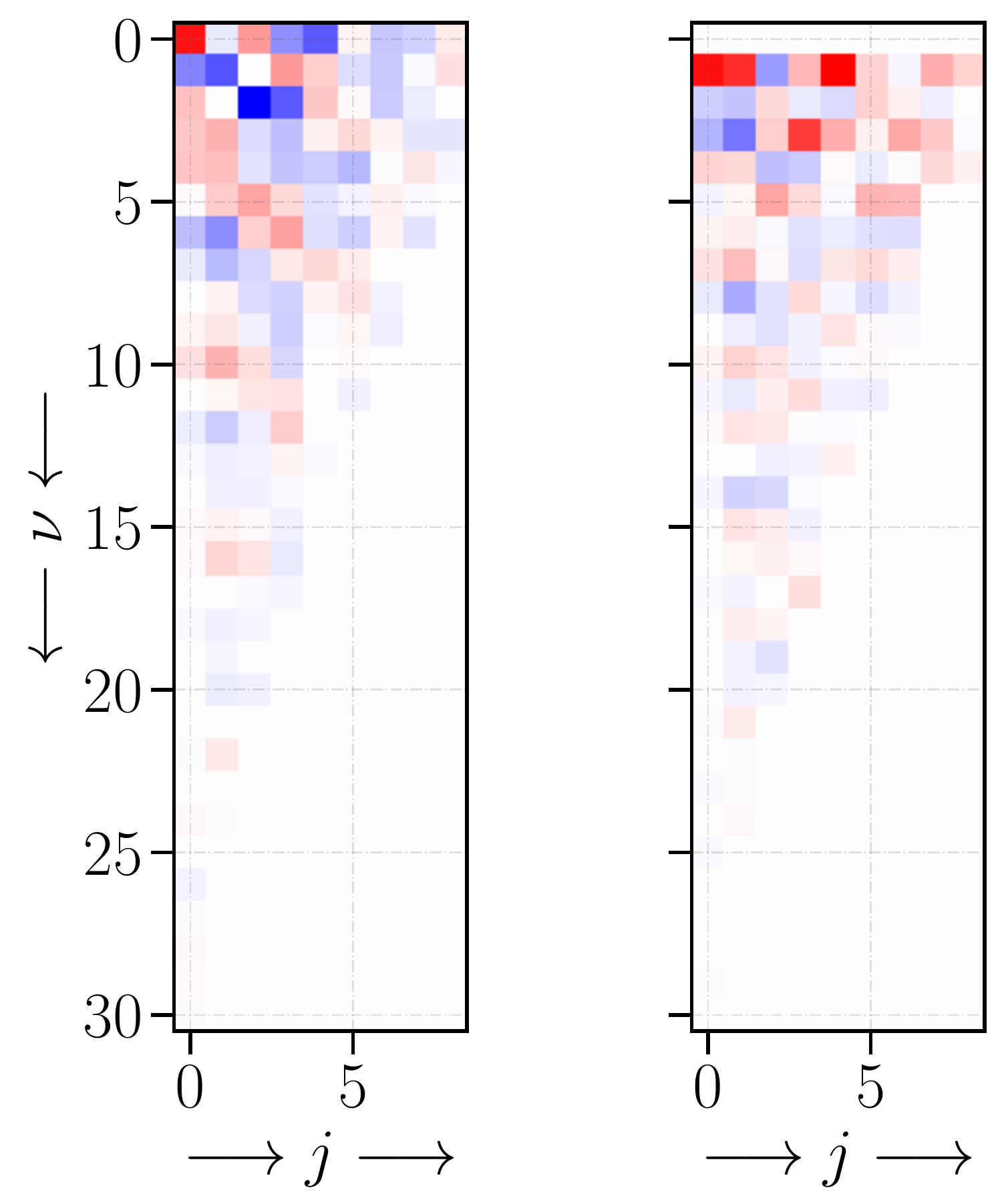}
    \end{subfigure}
    \raisebox{28mm}{
        {\LARGE$\xrightarrow{\eqref{eq:sec4:Bessel_recomposition}}$}%
    }
    \raisebox{14mm}{
        \begin{subfigure}[b]{0.2\textwidth}
            \centering
            \includegraphics[width=1\textwidth]{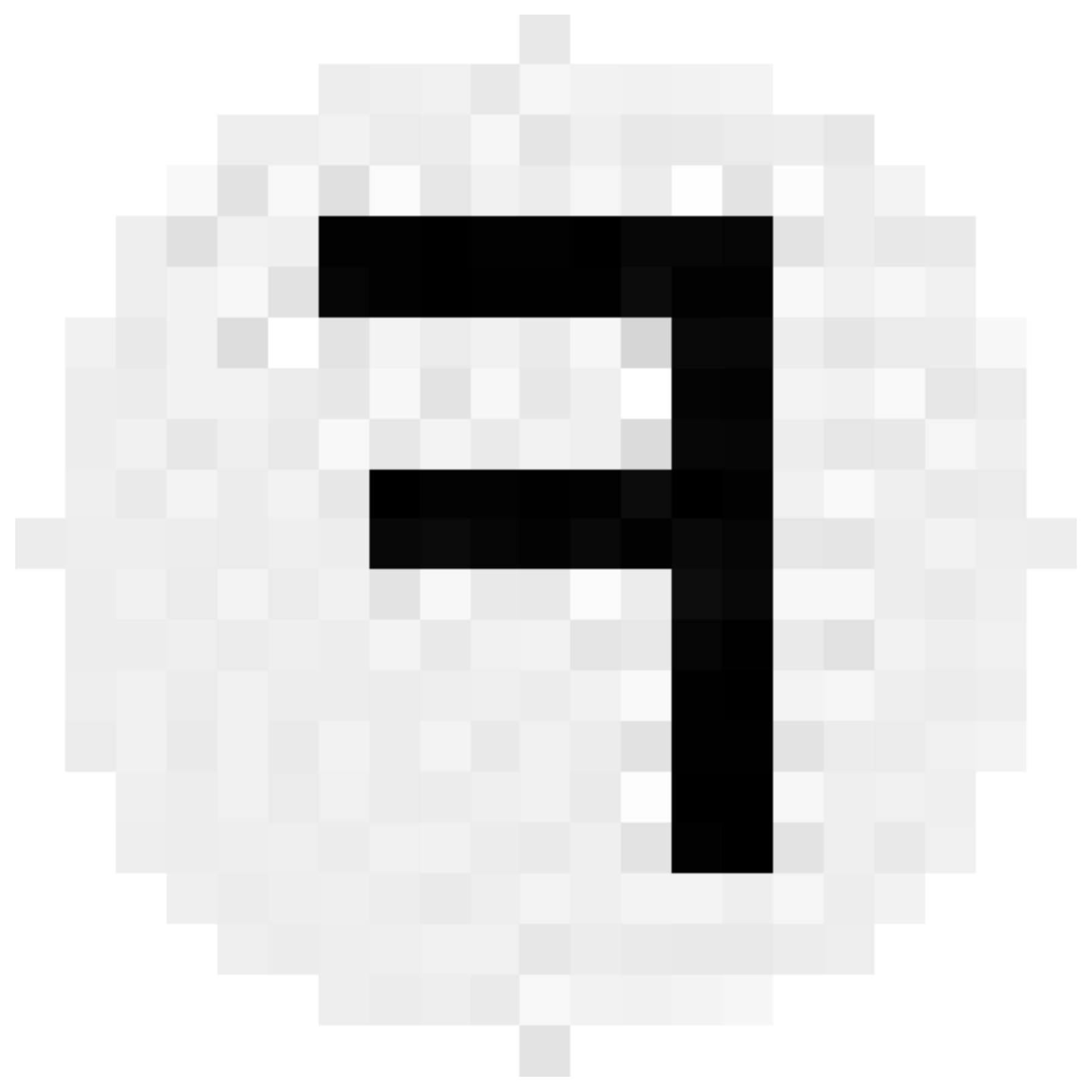}
        \end{subfigure}
    }
    
    \caption{Decomposition of an arbitrary image in some of its Bessel coefficients. Those Bessel coefficients constitute a particular representation of the image and can be used to recover it thanks to Equation~\eqref{eq:sec4:Bessel_recomposition} (middle part). It also illustrates how Bessel coefficients can be conveniently used to apply rotations (upper part) and reflections (lower part).}
    \label{sec4:fig:transformation_and_inverse_transformation}
\end{figure}

\subsection{Effect of Rotations}\label{subsec:effect_rotations}

To understand why using Bessel coefficients is more convenient than using raw pixel values, one can determine the consequence of a rotation of $\Psi\left(\rho,\theta\right)$ on $\varphi_{\nu,j}$. Let $\Psi^{\rm rot}\left(\rho,\theta\right)$ be the rotated version of $\Psi\left(\rho,\theta\right)$ for an angle $\alpha \in [0,2\pi[$, that is, $\Psi^{\rm rot}\left(\rho,\theta\right)=\Psi\left(\rho,\theta-\alpha\right)$. Its Bessel coefficients are given by
\begin{equation}
    \varphi_{\nu,j}^{\rm rot}=\int_0^{2\pi} \int_0^R \rho \left[N_{\nu,j} J_\nu\left(k_{\nu,j} \rho\right) e^{i\nu\theta}\right]^* \Psi^{\rm rot}\left(\rho,\theta\right) d\rho d\theta. \nonumber
\end{equation}
By defining $\theta'=\theta - \alpha$, it leads to
\begin{equation}
    \varphi_{\nu,j}^{\rm rot}=\int_0^{2\pi} \int_0^R \rho \left[N_{\nu,j} J_\nu\left(k_{\nu,j} \rho\right) e^{i\nu\theta'}\right]^* \Psi\left(\rho,\theta'\right) e^{-i\nu\alpha} d\rho d\theta'=\varphi_{\nu,j}e^{-i\nu\alpha}.
    \label{eq:sec4:Effect_of_rotations_on_Bessel_coefficients}
\end{equation}
Therefore, a rotation of an arbitrary function by an angle $\alpha$ only modifies its Bessel coefficients by a multiplication factor $e^{-i\nu\alpha}$. This motivated the development of B-CNNs as it makes rotations conveniently expressed in the Fourier-Bessel transform domain (analogously to how the Fourier transform maps translations to multiplications by complex exponentials). The upper part of Figure~\ref{sec4:fig:transformation_and_inverse_transformation} illustrates this property (the image is rotated by $\frac{\pi}{2}$ after multiplying its Bessel coefficients by $e^{-i\nu\frac{\pi}{2}}$).

\subsection{Effect of Reflections}\label{subsec:effect_reflections}

In addition to rotations, Bessel coefficients are also particularly useful when it comes to express reflections. To check this, let $\Psi^{\rm ref}\left(\rho,\theta\right)$ be the reflected version of $\Psi\left(\rho,\theta\right)$ along the vertical axis\footnote{The reflection along the horizontal axis is not needed, since it can be decomposed as a vertical reflection and a rotation of $\pi$ radians. By composition, reflection equivariance along the horizontal axis is automatically achieved if the layer is equivariant to rotations and reflections along the vertical axis.}. The Bessel coefficients of $\Psi^{\rm ref}\left(\rho,\theta\right)=\Psi\left(\rho,\pi-\theta\right)$ are given by
\begin{equation}
    \varphi_{\nu,j}^{\rm ref}=\int_0^{2\pi} \int_0^R \rho \left[N_{\nu,j} J_\nu\left(k_{\nu,j} \rho\right) e^{i\nu\theta}\right]^* \Psi^{\rm ref}\left(\rho,\theta\right) d\rho d\theta. \nonumber
\end{equation}
Similarly to what is done for arbitrary rotations, one can define $\theta'=\pi - \theta$. This leads to
\begin{equation}
    \varphi_{\nu,j}^{\rm ref}=-\int_{\pi}^{-\pi} \int_0^R \rho \left[N_{\nu,j} J_\nu\left(k_{\nu,j} \rho\right) e^{i\nu\left(\pi - \theta'\right)}\right]^* \Psi\left(\rho,\theta'\right) d\rho d\theta'. \nonumber
\end{equation}
It is shown in Appendix~\ref{appendix:B} that $N_{\nu,j} = N_{-\nu,j}$ and that $k_{-\nu,j}=k_{\nu,j}$. By exploiting this along with Equation~\eqref{eq:sec4:Bessel_functions_property_1}, one can show that
\begin{align}
    \varphi_{\nu,j}^{\rm ref} &= \int_{0}^{2\pi} \int_0^R \rho \left[N_{-\nu,j} \left(-1\right)^\nu J_{-\nu}\left(k_{-\nu,j} \rho\right) e^{-i\nu\theta'}\right]^* e^{-i\nu\pi} \Psi\left(\rho,\theta'\right) d\rho d\theta' \nonumber \\
    &= \int_{0}^{2\pi} \int_0^R \rho \left[N_{-\nu,j} J_{-\nu}\left(k_{-\nu,j} \rho\right) e^{-i\nu\theta'}\right]^* \Psi\left(\rho,\theta'\right) d\rho d\theta' \nonumber \\
    &= \varphi_{-\nu,j}.
    \label{sec4:eq:effect_of_reflection}
\end{align}
Therefore, performing a reflection of the image only switches the Bessel coefficients $\varphi_{\nu,j}$ to $\varphi_{-\nu,j}$. Thanks to Equations~\eqref{eq:sec4:properties}, it is equivalent to changing the sign of the real (resp. imaginary) part of the Bessel coefficient if $\nu$ is odd (resp. even). Therefore, in addition to rotations, Bessel coefficients are also really convenient to express reflections. This is illustrated in the lower part of Figure~\ref{sec4:fig:transformation_and_inverse_transformation}, where the image is reflected vertically after switching each $\varphi_{\nu,j}$ with $\varphi_{-\nu,j}$.

\section{Designing Operations with Bessel Coefficients}

In CNNs, the main mathematical operation is a convolutional product between the different filters and the image (or feature maps if deeper in the network). Each filter  sweeps the image locally and the weights are multiplied with the raw pixel values. However, in B-CNNs, the aim is to use Bessel coefficients instead of raw pixel values to benefit from the properties described in the previous sections. Yet, the key operation between the parameters of the network (filters) and the images needs to be adapted. This section first presents the mathematical operation used to achieve equivariance under rotation. After that, the initial work of~\cite{BCNN} is extended to also achieve equivariance under reflection.

\subsection{A Rotation Equivariant Operation}\label{sec5:subsec:rotation_invariant_operation}

The convolution performed in CNNs between an arbitrary image $\Psi\left(x,y\right)$ and a particular kernel $K\left(x,y\right)$ defined for $(x,y) \in [-R,R]\times[-R,R]$ can be written
\begin{equation}
    a\left(x,y\right) = \Psi\left(x,y\right) \ast K\left(x,y\right) = \int_{-R}^{R} \int_{-R}^{R} \Psi\left(x-x',y-y'\right) K\left(x',y'\right) dx' dy'. \nonumber
\end{equation}
By defining $\Psi^{\left(x,y\right)}\left(x',y'\right) = \Psi\left(x-x',y-y'\right)$ and converting the integration from Cartesian to polar coordinates, it leads to
\begin{equation}
    a\left(x,y\right)  = \int_{0}^{2\pi} \int_{0}^{R} \Psi^{\left(x,y\right)}\left(\rho,\theta\right) K\left(\rho,\theta\right) \rho d\rho d\theta.
    \label{eq:sec5:classic_convolution}
\end{equation}
Now, in order to obtain a result that is invariant to the particular orientation of $\Psi^{\left(x,y\right)}\left(\rho,\theta\right)$, one can decompose it in its Bessel coefficients $\varphi^{\left(x,y\right)}_{\nu,j}$ and use Equation~\eqref{eq:sec4:Effect_of_rotations_on_Bessel_coefficients} 
to implement arbitrary rotations. Next, the idea is to combine this with an integration over $\alpha$ in order to equally consider all the possible orientations of the original image while multiplying it with the kernel, resulting in a rotation invariance. We introduce thus a new rotation equivariant convolutional operation described by
\begin{align}
    a\left(x,y\right) &= \frac{1}{2\pi} \int_0^{2\pi} \big| \int_{0}^{2\pi} \int_{0}^{R} \sum_{\nu,j} \varphi^{\left(x,y\right)}_{\nu,j} e^{-i\nu\alpha} N_{\nu,j} J_\nu\left(k_{\nu,j} \rho\right)  e^{i\nu\theta} K\left(\rho,\theta\right) \rho d\rho d\theta \big|^2 d\alpha \nonumber\\
    &= \frac{1}{2\pi} \int_0^{2\pi} \big| \sum_{\nu,j} \varphi^{\left(x,y\right)}_{\nu,j} e^{-i\nu\alpha} \int_{0}^{2\pi} \int_{0}^{R} \left[N_{\nu,j} J_\nu\left(k_{\nu,j} \rho\right) e^{-i\nu\theta} \right]^* K\left(\rho,\theta\right) \rho d\rho d\theta \big|^2 d\alpha \nonumber\\
    &= \frac{1}{2\pi} \int_0^{2\pi} \big| \sum_{\nu,j} \varphi^{\left(x,y\right)}_{\nu,j} e^{-i\nu\alpha} \kappa^*_{\nu,j} \big|^2 d\alpha,
    \label{sec5:eq:new_conv}
\end{align}
where $\kappa_{\nu,j}$ refers to the Bessel coefficients of the kernel $K\left(\rho,\theta\right)$. Thanks to the integration over $\alpha$ from $0$ to $2\pi$ and the multiplication of $\varphi^{\left(x,y\right)}_{\nu,j}$ by $e^{-i\nu\alpha}$ to describe the effect of rotations, the operation with the kernel is performed for all continuous rotations $\Psi^{\left(x,y\right)}\left(\rho,\theta-\alpha\right)$ of the original image, where $\alpha \in \left[ 0, 2\pi \right[$. Therefore, $a\left(x,y\right)$ should not depend on the particular initial orientation anymore.
A squared modulus $|\cdot|^2$ is introduced in our operation since, without it, one obtains
\begin{align}
    a\left(x,y\right) &= \frac{1}{2\pi} \sum_{\nu,j} \varphi^{\left(x,y\right)}_{\nu,j} \kappa^*_{\nu,j} \int_0^{2\pi} e^{-i\nu\alpha} d\alpha 
    = \sum_{j} \varphi^{\left(x,y\right)}_{0,j} \kappa^*_{0,j}, \nonumber
\end{align}
and only the subset of coefficients $\{\varphi^{\left(x,y\right)}_{0,j}, \forall j\}$ will contribute to $a\left(x,y\right)$. This subset alone however does not constitute a faithful representation of the image. The operation without the squared modulus would therefore inevitably lead to an important loss of information. The factor $\frac{1}{2\pi}$ was introduced finally for normalization purpose.

Computing Equation~\eqref{sec5:eq:new_conv} seems not straightforward as it requires to perform a numerical integration. However, one can develop it further in order to obtain an analytical solution, which will be much more convenient to implement in practice. To do so, one can first develop the squared modulus given that
\begin{align}
    \big| \sum_{i=1}^k \alpha_i \big|z_i\big|e^{i\theta_i}\big|^2 = &\sum_{m,j} \Re\left(\alpha_m\right) \Re\left(\alpha_j\right) \big|z_m z_j\big| \cos\left(\theta_m - \theta_j\right) \nonumber\\
    + &\sum_{m,j} \Im\left(\alpha_m\right) \Im\left(\alpha_j\right) \big|z_m z_j\big| \cos\left(\theta_m - \theta_j\right) \nonumber\\
    -2 &\sum_{m,j} \Im\left(\alpha_m\right) \Re\left(\alpha_j\right) \big|z_m z_j\big| \sin\left(\theta_m - \theta_j\right),
    \label{sec5:eq:mathematical_trick}
\end{align}
where $\alpha_i \in \mathbb{C}$ and $z_i = \big|z_i\big|e^{i\theta_i} \in \mathbb{C}$. By re-writing the complex valued Bessel coefficients $\varphi^{\left(x,y\right)}_{\nu,j}$ as $\big|\varphi^{\left(x,y\right)}_{\nu,j}\big| e^{i\theta_{\nu,j}}$, it leads to
\begin{align}
    a\left(x,y\right) = & \frac{1}{2\pi} \int_0^{2\pi} \sum_{\substack{\nu,j \\ \nu',j'}} \Re\left(\kappa_{\nu,j}^*\right) \Re\left(\kappa_{\nu',j'}^*\right) \big|\varphi^{\left(x,y\right)}_{\nu,j} \varphi^{\left(x,y\right)}_{\nu',j'}\big| \cos\left(\theta_{\nu,j} - \theta_{\nu',j'} -  \alpha\left(\nu - \nu'\right)\right) d\alpha \nonumber \\
    + & \frac{1}{2\pi} \int_0^{2\pi} \sum_{\substack{\nu,j \\ \nu',j'}} \Im\left(\kappa_{\nu,j}^*\right) \Im\left(\kappa_{\nu',j'}^*\right) \big|\varphi^{\left(x,y\right)}_{\nu,j} \varphi^{\left(x,y\right)}_{\nu',j'}\big| \cos\left(\theta_{\nu,j} - \theta_{\nu',j'} -  \alpha\left(\nu - \nu'\right)\right) d\alpha \nonumber \\
    - & \frac{1}{\pi} \int_0^{2\pi} \sum_{\substack{\nu,j \\ \nu',j'}} \Im\left(\kappa_{\nu,j}^*\right) \Re\left(\kappa_{\nu',j'}^*\right) \big|\varphi^{\left(x,y\right)}_{\nu,j} \varphi^{\left(x,y\right)}_{\nu',j'}\big| \sin\left(\theta_{\nu,j} - \theta_{\nu',j'} -  \alpha\left(\nu - \nu'\right)\right) d\alpha . \nonumber
\end{align}
In this equation, only the trigonometric functions are $\alpha$-dependent. Calculating the remaining integrals leads to
\begin{equation}
    \int_0^{2\pi} \text{sc}\left(\theta_{\nu,j} - \theta_{\nu',j'} -  \alpha\left(\nu - \nu'\right)\right) d\alpha =
\begin{cases}
    2\pi \text{ sc}\left(\theta_{\nu,j} - \theta_{\nu',j'}\right)\textbf{ if } \nu=\nu'\\
    0\text{ otherwise},
\end{cases}
\end{equation}
where \text{sc} can represent the cosine or the sine function. Therefore,
\begin{align}
    a\left(x,y\right) = \sum_\nu \biggl[ & \sum_{\substack{j,j'}} \Re\left(\kappa_{\nu,j}^*\right) \Re\left(\kappa_{\nu,j'}^*\right) \big|\varphi^{\left(x,y\right)}_{\nu,j} \varphi^{\left(x,y\right)}_{\nu,j'}\big| \cos\left(\theta_{\nu,j} - \theta_{\nu,j'}\right) \nonumber \\
    + & \sum_{\substack{j,j'}} \Im\left(\kappa_{\nu,j}^*\right) \Im\left(\kappa_{\nu,j'}^*\right) \big|\varphi^{\left(x,y\right)}_{\nu,j} \varphi^{\left(x,y\right)}_{\nu,j'}\big| \cos\left(\theta_{\nu,j} - \theta_{\nu,j'}\right) \nonumber \\
    - 2 & \sum_{\substack{j,j'}} \Im\left(\kappa_{\nu,j}^*\right) \Re\left(\kappa_{\nu,j'}^*\right) \big|\varphi^{\left(x,y\right)}_{\nu,j} \varphi^{\left(x,y\right)}_{\nu,j'}\big| \sin\left(\theta_{\nu,j} - \theta_{\nu,j'}\right) \biggr],
    \label{sec4:eq:activation_developped}
\end{align}
and by using once again Equation~\eqref{sec5:eq:mathematical_trick}, Equation~\eqref{sec5:eq:new_conv} finally leads to
\begin{equation}
    a\left(x,y\right) = \sum_\nu \big| \sum_j \kappa_{\nu,j}^* \varphi^{\left(x,y\right)}_{\nu,j} \big|^2.
    \label{sec5:eq:Activation_cleaned}
\end{equation}
Thanks to the use of Bessel coefficients instead of raw pixel values, the classic convolution has been modified into Equation~\eqref{sec5:eq:Activation_cleaned} in order to achieve rotation equivariance. The operation is still a convolution-like operation as the filters still sweep the input image to progressively construct the feature maps. Therefore, feature maps will be obtained in both a translation and rotation equivariant way. Feature maps in B-CNNs are equivariant because they are obtained by a succession of local invariances (in other words, the key operation between the image and the filter in Equation~\ref{sec5:eq:Activation_cleaned} is rotation invariant, but as it is successively performed for local parts of the input, it leads to a global rotation-equivariance). Nevertheless, by introducing reduction mechanisms in the models to make the feature maps in the final layer of size $1 \times 1$ (by using pooling layers or avoiding padding), the global equivariance can lead to global invariance. Figure~\ref{fig:sec5:equivariance_invariance} presents an example where the equivariance of B-CNNs is compared to vanilla CNNs, and it also presents how a succession of equivariant feature maps lead in this case to a global invariance of the model.

\begin{figure}[hp]
    \centering
    \begin{subfigure}[t]{0.43\textwidth}
        \centering
        \includegraphics[width=1\textwidth]{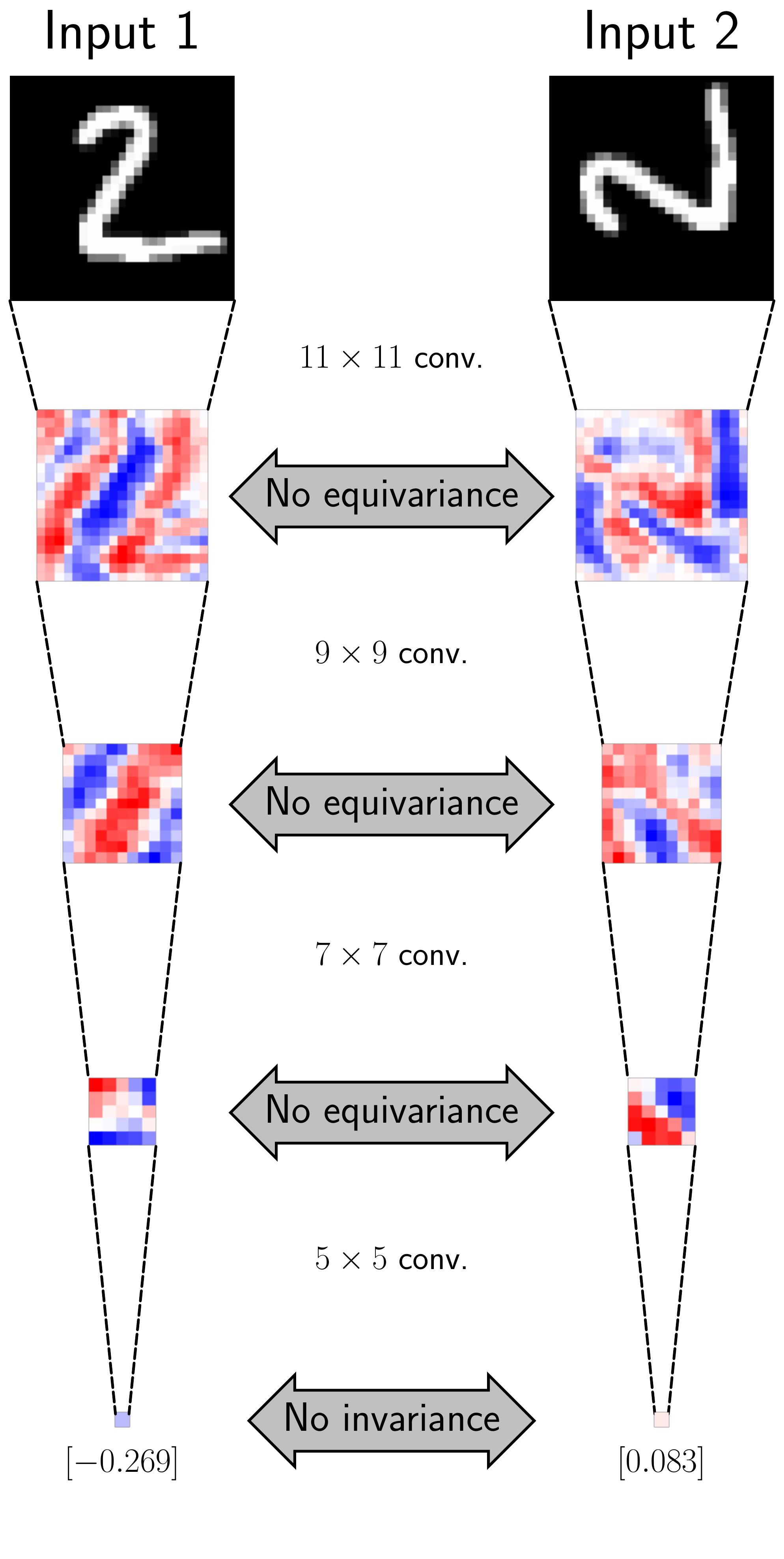}
        \caption{Vanilla CNN}
        \label{fig:sec5:no_equivariance_cnn}
    \end{subfigure}
    ~
    \hfill
    \begin{subfigure}[t]{0.43\textwidth}
        \centering
        \includegraphics[width=1\textwidth]{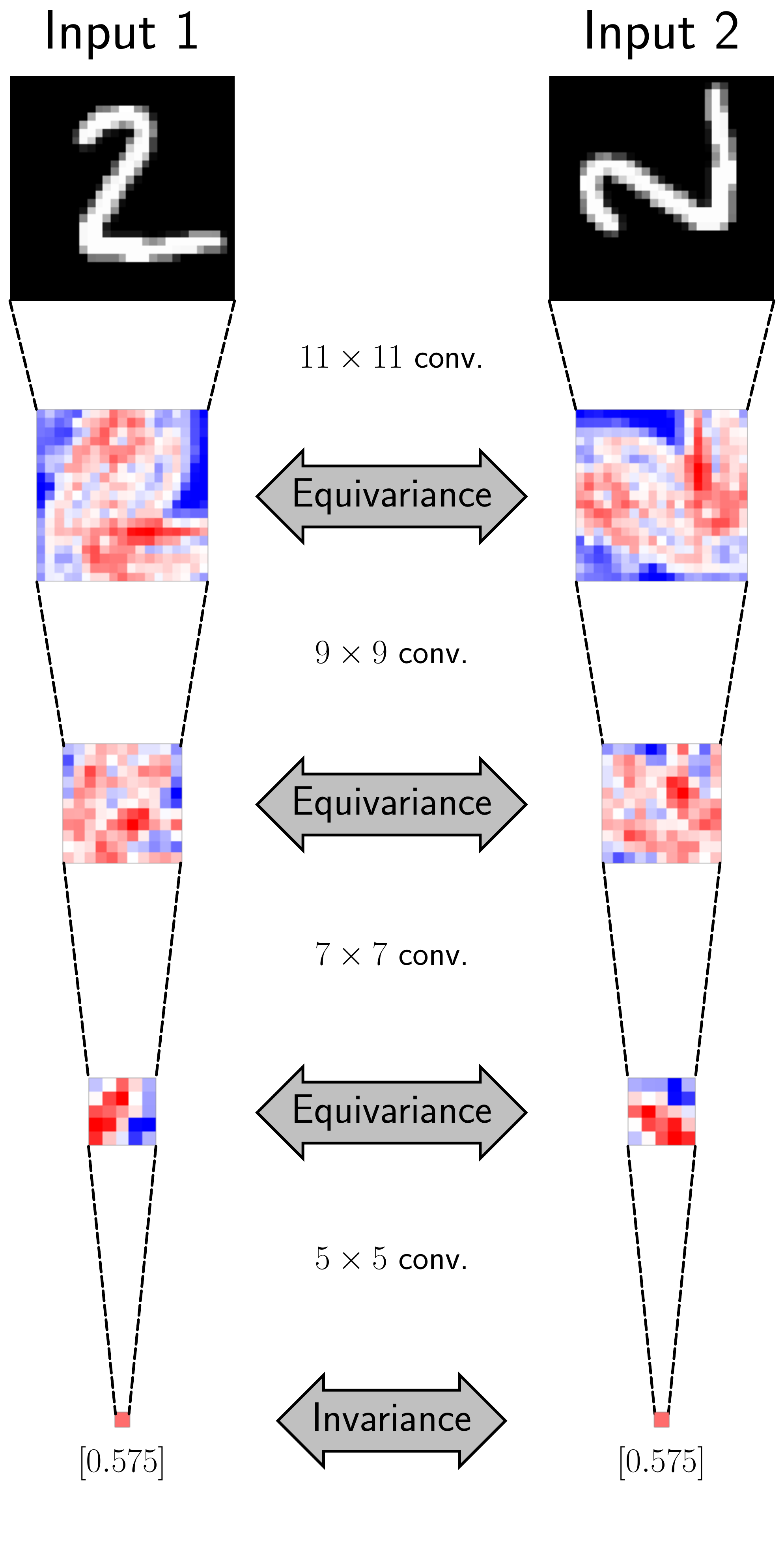}
        \caption{B-CNN}
    \end{subfigure}
    \caption{Feature maps obtained with 4 vanilla (a) and Bessel (b) convolutional layers on a sample drown from the MNIST data set~\citep{MNIST}. Input 2 is the exact same image as Input 1, except for a $\frac{\pi}{2}$ rotation. Inputs are of size $29 \times 29$, and no padding is used such that the size of feature maps is progressively reduced until it reaches the final size of $1 \times 1$. One can observe that no equivariance is obtained in the feature maps for the vanilla CNN, leading to different final results. However, B-CNN provides equivariance for the feature maps, leading to a final result that is invariant to the orientation of the image. Indeed, for B-CNN, feature maps for Input 2 are rigorously identical up to a $\frac{\pi}{2}$ rotation, which is not the case for vanilla CNN. It has to be noted that in this example, equivariance/invariance are rigorously achieved because $\frac{\pi}{2}$ rotations are well-defined on Cartesian grids.}
    \label{fig:sec5:equivariance_invariance}
\end{figure}

Finally, one can point out that $a\left(x,y\right) \in \mathbb{R}$ (even if $\kappa_{\nu,j} \in \mathbb{C}$ and $\varphi^{\left(x,y\right)}_{\nu,j} \in \mathbb{C}$). This is an important property as it allows this operation to be compatible with existing deep learning frameworks (for example, classic activation functions and batch normalization can be used), as opposed to the work of~\cite{worrall_hnets_2016} that uses values in the complex domain. It is also worth to mention that this operation is \textit{pseudo}-injective, meaning that different images will lead to different values of $a$ (\textit{pseudo} makes reference to the exception when an image is compared to a rotated version of itself). The proof for the pseudo-injectivity is presented in Appendix~\ref{appendix:C}.

\subsection{Adding the Reflection Equivariance}

In order to make B-CNNs also equivariant to reflections, and thus $O(2)$ equivariant, one can check how Equation~\eqref{sec5:eq:Activation_cleaned} behaves for an image and its reflection. To do so, let us compute the quantity
\begin{equation}
    \delta = \sum_\nu \big| \sum_j \kappa_{\nu,j}^* \varphi_{\nu,j} \big|^2 - \sum_\nu \big| \sum_j \kappa_{\nu,j}^* \varphi^{\rm ref}_{\nu,j} \big|^2, \nonumber
\end{equation}
where $\varphi^{\rm ref}_{\nu,j}$ are the Bessel coefficients of a reflected version of $\Psi\left(\rho,\theta\right)$. For the operation to be invariant under reflection, $\delta$ should therefore be equal to $0$. Thanks to Equation~\eqref{sec4:eq:effect_of_reflection} and Equation~\eqref{eq:sec4:properties}, we can write
\begin{align}
    \delta &= \sum_\nu \big| \sum_j \kappa_{\nu,j}^* \varphi_{\nu,j} \big|^2 - \sum_\nu \big| \sum_j \kappa_{\nu,j}^* \left(\left(-1\right)^\nu \Re{\left(\varphi_{\nu,j}\right)} + i \left(-1\right)^{\nu+1} \Im{\left(\varphi_{\nu,j}\right)} \right) \big|^2 \nonumber\\
    &= \sum_\nu \big| \sum_j \kappa_{\nu,j}^* \varphi_{\nu,j} \big|^2 - \sum_\nu \big| \left(-1\right)^\nu \sum_j \kappa_{\nu,j}^* \varphi_{\nu,j}^* \big|^2 \nonumber\\
    &= \sum_\nu \left[ \big| \sum_j \kappa_{\nu,j}^* \varphi_{\nu,j} \big|^2 - \big| \sum_j \kappa_{\nu,j}^* \varphi_{\nu,j}^* \big|^2 \right]. \nonumber
\end{align}
By using again the development that led to Equation~\eqref{sec4:eq:activation_developped}, one can show that
\begin{equation}
    \delta = - 4 \sum_{\substack{\nu,j,j'}} \Im\left(\kappa_{\nu,j}^*\right) \Re\left(\kappa_{\nu,j'}^*\right) \big|\varphi_{\nu,j} \varphi_{\nu,j'}\big| \sin\left(\theta_{\nu,j} - \theta_{\nu,j'}\right). \nonumber
\end{equation}
It means that $\delta$ may be different from $0$ and an $O(2)$ equivariance will in general not be achieved. The objective is now to slightly modify Equation~\eqref{sec5:eq:Activation_cleaned} in order to obtain $\delta = 0$. To do so, one can see that the terms that do not vanish are those that involve $\Im\left(\kappa_{\nu,j}^*\right) \Re\left(\kappa_{\nu,j'}^*\right)$. By avoiding such crossed terms between the real and imaginary parts of $\kappa_{\nu,j}$, one can obtain
$\delta=0$ and therefore a reflection equivariance, while still keeping the rotation equivariance. This can be achieved by using
\begin{equation}
    a\left(x,y\right) = \sum_\nu \big| \sum_j \Re\left(\kappa_{\nu,j}^*\right) \varphi^{\left(x,y\right)}_{\nu,j} \big|^2 + \big| \sum_j \Im\left(\kappa_{\nu,j}^*\right) \varphi^{\left(x,y\right)}_{\nu,j} \big|^2.
    \label{sec5:eq:Activation_cleaned_rot_orth_inv}
\end{equation}

To conclude, B-CNNs can be made $SO(2)$ equivariant (that is, equivariant to all the continuous planar rotations) by using Equation~\eqref{sec5:eq:Activation_cleaned} as operation between the filters and the images, or $O(2)$ equivariant (that is, equivariant to all the continuous planar rotations and reflections) by using Equation~\eqref{sec5:eq:Activation_cleaned_rot_orth_inv} instead. Users can decide, based on the application, which equivariance is required.

\section{Bessel-Convolutional Neural Networks}

This section constitutes a sum up and gives more intuition about the global working of B-CNNs. It also presents an efficient way for implementing the previous developments in convolutional neural networks architectures. It is finally shown how bringing multi-scale equivariance is straightforward with this implementation. Multi-scale equivariance means that patterns can be detected even if they appear at slightly different scales in the images. Developments in this section are presented in the particular case of $SO(2)$ equivariance. We will hence consider Equation~\eqref{sec5:eq:Activation_cleaned} instead of Equation~\eqref{sec5:eq:Activation_cleaned_rot_orth_inv}. However, developments can easily be adapted for this second case.

\subsection{B-CNNs From a Practical Point of View}

The key modification in B-CNNs compared to vanilla CNNs is to replace the element-wise multiplication between raw pixel values and the filters by the mathematical operation described by Equation~\eqref{sec5:eq:Activation_cleaned}. Filters, which are described by their Bessel coefficients $\{\kappa_{\nu,j}\}$, sweep locally the image and the Bessel coefficients for the sub-region of the image $\{\varphi^{\left(x,y\right)}_{\nu,j}\}$ are computed. Equation~\eqref{sec5:eq:Activation_cleaned} is then used to progressively build feature maps. This process is summarized in Figure~\ref{sec5:fig:sum_up}. However, implementing B-CNNs by using this straightforward strategy requires to perform many Bessel coefficients decompositions, which are really expensive.

\begin{figure}[h]
    \centering
    \includegraphics[width=0.95\textwidth]{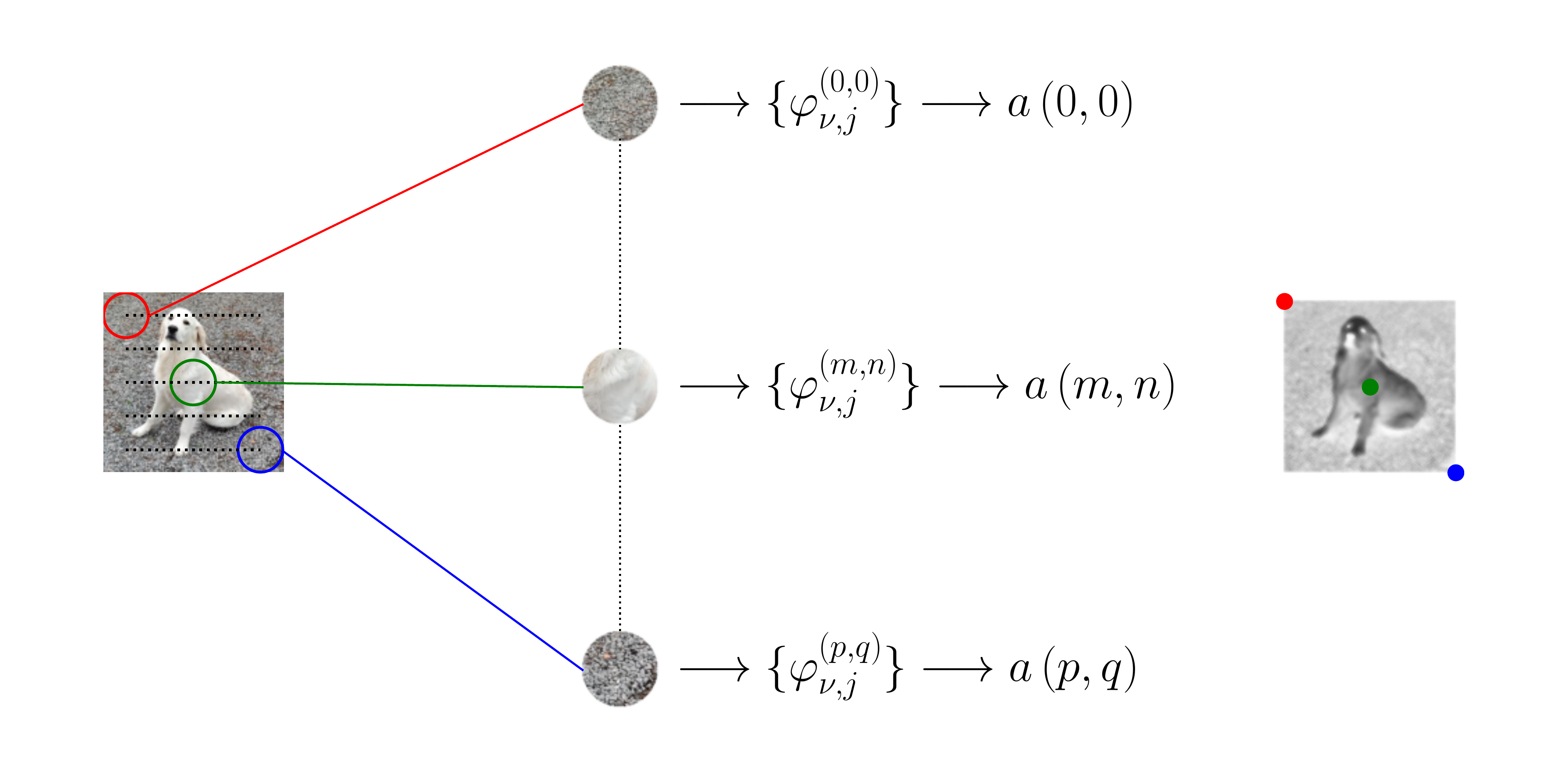}
    \caption{Illustration of how feature maps are obtained in B-CNNs. One can see that B-CNNs still work in a convolutional fashion, but the key operation between the filters and the image is modified.}
    \label{sec5:fig:sum_up}
\end{figure}

A more efficient implementation can be obtained by developing Equation~\eqref{sec5:eq:Activation_cleaned} with Equation~\eqref{eq:sec4:Bessel_coefficients}. Indeed, it gives
\begin{align}
    a\left(x,y\right) &= \sum_\nu \big| \sum_j \kappa_{\nu,j}^* \int_0^{2\pi} \int_0^R \rho \left[N_{\nu,j} J_\nu\left(k_{\nu,j} \rho\right) e^{i\nu\theta}\right]^* \Psi^{\left(x,y\right)}\left(\rho,\theta\right) d\rho d\theta \big|^2 \nonumber\\
    &= \sum_\nu \big| \int_0^{2\pi} \int_0^R \Psi^{\left(x,y\right)}\left(\rho,\theta\right)  \sum_j \rho \left[N_{\nu,j} J_\nu\left(k_{\nu,j} \rho\right) e^{i\nu\theta}\right]^* \kappa_{\nu,j}^* d\rho d\theta \big|^2, \nonumber
\end{align}
and, by converting to Cartesian coordinates thanks to $\overset{\sim}{\theta} \equiv \overset{\sim}{\theta}\left(x,y\right)=\arctan{\frac{y}{x}}$ and $\overset{\sim}{\rho} \equiv \overset{\sim}{\rho}\left(x,y\right)=\sqrt{x^2 + y^2}$, it leads to
\begin{equation}
    a\left(x,y\right) = \sum_\nu \big| \int_{-R}^{R} \int_{-R}^{R} \Psi^{\left(x,y\right)}\left(x',y'\right)  \sum_j \left[N_{\nu,j} \overset{\sim}J_\nu\left(k_{\nu,j} \overset{\sim}{\rho}\right) e^{i\nu \overset{\sim}{\theta}}\right]^* \kappa_{\nu,j}^* dx' dy' \big|^2,
    \label{sec5:eq:activation_intermediate}
\end{equation}
where $\overset{\sim}{J}_\nu\left(k_{\nu,j} \rho\right)$ is defined as
\begin{equation}
    \overset{\sim}{J}_\nu\left(k_{\nu,j} \rho\right)=
    \begin{cases}
        J_\nu\left(k_{\nu,j} \rho\right)\text{ if } \rho\leq R\\
        0\text{ otherwise}. \nonumber
    \end{cases}
\end{equation}
This definition of $\overset{\sim}{J}_\nu$ is required to compensate the fact that we are now integrating over the square domain $\left[-R,R\right] \times \left[-R,R\right]$ instead of the circular domain $D^2$ of radius $R$.
By defining
\begin{equation}
    T_{\nu,j}\left(x,y\right) = N_{\nu,j} \overset{\sim}J_\nu\left(k_{\nu,j} \overset{\sim}{\rho}\right) e^{-i\nu \overset{\sim}{\theta}},
    \label{sec5:eq:transformation_matrix}
\end{equation}
one can finally obtain
\begin{align}
    a\left(x,y\right) &= \sum_\nu \big| \Psi\left(x,y\right) \ast \sum_j T_{\nu,j}\left(x,y\right) \kappa_{\nu,j}^* \big|^2 \nonumber\\
    &= \sum_\nu \big| \Psi\left(x,y\right) \ast F_{\nu}\left(x,y\right) \big|^2, 
    \label{sec5:eq:numerical_expression_activation}
\end{align}
where
\begin{equation}
    F_{\nu}\left(x,y\right) = \sum_j T_{\nu,j}\left(x,y\right) \kappa_{\nu,j}^*.
    \label{sec5:eq:compute_F}
\end{equation}
From a numerical point of view, there are two main advantages in using Equation~\eqref{sec5:eq:numerical_expression_activation} instead of directly implementing Equation~\eqref{sec5:eq:Activation_cleaned} as presented in Figure~\ref{sec5:fig:sum_up}. Firstly, this equation directly involves the input $\Psi\left(x,y\right)$ instead of its Bessel coefficients. Secondly, $T_{\nu,j}\left(x,y\right)$ does not depend on the input or the weights of the model. Therefore, it can be computed only once at the initialization of the model. After discretizing space, $T_{\nu,j}\left(x,y\right)$ can be seen as a transformation matrix that maps the weights of the model from the Fourier-Bessel transform domain (the Bessel coefficients of the filters $\{\kappa_{\nu,j}\}$) to a set of filters in the direct space $\{F_{\nu}\left(x,y\right)\}$. The feature maps can then be obtained by applying classic convolutions between the input and these filters. Also note that the $\nu_{\rm max}$ convolutions that need to be performed can be wrapped with the output channel dimension to only perform one call to the convolution function. Algorithm~\ref{sec5:algo:B-CNN_layer} presents how to efficiently implement a B-CNN layer in practice, including the initialization step and the forward propagation.

\begin{algorithm}[h]
    
    \BlankLine
    \tcc{Initialization (performed only one time)}
    \BlankLine
    
    \SetKwInOut{Input}{Input}
    \Input{The size of the filters $(2n+1)$; the number of output channels $C_{out}$}
    \BlankLine\BlankLine
    
    Compute $k_{\rm max}=\frac{2n+1}{2} \pi$, $\nu_{\rm max}$ and $j_{\rm max}$ (Equation~\ref{eq:sec4:kmax_constraint}) \\
    \BlankLine
    
    Let $\mathbf{K}$ be a randomly-initialized trainable tensor with shape $\left[ \nu_{\rm max}, j_{\rm max}, C_{in}\times C_{out} \right]$ \\
    \ForAll{$(\nu,j)$ s.t. $k_{\nu,j} > k_{\rm max}$}{
        \tcc{Parameters s.t. $k_{\nu,j} > k_{\rm max}$ are set to $0$}
        $\mathbf{K}[\nu,j,:] = 0$\\
    }
    \BlankLine
    
    Let $\mathbf{T}$ be a zero-initialized tensor with shape $\left[ \nu_{\rm max}, 2n+1, 2n+1, j_{\rm max} \right]$ \\
    \ForAll{$(\nu,j)$ s.t. $k_{\nu,j} \leq k_{\rm max}$}{
        \ForAll{$(x,y) \in \{-1,-1+\frac{1}{n},\dots,1-\frac{1}{n},1\} \times \{-1,-1+\frac{1}{n},\dots,1-\frac{1}{n},1\}$}{
            $\mathbf{T}[\nu,x,y,j]=N_{\nu,j} \overset{\sim}J_\nu\left(k_{\nu,j} \sqrt{x^2 + y^2}\right) e^{-i\nu \left(\arctan{\frac{y}{x}}\right)}$ (Equation~\ref{sec5:eq:transformation_matrix})
        }
    }
    Reshape $\mathbf{T}$ into $\left[ \nu_{\rm max}, \left(2n+1\right) \times \left(2n+1\right), j_{\rm max} \right]$ \\
    
    \hrulefill
    
    \BlankLine
    \tcc{Forward propagation}
    \BlankLine
    
    \SetKwInOut{Input}{Input}
    \SetKwInOut{Output}{Output}
    \Input{A tensor $\mathbf{I}$ of $N$ images of size $[N, W, H, C_{in}]$}
    \Output{The tensor $\mathbf{A}$ with the corresponding feature maps}
    \BlankLine\BlankLine
    
    Let $\mathbf{F}$ be a tensor with shape $\left[ \nu_{\rm max}, \left(2n+1\right) \times \left(2n+1\right), C_{in} \times C_{out} \right]$ \\
    \For{($\nu = 0;\ \nu < \nu_{\rm max};\ \nu = \nu + 1$)}{
        \tcc{$\mathbf{F}$ can be computed thanks to $\nu_{\rm max}$ matrix multiplications}
        $\mathbf{F}[\nu,:,:] = {\rm matmul}\left(\mathbf{T}[\nu,:,:], \mathbf{K}[\nu,:,:]\right)$ (Equation~\ref{sec5:eq:compute_F})
    }
    Reshape $\mathbf{F}$ into $\left[(2n+1), (2n+1), C_{in}, \nu_{\rm max} C_{out} \right]$
    \BlankLine
    
    \tcc{$W_{out}$ and $H_{out}$ depend on the padding and the stride}
    $\mathbf{Z} = \rm conv2d \left( \mathbf{I}, \mathbf{F} \right)$ \\
    Reshape $\mathbf{Z}$ into $\left[N, W_{out}, H_{out},  C_{out}, \nu_{\rm max} \right]$ \\
    \BlankLine
    
    Let $\mathbf{A}$ be a zero-initialized tensor with shape $\left[N, W_{out}, H_{out},  C_{out} \right]$ \\
    \For{($\nu = 0;\ \nu < \nu_{\rm max};\ \nu = \nu + 1$)}{
        $\mathbf{A} \ +\!\!= \big| \mathbf{Z}[:,:,:,:,\nu] \big|^2$ (Equation~\ref{sec5:eq:numerical_expression_activation})
    }
    \BlankLine
    \Return{$\mathbf{A}$}
    \BlankLine
    
    \caption{Implementation of a B-CNN layer}\label{sec5:algo:B-CNN_layer}
\end{algorithm}

\subsection{Numerical Complexity of B-CNNs}

Regarding the computational complexity, if the input of a vanilla CNN layer is of size $\left[ W \times H \times C_{in} \right]$ and if it implements
$C_{out}$ filters of size $\left[ 2n \times 2n \right]$, the number of mathematical operations to perform for a forward pass (assuming that padding is used along with unitary strides) is
\begin{align}
    N_{op} &= W H \left[ 4n^2 C_{in} + \left( 4n^2 C_{in} - 1 \right) \right] C_{out} \nonumber \\
    &= 8 W H n^2 C_{in} C_{out} - W H C_{out}. \nonumber
\end{align}
Indeed, $C_{out}$ filters made of $2n \times 2n \times C_{in}$ parameters will sweep $WH$ local parts of the input image. For each local part, $4n^2 C_{in}$ multiplications are then performed as well as $4n^2 C_{in} - 1$ additions. Therefore, it leads to a computational complexity of $\mathcal{O}\left(W H n^2 C_{in} C_{out}\right)$. Compared to vanilla CNNs, B-CNNs need to perform more operations as it is required (i)~to compute $F_{\nu}\left(x,y\right)$, (ii)~to perform $2\nu_{\rm max}$ times more convolutions and (iii)~to compute squared modulus and a sum over $\nu$. Step (i)~consists of $\nu_{\rm max}$ matrix multiplications, that involve for each element in the final matrix $j_{\rm max}$ scalar multiplications and $j_{\rm max}-1$ additions. Step (ii)~is the same that for vanilla CNNs, except that one should perform this for each $\nu$ and both for the real and imaginary parts of $F_{\nu}\left(x,y\right)$. Finally, the squared modulus in step (iii)~involves $2 W H C_{out}$ multiplications and $W H C_{out}$ additions, and the final summation over $\nu$ involves $\nu_{\rm max}-1$ additions. At the end, the final numbers of operations to perform for each step are
\begin{align}
    N_{op}^{(i)} &= \nu_{\rm max} \left[ j_{\rm max} + \left( j_{\rm max} - 1 \right) \right] 4n^2 C_{in} C_{out}, \nonumber \\
    N_{op}^{(ii)} &= W H \left[ 4n^2 C_{in} + \left( 4n^2 C_{in} - 1 \right) \right] 2 C_{out} \nu_{\rm max}, \nonumber \\
    N_{op}^{(iii)} &= 3 W H C_{out} \nu_{\rm max} + \left(\nu_{\rm max} - 1\right) W H C_{out}. \nonumber
\end{align}
However, by looking at Figure~\ref{sec5:fig:R_and_nu_j_max}, one can see that both $\nu_{\rm max}$ and $j_{\rm max}$ scale linearly with $n$, thanks to the constraint expressed by Equation~\eqref{eq:sec4:kmax_constraint}. It follows that
\begin{equation}
    N_{op} \propto n^4 C_{in} C_{out} + W H n^3 C_{in} C_{out} + W H n C_{out}, \nonumber
\end{equation}
resulting in a computational complexity of $\mathcal{O}\left(W H n^3 C_{in} C_{out}\right)$ for a forward pass in a B-CNN layer. Since generally $n \ll \min\left( W,H,C_{in},C_{out} \right)$, the increase in computational time compared to vanilla CNNs is reasonable with respect to the gain in expressiveness. Furthermore, the computational complexity of $E(2)$-equivariant models~\citep{E2_equiv} for a symmetry group $\mathcal{G}$ is $\mathcal{O}\left(W H n^2 C_{in} C_{out} \big| \mathcal{G} \big|\right)$ as $C_{out}$ is artificially increased by the number of discrete operations in $\mathcal{G}$. Therefore, if $n < \big| \mathcal{G} \big|$ (which is generally the case as $n = 5, 7 \text{ or } 9$, and $\mathcal{G} = C_8, C_{16}, D_8 \text{ or } D_{16}$ leading to $\big| \mathcal{G} \big| = 8, 16, 16 \text{ or } 32$, respectively), B-CNNs are therefore more efficient from a computational point of view.

\begin{figure}[h]
    \centering
    \includegraphics[width=0.5\textwidth]{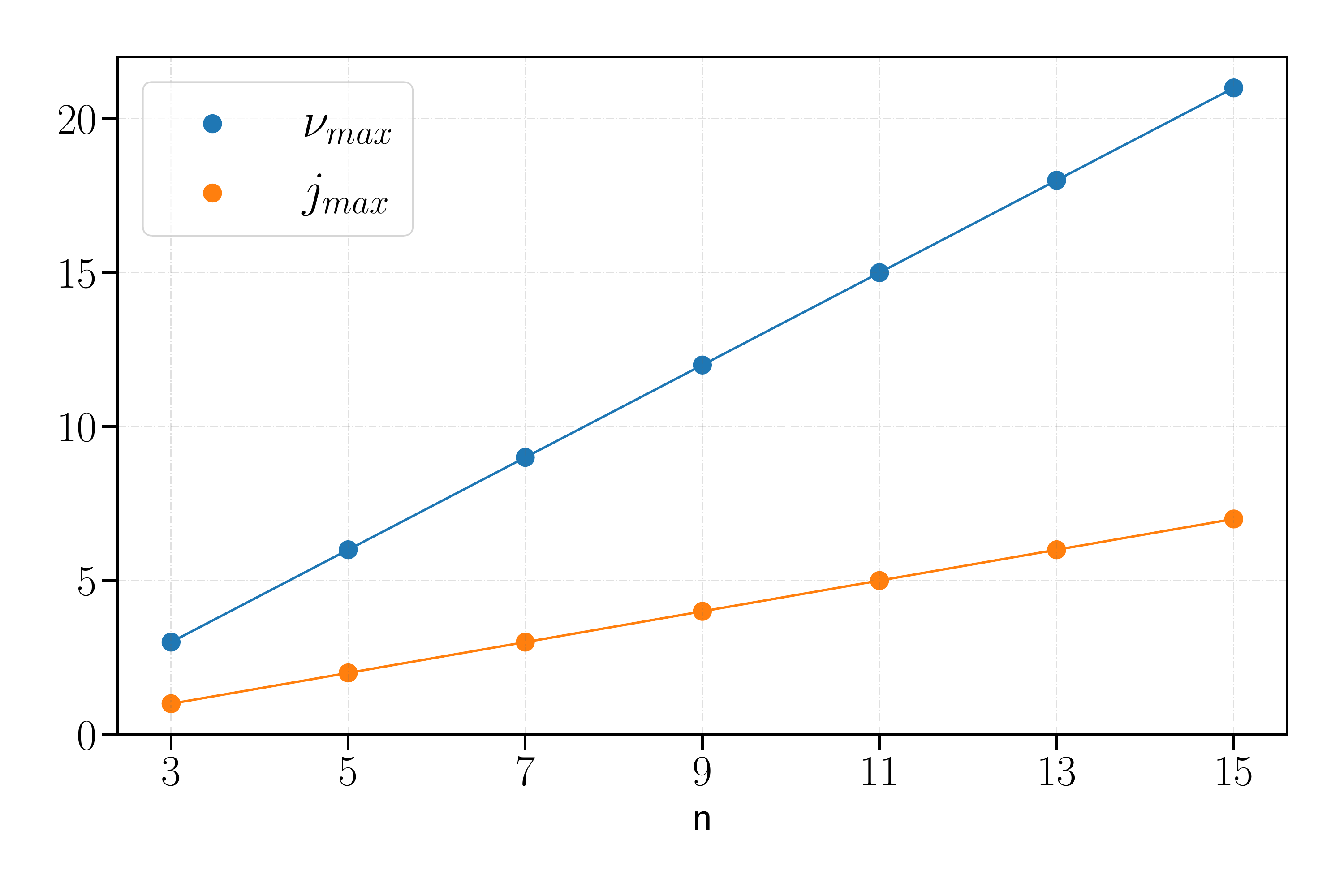}
    \caption{Relation between $n$ and $\nu_{\rm max}$/$j_{\rm max}$ thanks to Equation~\eqref{eq:sec4:kmax_constraint}.}
    \label{sec5:fig:R_and_nu_j_max}
\end{figure}

\subsection{Rotation Equivariance From a Numerical Point of View}\label{subsec:numerical_point_of_view}

As opposed to most of the state-of-the-art methods, B-CNNs do not rely on a particular discretization of the continuous $(S$-$)O(2)$ group. The equivariance is automatically guaranteed by processing the input image thanks to an $(S$-$)O(2)$ equivariant mathematical operation, which replaces the simple convolution in the direct space. It follows that B-CNNs directly provide theoretical guarantees regarding the equivariance to the continuous set of rotation angles $[0,2\pi[$. Indeed, as the Bessel coefficients of the filter $\{\kappa_{\nu,j}\}$ are not computed but defined as the learnable parameters of the model, it does not involve any numerical error. Furthermore, Equation~\eqref{sec5:eq:Activation_cleaned} is rotation invariant and this independently of the number of Bessel coefficients used (that is, independently of $k_{\rm max}$). However, one should mention that, from a numerical point of view, exact $(S$-$)O(2)$ equivariance is rarely possible due to the discrete nature of numerical images. Indeed, $\Psi\left(x,y\right)$ is only known on a finite Cartesian grid, and rotations of angles in $[0,2\pi[\text{ }\setminus\text{ }\{0,\frac{\pi}{2},\pi,\frac{3\pi}{2}\}$ are not well defined, and will result in numerical errors. The only source of errors in B-CNNs regarding the $(S$-$)O(2)$ equivariance lies in the discretization of $\Psi^{\left(x,y\right)}\left(x',y'\right)$ on an $[2n \times 2n]$ Cartesian grid, which is involved by Equation~\eqref{sec5:eq:activation_intermediate}. Therefore, numerical errors may be reduced by increasing $n$, that is, the size of the filters.

\subsection{Adding a Multi-Scale Equivariance}\label{subsec:multi_scale}

Previous sections focus on achieving $SO(2)$ and $O(2)$ equivariance. However, for particular applications, patterns of interest may also vary in scale. To illustrate this, see for example biomedical applications where tumors may be of different sizes. Prior works~\citep{scale_invariant_Yichong,li2019multi,ghosh2019scale} already showed that a multi-scale equivariance can be incorporated into CNNs, leading to better performances for such applications. The aim of this section is to present how these prior works can be easily transposed to the particular case of B-CNNs. 

As the size of the filter in the direct space is determined by the discretization of $T_{\nu,j}\left(x,y\right)$, it is easy in B-CNNs to implement already-existing scaling invariance techniques. To do so, we only need to pre-compute multiple versions of $T_{\nu,j}\left(x,y\right)$ for different kernel sizes, and only keep the one with the highest response. More formally, the idea is to define multiple transformation matrices $T^{n}_{\nu,j}\left(x,y\right)$ that act on circular domains of different size $n$. Those matrices can be pre-computed at initialization. They can then be used to project the filters in the Fourier-Bessel transform domain to filters of different sizes in the direct space. One can then consider keeping only the most active feature maps. The process is summarized in Figure~\ref{sec5:fig:scale_invariance_in_BCNNs}.

\begin{figure}[h]
    \centering
    \includegraphics[width=0.99\textwidth]{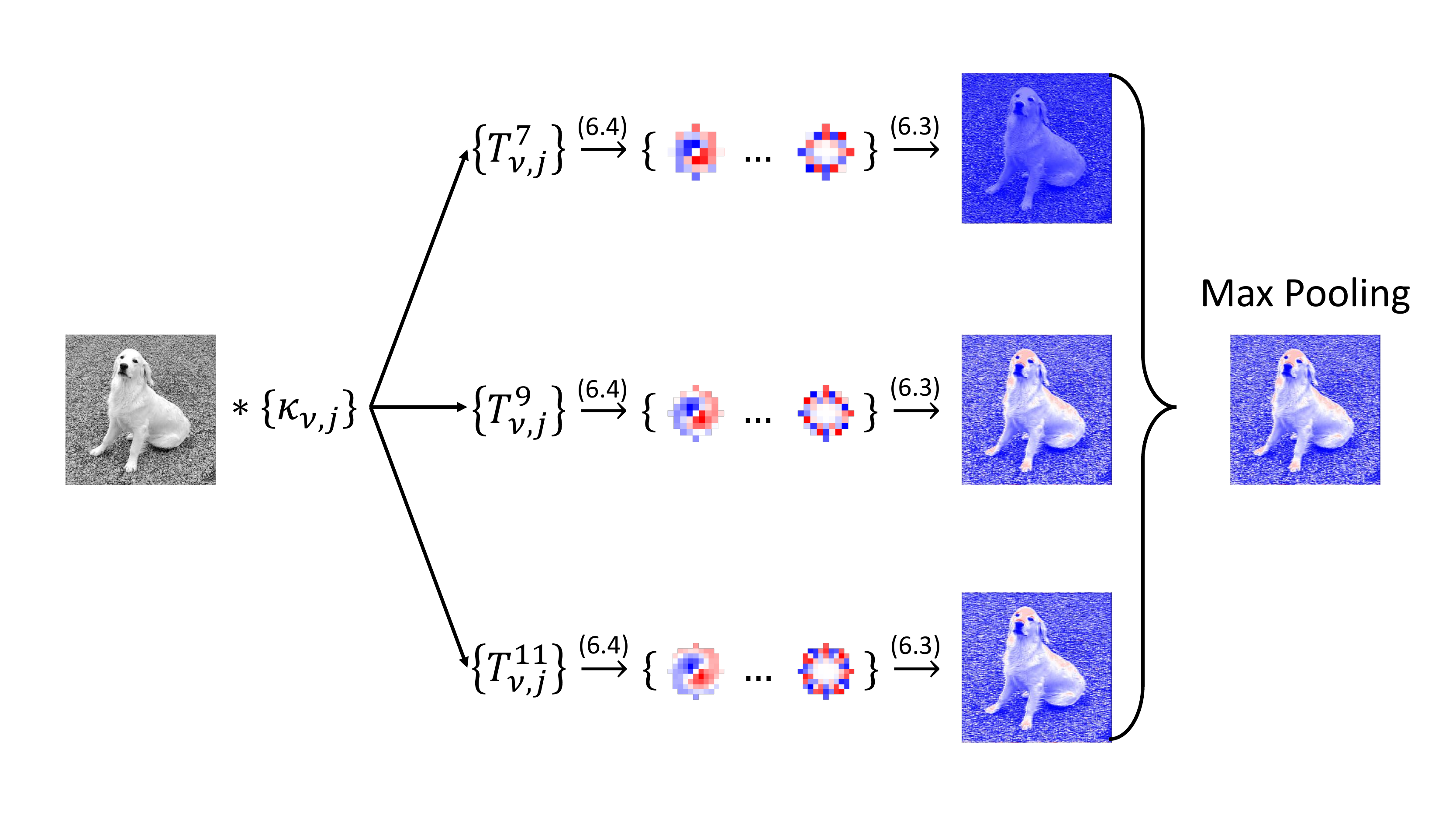}
    \caption{This figure presents how B-CNNs can handle multi-scale equivariance. In this case, a single filter represented by its Bessel coefficients $\{\kappa_{\nu,j}\}$ is projected in the direct space thanks to different transformation matrices. The filter is mapped to $\nu$ filters of size $7 \times 7$, $9 \times 9$ and $11 \times 11$. Max pooling is then used to only keep the most responding feature maps. Note that only the real parts of the projected filters are represented here, for convenience.}
    \label{sec5:fig:scale_invariance_in_BCNNs}
\end{figure}

\section{Experiments}

This section presents the details of all the experiments performed to assess and to compare the equivariance obtained with B-CNNs with other state-of-the-art methods. The data sets used are presented, as well as the experimental setup. After that, quantitative results are presented for each data set.

\subsection{Data sets}

Three data sets are used to assess the performances in different practical situations:
\begin{itemize}
    \item The MNIST~\citep{MNIST} data set is a classical baseline for image classification. This data set is made of $28 \times 28$ grayscale images of handwritten digits that belong therefore to one out of 10 different classes. More precisely, four variants of this data set are considered: (i)~MNIST, (ii)~MNIST-rot, (iii)~MNIST-back and (iv)~MNIST-rot-back\footnote{All these variants are generated from the initial MNIST data set, and can be found at~\url{https://sites.google.com/a/lisa.iro.umontreal.ca/public_static_twiki/variations-on-the-mnist-digits}}. In the \textit{rot} variants, images are randomly rotated by an angle $\alpha \in \left[ 0, 2\pi \right[$. In the \textit{back} variants, a patch from a black and white image was used as the background for the digit image. This adds useless information that can be disturbing for some architectures. All these MNIST data sets are perfectly balanced.
    \item The Galaxy10 DECals data set is a subset of the original Galaxy Zoo data set~\citep{willett_galaxy_2013}. This data set is initially made of $256 \times 256$ RGB images of galaxies that belong to one out of 10 roughly balanced classes, representing different possible morphologies according to experts. Images are resized to $128 \times 128$ in our work for computational resources purpose.
    \item The Malaria~\citep{malaria_dataset} data set is made of $64 \times 64$ RGB microscope images of blood films. Those images belong to two perfectly balanced classes highlighting the presence or not of the parasites responsible for Malaria.
\end{itemize}
An overview for all those three data sets is presented in Table~\ref{sec6:tab:data_sets}, along with visual examples.

\begin{table}[h]
    \centering
    \begin{tabular}{|l|c|c|c|c|c|}
        \hline
        \textbf{Data set} & \textbf{$N$} & \textbf{$C$} & \textbf{Task} & \textbf{Resolution} & \textbf{Examples} \\
        \hline
        
        MNIST &  \multirow{4}{*}{\vspace{-4cm}$62,000$} & \multirow{4}{*}{\vspace{-4cm}$10$} & \multirow{4}{*}{\vspace{-4cm}\makecell{Multi-class \\ classification}} & \multirow{4}{*}{\vspace{-4cm}$28 \times 28 \times 1$} & \raisebox{-0.45\totalheight}{\includegraphics[width=18mm, height=18mm]{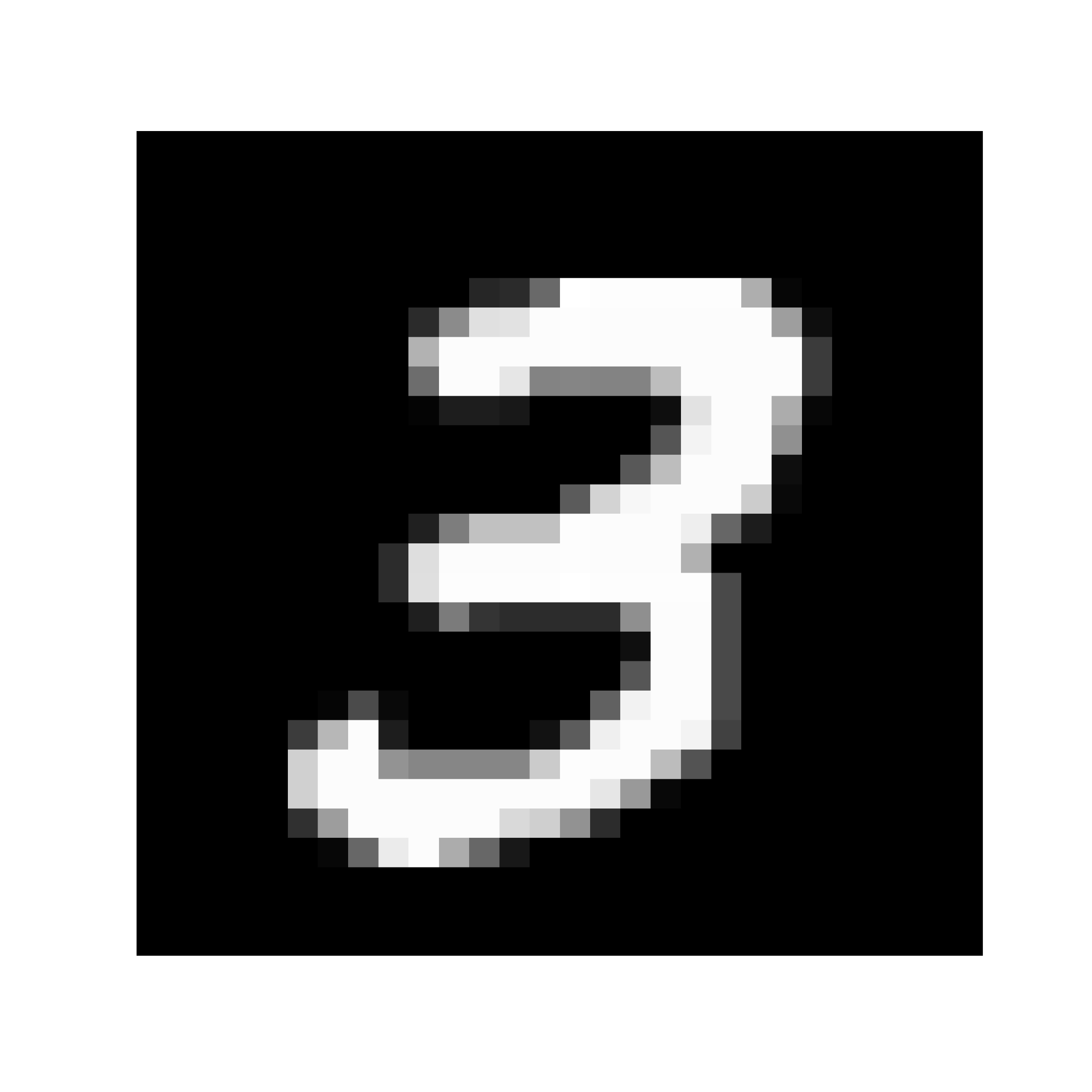}\includegraphics[width=18mm, height=18mm]{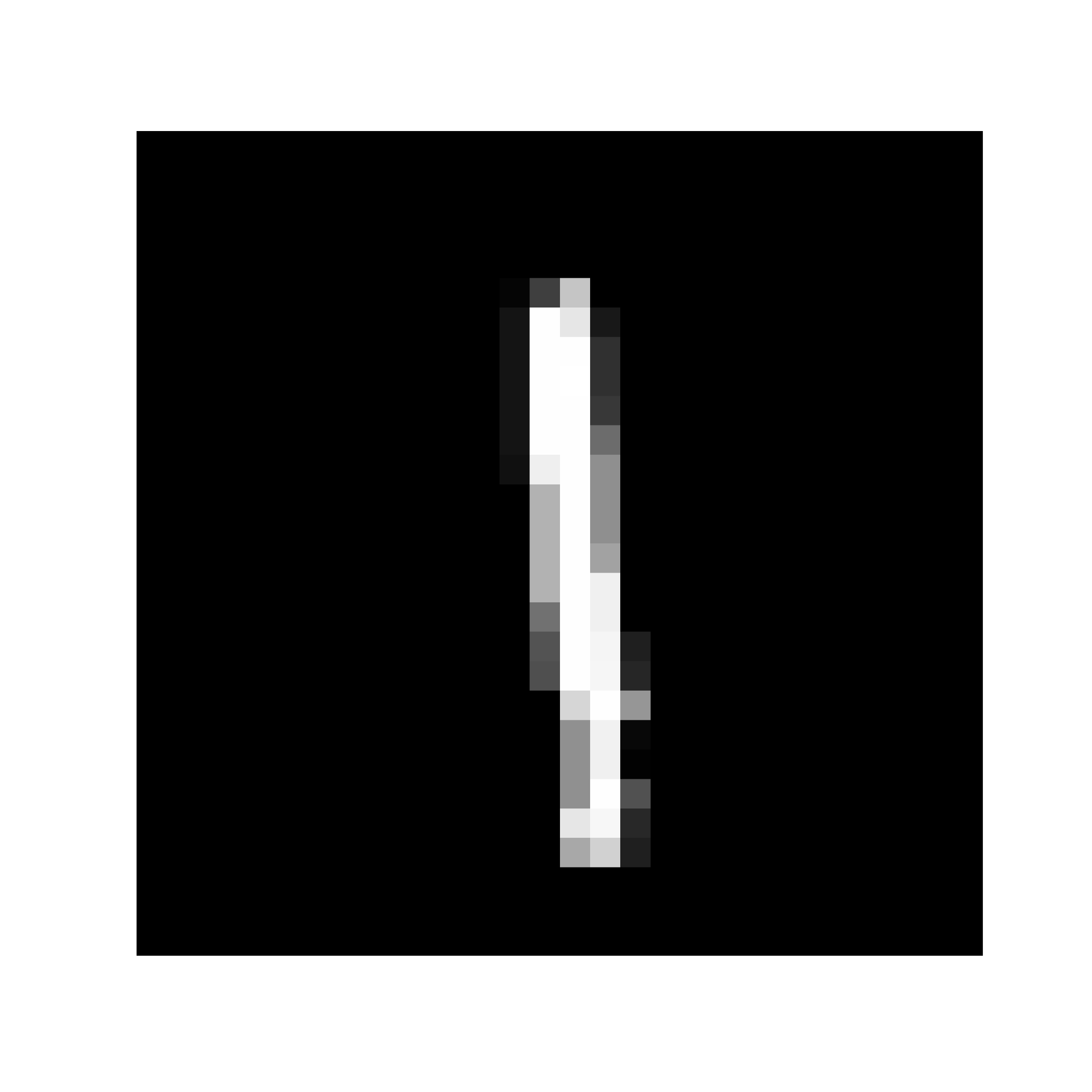}} \\

        -rot &  &  &  &  & \raisebox{-0.45\totalheight}{\includegraphics[width=18mm, height=18mm]{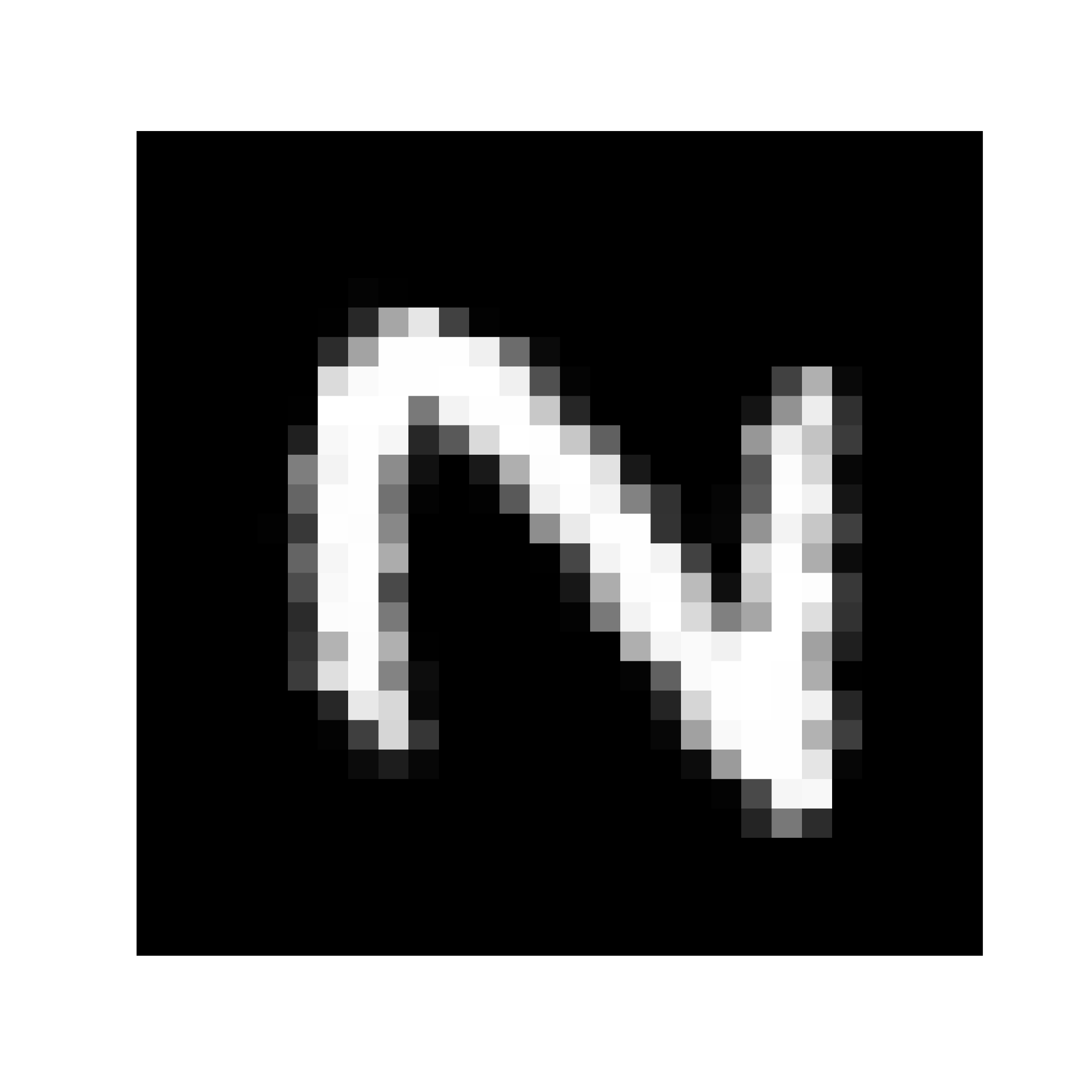}\includegraphics[width=18mm, height=18mm]{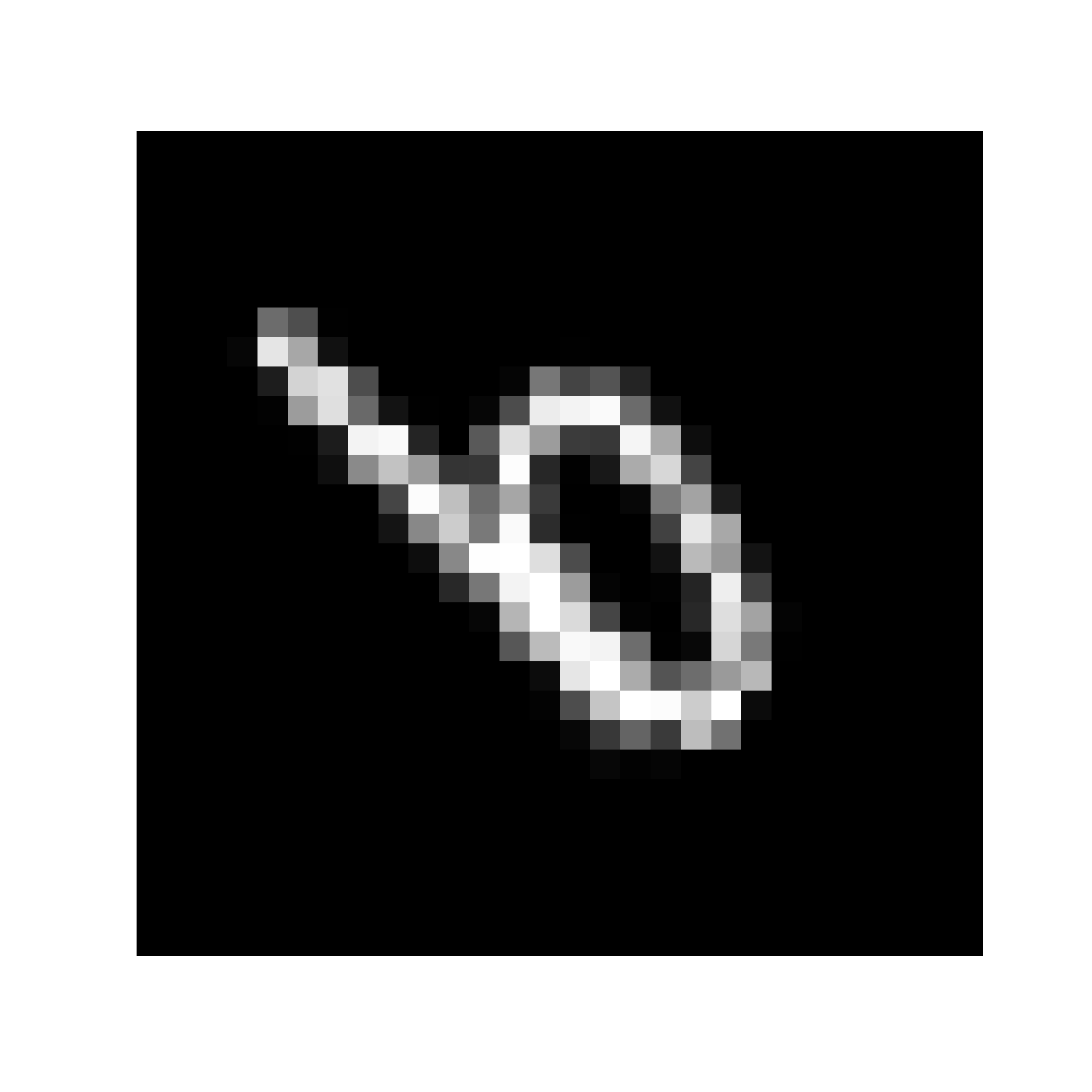}} \\

        -back &  &  &  &  & \raisebox{-0.45\totalheight}{\includegraphics[width=18mm, height=18mm]{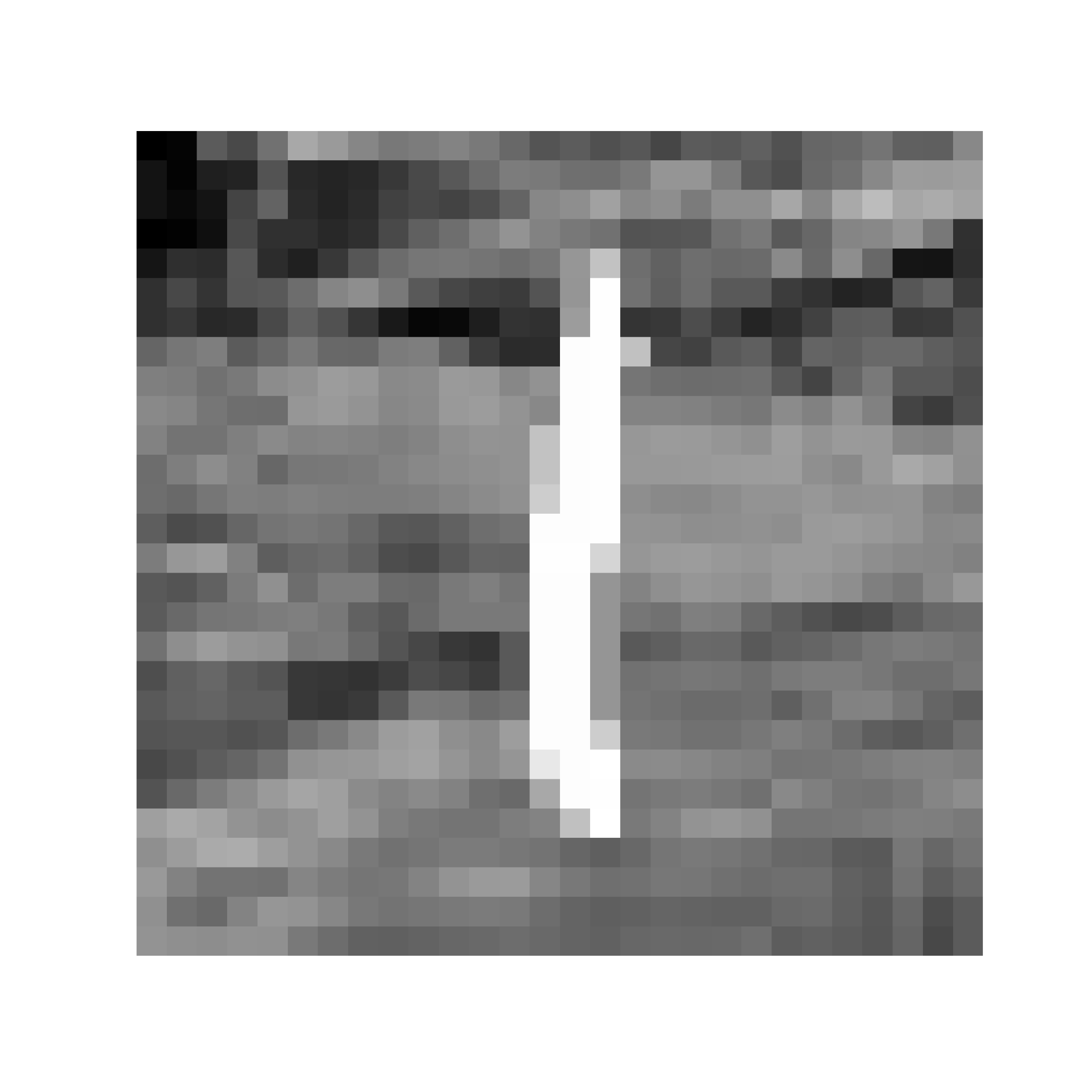}\includegraphics[width=18mm, height=18mm]{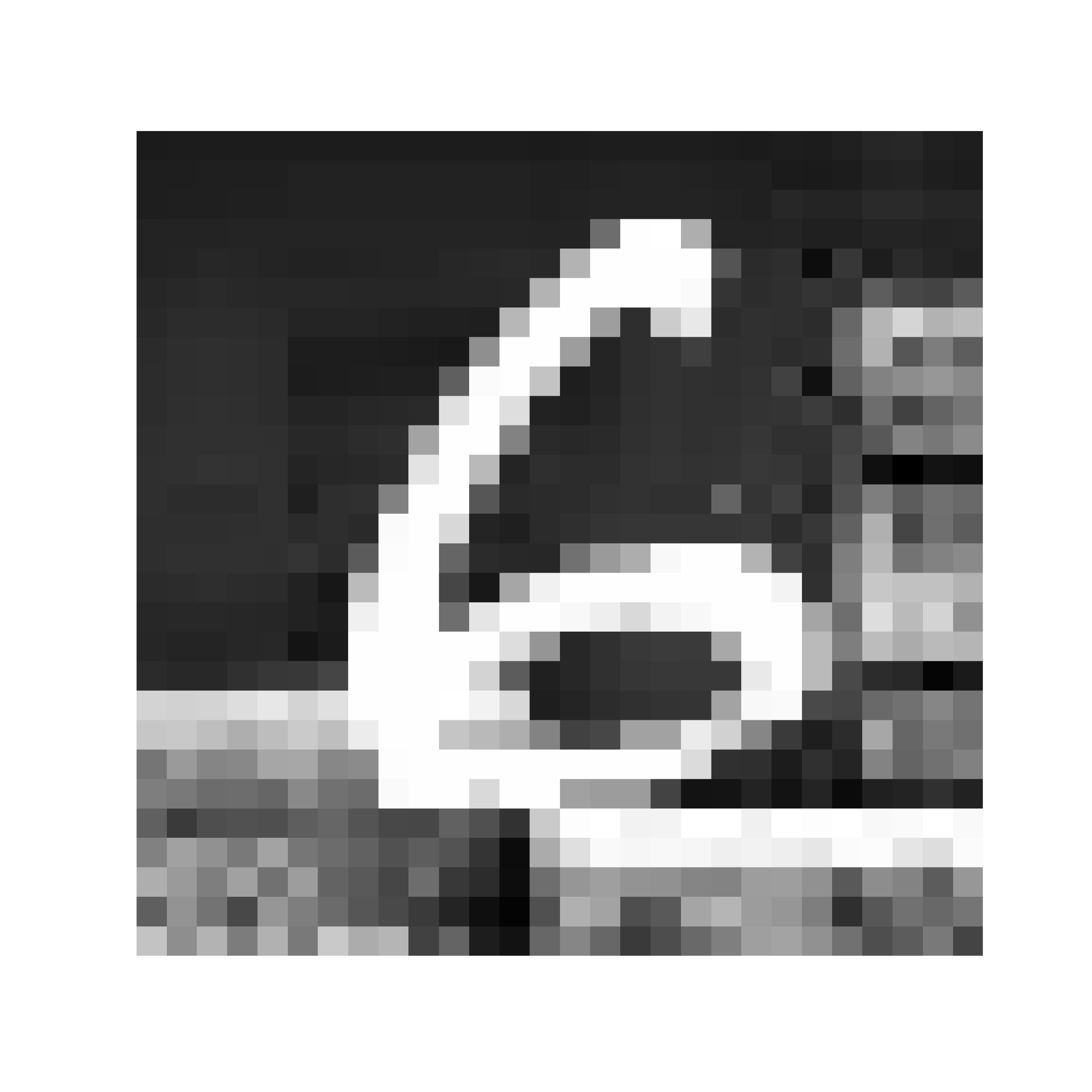}} \\

        -rot-back &  &  &  &  & \raisebox{-0.45\totalheight}{\includegraphics[width=18mm, height=18mm]{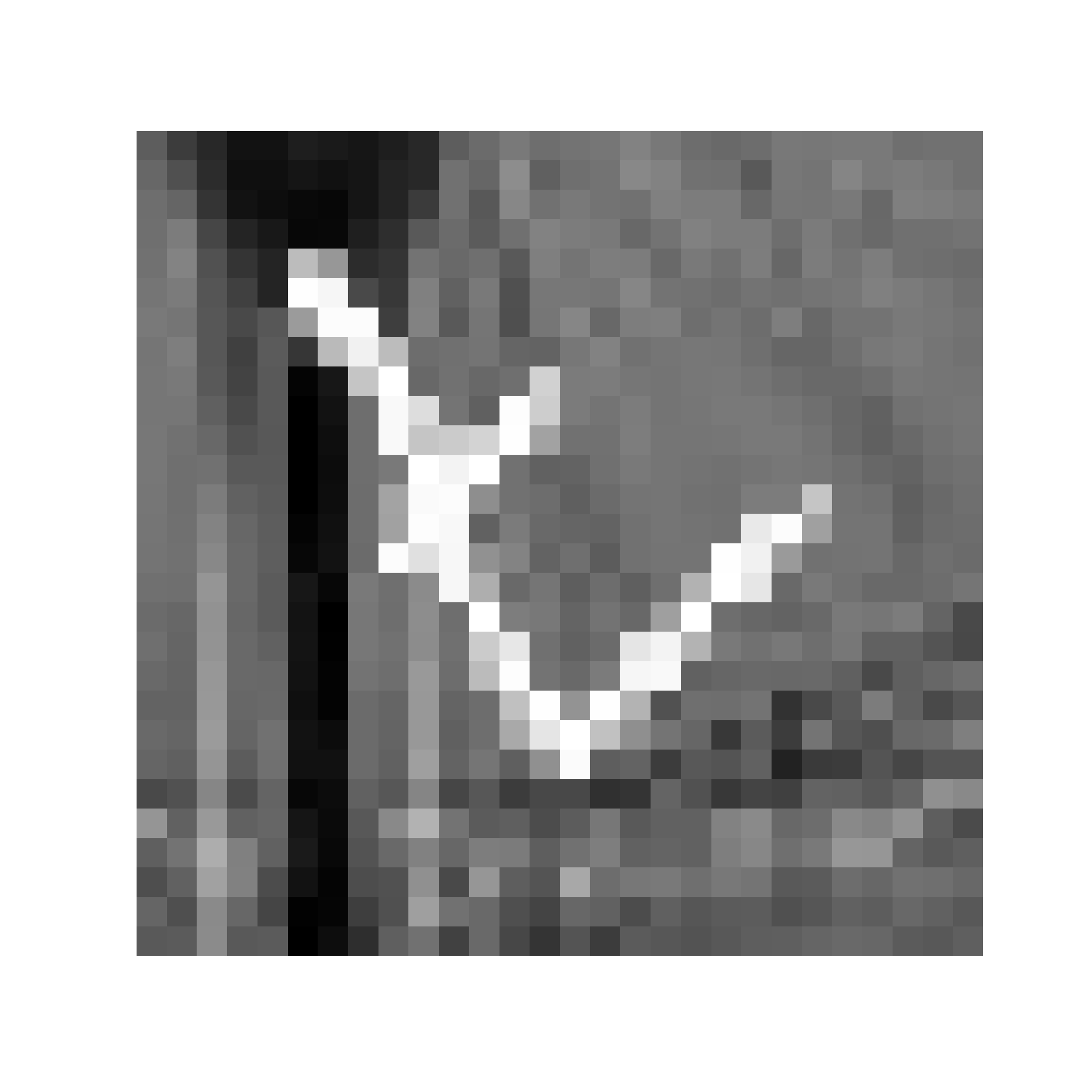}\includegraphics[width=18mm, height=18mm]{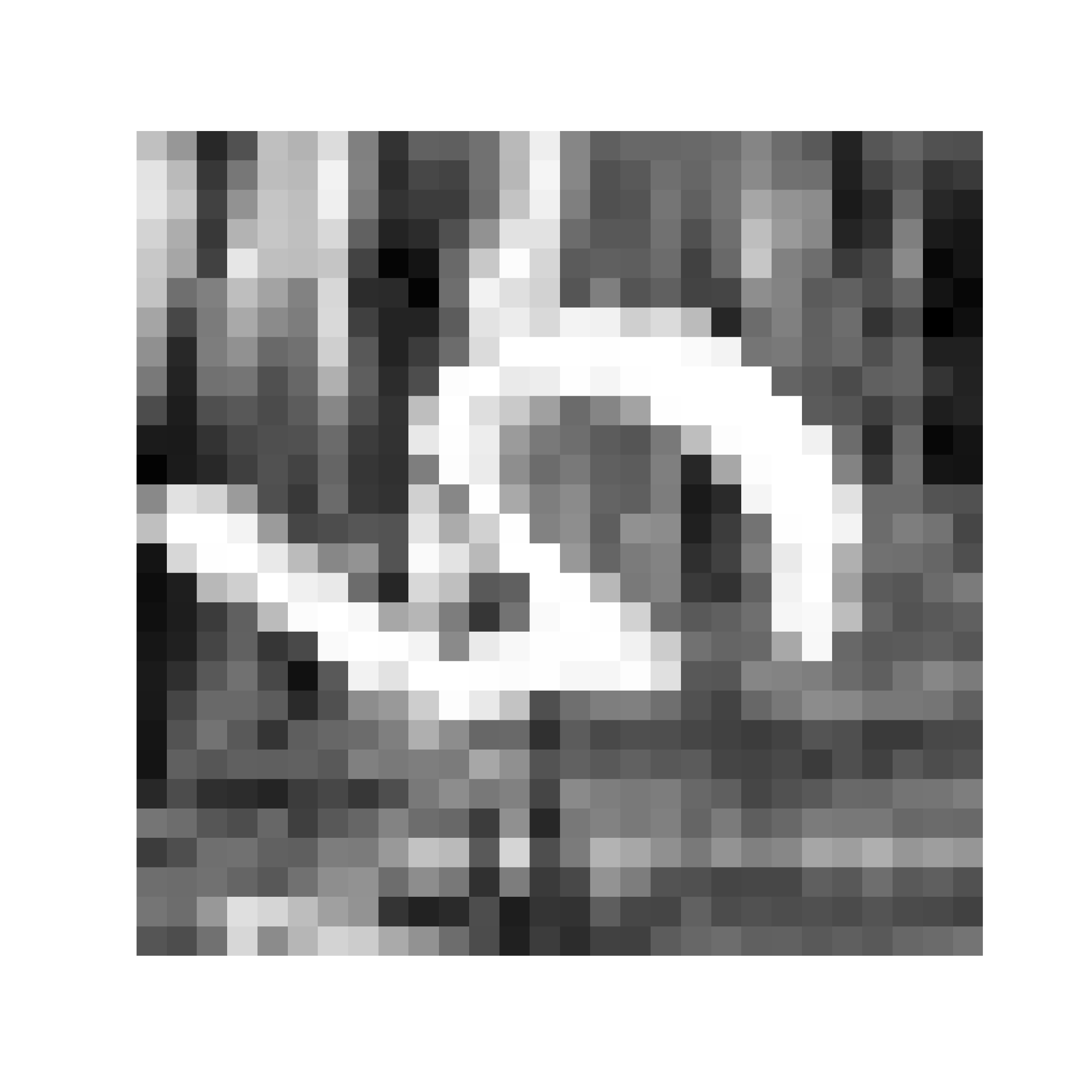}} \\
        \hline
        
        \makecell[l]{Galaxy10 \\ DECals} & $17,736$ & $10$ & \makecell{Multi-class \\ classification} & $128 \times 128 \times 3$ & \raisebox{-0.45\totalheight}{\includegraphics[width=18mm, height=18mm]{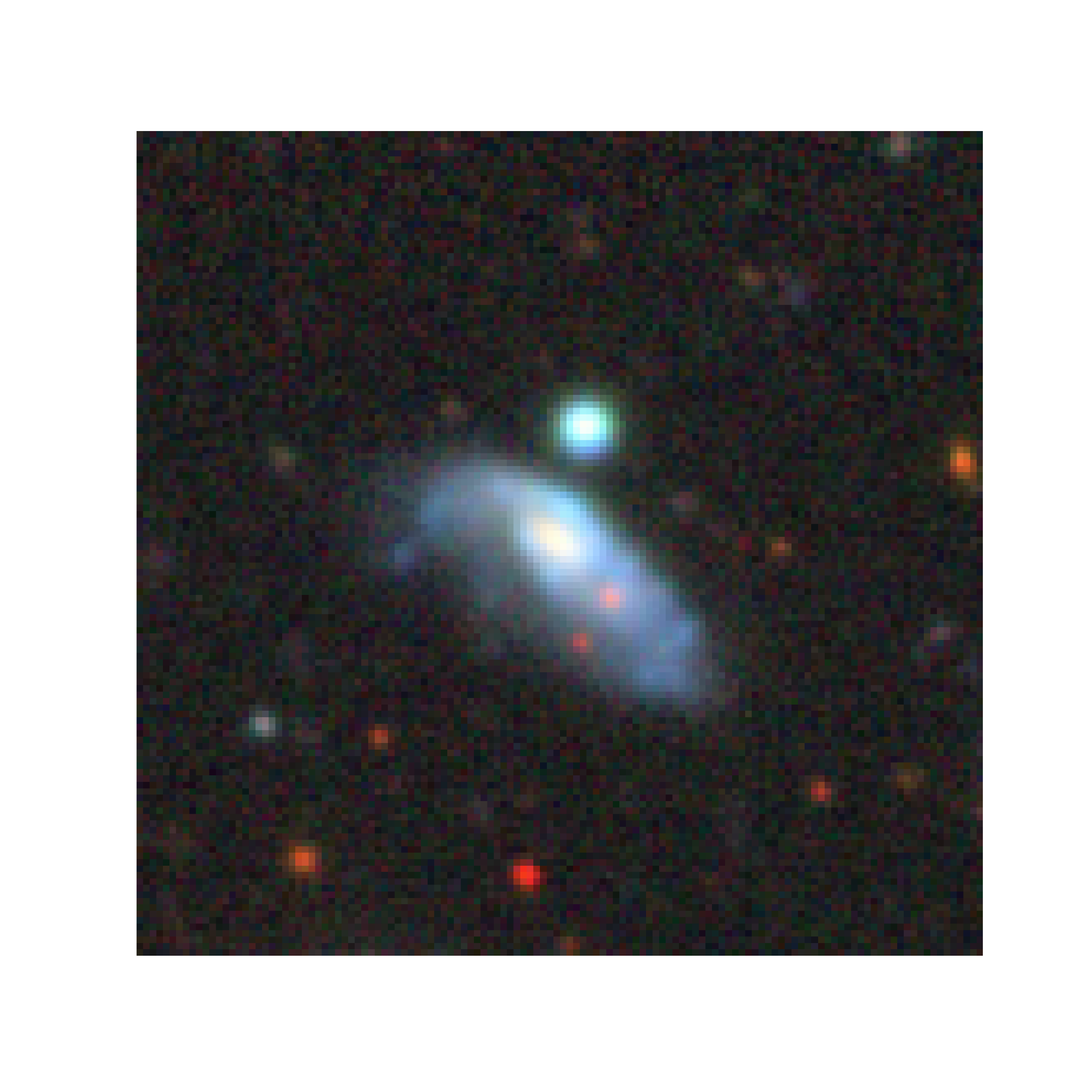}\includegraphics[width=18mm, height=18mm]{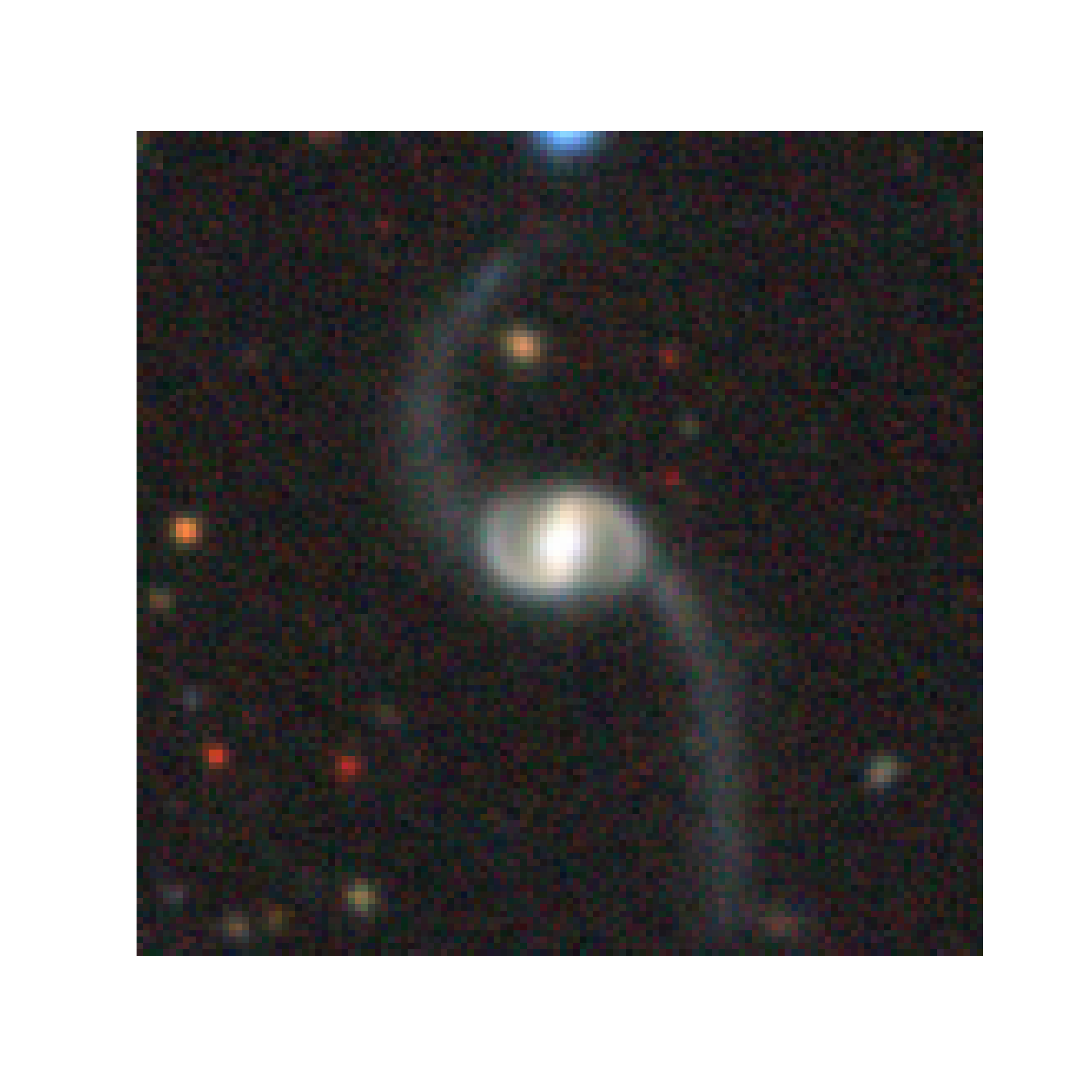}} \\
        \hline
        
        Malaria & $27,558$ & $2$ & \makecell{Binary \\ classification} & $64 \times 64 \times 3$ & \raisebox{-0.45\totalheight}{\includegraphics[width=18mm, height=18mm]{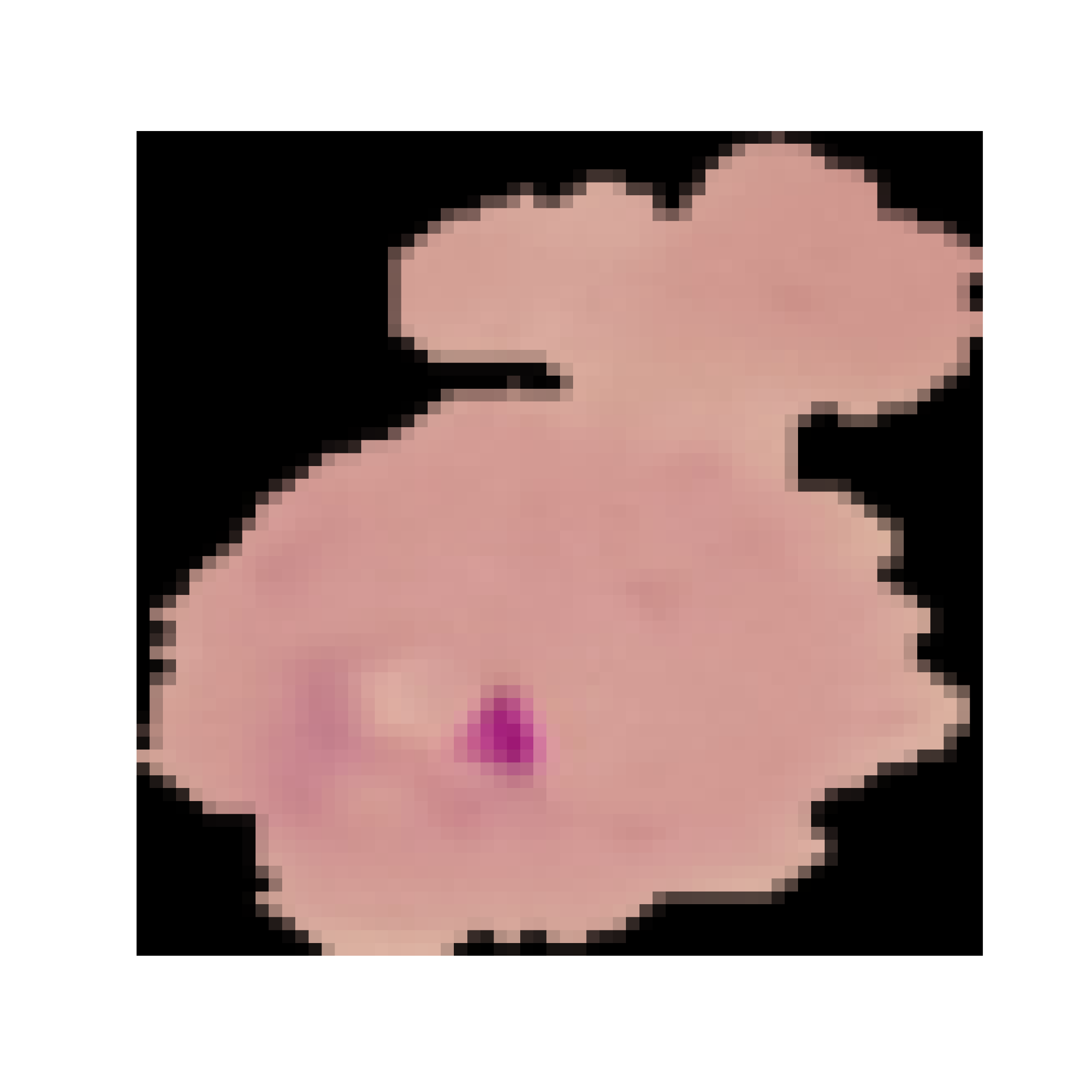}\includegraphics[width=18mm, height=18mm]{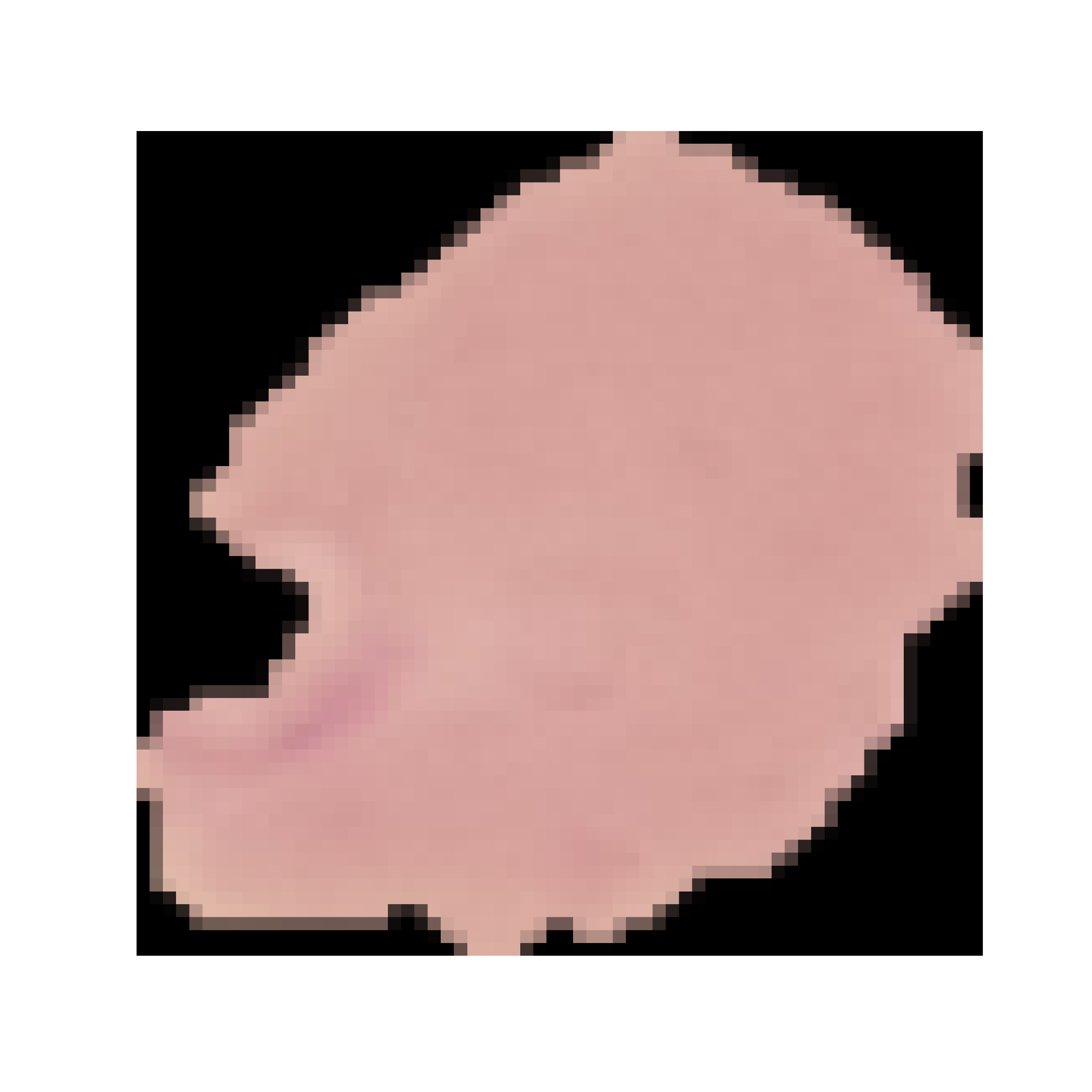}} \\
        \hline
        
    \end{tabular}
    \caption{Overview of the data sets. $N$ is the total number of available data, and $C$ is the number of classes.}
    \label{sec6:tab:data_sets}
\end{table}

\subsection{Experimental Setup}

In order to perform this empirical study, (i)~$E(2)$-equivariant CNNs from~\cite{E2_equiv} ($E(2)$-CNNs),~(ii) Harmonic Networks from~\cite{worrall_hnets_2016} (HNets) as well as~(iii) vanilla CNNs are considered along with our method (B-CNNs). This choice is motivated by the fact that $E(2)$-CNN and HNets constitute the state of the art for constraining CNNs with known symmetry groups. Each technique is tested in different setups (mainly, for different symmetry groups or different representations of the same group). The different setups for each method are described below:

\begin{itemize}
    \item For $E(2)$-CNNs, we consider the discrete $C_4$ ($\{n \frac{\pi}{2}\}_{n=1}^4$ rotations) and $C_8$ ($\{n \frac{\pi}{4}\}_{n=1}^8$ rotations) symmetry groups using a regular representation, as well as the continuous one, $SO(2)$ (all the continuous rotations) and $O(2)$ (all the continuous rotations and the reflections along vertical and horizontal axes), using irreducible representations. Those setups are a subset of all the setups tested by the authors of $E(2)$-CNNs. More details about this and how $E(2)$-CNNs work can be found in the work of~\cite{E2_equiv}. Furthermore, the authors provide an implementation for $E(2)$-CNN that has been used in this work.
    \item For HNets, similarly to what the authors did in their work, two different setups to achieve $SO(2)$ invariance are tested using an approximation to the first and second order. In this work, we use again the implementation provided by the authors of $E(2)$-CNNs, who re-implement HNets in their own framework, for convenience.
    \item Regarding our B-CNNs, four setups are considered to achieve $SO(2)$ or $O(2)$, with or without scale invariance (denoted by the presence or not of ``$+$" in our tables and figures), with the computation of $k_{\rm max}$ as described by Equation~\eqref{eq:sec4:kmax_constraint}. Another setup for $SO(2)$ invariance with a stronger cutoff frequency, that corresponds to half the initial $k_{\rm max}$, is also considered. This last setup is motivated by the empirical observation that it often leads to better performances.
    \item Finally, a vanilla CNN with the same architecture than for the other methods, as well as a ResNet-18~\citep{resnet-18} are also trained for reference.
\end{itemize}

The architectures are inspired from the work of~\cite{E2_equiv} and are presented in a generic fashion in Table~\ref{sec6:tab:architectures}. Note that the size of the filters is larger than conventional sizes in CNNs. The reason why it is preferable to increase the size of the filters in those cases is explained in Section~\ref{subsec:numerical_point_of_view}. The same template architecture is used for all the methods (except for the ResNet-18 architecture that is kept unmodified) and data sets. Nonetheless, minor modifications are sometimes performed. Firstly, the number of filters in each convolutional layer should be adapted from one method to another, in order to keep the same number of trainable parameters. To do so, a parameter $\lambda$ is introduced to manually scale the number of filters and guarantee the same number of trainable parameters for all the methods. To give an idea, $\lambda$ is arbitrarily set to $1$ for B-CNNs with the soft cutoff frequency policy, and the corresponding number of trainable parameters is close to $115,000$. Secondly, $E(2)$-CNNs require a particular operation called \textit{invariant projection} before applying the dense layer. This specific operation is not performed for the other methods. Thirdly, each convolutional layer is followed by a batch-normalization and a \textit{ReLU} activation function, except for B-CNNs. Indeed, we empirically observed that both the batch-normalization and the \textit{ReLU} activation function generally decrease convergence for B-CNNs, while this is not the case for the other methods. Therefore, we use another type of batch-normalization layer that is introduced by~\cite{li_attentive_2021} as well as \textit{softsign} activation functions, which seems to perform better in our case. Note that we are still not able to really understand why classic batch-normalization and \textit{ReLU} activation functions reduce the performances of B-CNNs. Finally, the padding and the final layer are adapted according to the considered data set, as they involve different tasks and image sizes. 

\begin{table}[h]
    \centering
    \begin{tabular}{|l r|c|c|c|}
        \cline{1-5}
        Layer & \# C & MNIST(-rot)(-back) & Galaxy10 DECals & Malaria \\
        \cline{1-5}
        \textit{Conv layer} $9 \times 9$ & $8\lambda$ & pad 4 & pad 0 & pad 4 \\
        \textit{Conv layer} $7 \times 7$ & $16\lambda$ & pad 3 & pad 0 & pad 3 \\
        Av. pool. $2 \times 2$ & - & pad 0 & pad 0 & pad 0 \\
         & & & & \\
        \textit{Conv layer} $7 \times 7$ & $24\lambda$ & pad 3 & pad 0 & pad 3 \\
        \textit{Conv layer} $7 \times 7$ & $24\lambda$ & pad 3 & pad 0 & pad 0 \\
        Av. pool. $2 \times 2$ & - & pad 0 & pad 0 & pad 0 \\
         & & & & \\
        \textit{Conv layer} $7 \times 7$ & $32\lambda$ & pad 3 & pad 0 & pad 0 \\
        \textit{Conv layer} $7 \times 7$ & $40\lambda$ & pad 0 & pad 0 & pad 0 \\
        (\textit{Inv. projection}) & - & - & - & - \\
         & & & & \\
        Global av. pool. & - & - & - & - \\
        Dense layer & $\rightarrow$ & 10, \textit{softmax} & 10, \textit{softmax} & 2, \textit{softmax} \\
        \cline{1-5}
    \end{tabular}
    \caption{Generic architecture used for the different data sets. \textit{Conv layer} can either be vanilla-conv, B-conv, $E(2)$-conv or HNet-conv. After each \textit{Conv layer}, a batch-normalization as well as an activation function is applied (\textit{softsign} for B-conv, \textit{ReLU} for others). The \textit{Invariant projection} is only required in $E(2)$-CNNs. As the number of parameters in a filter may differ between the different methods, a parameter $\lambda$ is introduced. This parameter is fixed for each methods in order to tweak the number of filters (\# C) so that the total numbers of trainable parameters are as close as possible to each others.}
    \label{sec6:tab:architectures}
\end{table}

As it is expected that constraining CNNs with symmetry groups becomes more useful when less data are available (as CNNs should no more learn the invariances by themselves), experiments are performed in (i)~High, (ii)~Intermediate and (iii)~Low data settings for each data set, with different data augmentation strategies. The different data settings correspond to different sizes for the training sets. Attention is paid to keep the same
percentage of samples of each target class, in order to avoid biases.

For the MNIST data sets, those settings correspond to the use of (i)~$20\%$, (ii)~$2\%$ and (iii)~$0.2\%$ of the total number of available data for training. On top of this, three different data augmentation policies are tested. Firstly, models are trained on MNIST-rot(-back) using online data augmentation (by performing random rotation before being given as input). Secondly, models are still trained on MNIST-rot(-back) but without further data augmentation (the same image is always seen by the model in the same orientation). Thirdly, models are trained on MNIST(-back) while being tested on random rotated versions of the test images. Those setups allow us to see how the amount of data impact the performance of the models, and how much the models still rely on the training phase to achieve the desired invariances.

For the other data sets, the different data settings correspond to the use of (i)~$80\%$, (ii)~$8\%$ and (iii)~$0.8\%$ of the total number of available training data, respectively. As for the MNIST data sets, models are again trained with and without using data augmentation. However, in this case, the data augmentation also performs random planar reflections (not pertinent for MNIST). Also note that only two data augmentation policies are possible, because non-rotated version of images for Galaxy10 DECals and Malaria is meaningless (as opposed to MNIST where digits have a well-defined orientation, a priori).

For the High and Intermediate data setting experiments, models are trained using the Adam optimizer through $50$ epochs. A warm-up cosine decay scheduler that progressively increase the learning rate from $0$ to $0.001$ during the first $10$ epochs before slowly decreasing it to $0$ following a cosine function during the remaining epochs is used. For the low data setting experiments, $150$ epochs are performed, with the warm-up phase during the first $30$ epochs. Each experiment is performed on $3$ independent runs.

\subsection{Results on MNIST(-rot)}

Table~\ref{sec6:table:mnist_results} presents the results obtained on the MNIST(-rot) data sets. For the sake of completeness, Figure~\ref{sec6:figure:mnist_results} also presents all the corresponding training curves.

\begin{sidewaystable}[hp]
    \centering
    \begin{adjustbox}{width=\columnwidth}
    \begin{tabular}{ |l||l|l|l|l||l|l|l||l|l|l|c| }
    
        \cline{3-11}
        \multicolumn{1}{c}{} & \multicolumn{1}{c}{} & \multicolumn{6}{|c||}{\textbf{MNIST-rot}} & \multicolumn{3}{|c|}{\textbf{MNIST}} \\
        \cline{3-11}
        \multicolumn{1}{c}{} & \multicolumn{1}{c}{} & \multicolumn{3}{|c||}{\textbf{With data aug.}} & \multicolumn{3}{|c||}{\textbf{Without data aug.}} & \multicolumn{3}{|c|}{\textbf{No rotation during training}} \\
        \hline
        \textbf{Method} & \textbf{Group} & \multicolumn{1}{|c|}{High} & \multicolumn{1}{|c|}{Inter.} & \multicolumn{1}{|c||}{Low} & \multicolumn{1}{|c|}{High} & \multicolumn{1}{|c|}{Inter.} & \multicolumn{1}{|c||}{Low} & \multicolumn{1}{|c|}{High} & \multicolumn{1}{|c|}{Inter.} & \multicolumn{1}{|c|}{Low} & \textbf{Reference} \\

        \hline
        Vanilla CNN & $\{e\}$ & $98.16\pm0.03$ & $93.95\pm0.20$ & $41.64\pm17.87$ & $95.80\pm0.11$ & $76.96\pm0.84$ & $22.27\pm3.52$ & $42.82\pm0.23$ & $36.15\pm1.42$ & $22.84\pm2.76$ & - \\
        ResNet-18 & $\{e\}$ & $98.42\pm0.10$ & $92.42\pm0.37$ & $73.18\pm1.01$ & $95.51\pm0.07$ & $59.52\pm9.66$ & $26.43\pm2.20$ & $41.05\pm0.21$ & $30.78\pm2.01$ & $20.34\pm1.29$ & \cite{resnet-18} \\

        \hline
        \multirow{2}{*}{$E(2)$-CNN / regular} & $C_4$ & $\mathbf{99.03}\pm0.03$ & $\mathbf{96.55}\pm0.23$ & $\mathbf{86.91}\pm0.40$ & $98.40\pm0.10$ & $93.91\pm0.33$ & $69.41\pm2.20$ & $68.06\pm1.94$ & $65.97\pm0.88$ & $55.45\pm0.22$ & \multirow{6}{*}{\cite{E2_equiv}} \\
         & $C_8$ & $\mathbf{99.11}\pm0.05$ & $\mathbf{96.89}\pm0.26$ & $\mathbf{85.24}\pm1.21$ & $98.68\pm0.08$ & $94.75\pm0.32$ & $73.66\pm1.15$ & $68.81\pm0.44$ & $67.08\pm0.76$ & $58.73\pm1.06$ & \\
         
         \multirow{2}{*}{$E(2)$-CNN / $\text{irreps} \leq 1$} & $SO(2)$ & $98.77\pm0.01$ & $95.57\pm0.09$ & $79.50\pm0.81$ & $98.63\pm0.01$ & $95.18\pm0.28$ & $\mathbf{79.94}\pm1.63$ & $\mathbf{97.11}\pm0.20$ & $91.45\pm1.45$ & $\mathbf{69.24}\pm4.93$ & \\
         & $O(2)$ & $96.85\pm0.04$ & $89.66\pm0.60$ & $67.79\pm1.23$ & $96.33\pm0.02$ & $88.80\pm0.45$ & $68.45\pm2.39$ & $92.26\pm0.07$ & $82.73\pm0.40$ & $56.57\pm3.87$ & \\

         \multirow{2}{*}{$E(2)$-CNN / $\text{irreps} \leq 3$} & $SO(2)$ & $98.52\pm0.05$ & $93.30\pm0.33$ & $77.57\pm1.50$ & $98.27\pm0.03$ & $90.95\pm0.43$ & $74.02\pm0.72$ & $95.75\pm0.36$ & $86.90\pm0.83$ & $64.32\pm5.48$ & \\
         & $O(2)$ & $96.57\pm0.14$ & $89.32\pm0.30$ & $65.38\pm0.13$ & $95.56\pm0.13$ & $85.86\pm0.34$ & $65.24\pm2.24$ & $91.49\pm0.63$ & $79.75\pm2.01$ & $54.67\pm2.84$ & \\
        \hline

        \hline
        HNets / 1st order & $SO(2)$ & $98.82\pm0.03$ & $95.86\pm0.11$ & $81.08\pm1.16$ & $\mathbf{98.73}\pm0.04$ & $\mathbf{95.77}\pm0.30$ & $\mathbf{80.99}\pm0.49$ & $\mathbf{96.91}\pm0.28$ & $\mathbf{92.67}\pm0.68$ & $\mathbf{71.20}\pm6.88$ & \multirow{2}{*}{\cite{worrall_hnets_2016}} \\
         
        HNets / 2nd order & $SO(2)$ & $98.78\pm0.00$ & $95.33\pm0.20$ & $81.65\pm0.80$ & $\mathbf{98.72}\pm0.06$ & $94.25\pm0.37$ & $76.94\pm1.29$ & $96.90\pm0.16$ & $91.11\pm0.94$ & $64.20\pm9.09$ & \\

        \hline
        \multirow{4}{*}{B-CNNs / $k_{\rm max}$} & $SO(2)$ & $98.92\pm0.03$ & $95.98\pm0.23$ & $80.67\pm1.71$ & $98.51\pm0.09$ & $95.06\pm0.27$ & $76.40\pm1.32$ & $95.22\pm0.86$ & $90.61\pm1.27$ & $65.76\pm1.82$ & \multirow{5}{*}{Our work} \\
         & $O(2)$ & $97.76\pm0.07$ & $93.45\pm0.06$ & $69.73\pm5.20$ & $97.09\pm0.06$ & $91.92\pm0.52$ & $68.83\pm2.00$ & $92.07\pm0.23$ & $86.40\pm1.56$ & $68.60\pm3.95$ & \\
         & $SO(2)$+ & $98.95\pm0.05$ & $95.82\pm0.09$ & $80.30\pm2.94$ & $98.51\pm0.07$ & $\mathbf{95.23}\pm0.31$ & $78.00\pm1.51$ & $95.85\pm0.73$ & $\mathbf{92.02}\pm1.24$ & $67.81\pm5.51$ & \\
         & $O(2)$+ & $97.87\pm0.01$ & $93.14\pm0.37$ & $72.75\pm1.22$ & $97.06\pm0.02$ & $92.01\pm0.19$ & $72.48\pm2.08$ & $92.15\pm0.72$ & $80.90\pm8.40$ & $55.02\pm17.45$ &  \\
         
        B-CNNs / $k_{\rm max} / 2$ & $SO(2)$ & $\mathbf{99.00}\pm0.03$ & $\mathbf{96.56}\pm0.35$ & $\mathbf{82.70}\pm2.08$ & $\mathbf{98.89}\pm0.05$ & $\mathbf{96.33}\pm0.27$ & $\mathbf{82.05}\pm0.26$ & $\mathbf{97.34}\pm0.15$ & $\mathbf{94.41}\pm0.52$ & $\mathbf{79.67}\pm2.57$ & \\
        \hline
        
    \end{tabular}%
    \end{adjustbox}
    \caption{Classification accuracy obtained for the different methods on the MNIST-rot and MNIST data sets, with and without data augmentation. The \textit{High}, \textit{Inter.} and \textit{Low} data regimes correspond to $12,000$, $1,200$ and $120$ images for the training set (same percentage of samples of each target class), respectively. For each column, bold is used to highlight the top-3 performing models. Accuracy and standard deviation are assessed using $3$ independent runs. The corresponding training curves are presented in Figure~\ref{sec6:figure:mnist_results}.}
    \label{sec6:table:mnist_results}
\end{sidewaystable}

\begin{figure*}[hp]
    \centering
    \begin{subfigure}{1.0\textwidth}
        \centering
        \includegraphics[width=0.99\columnwidth]{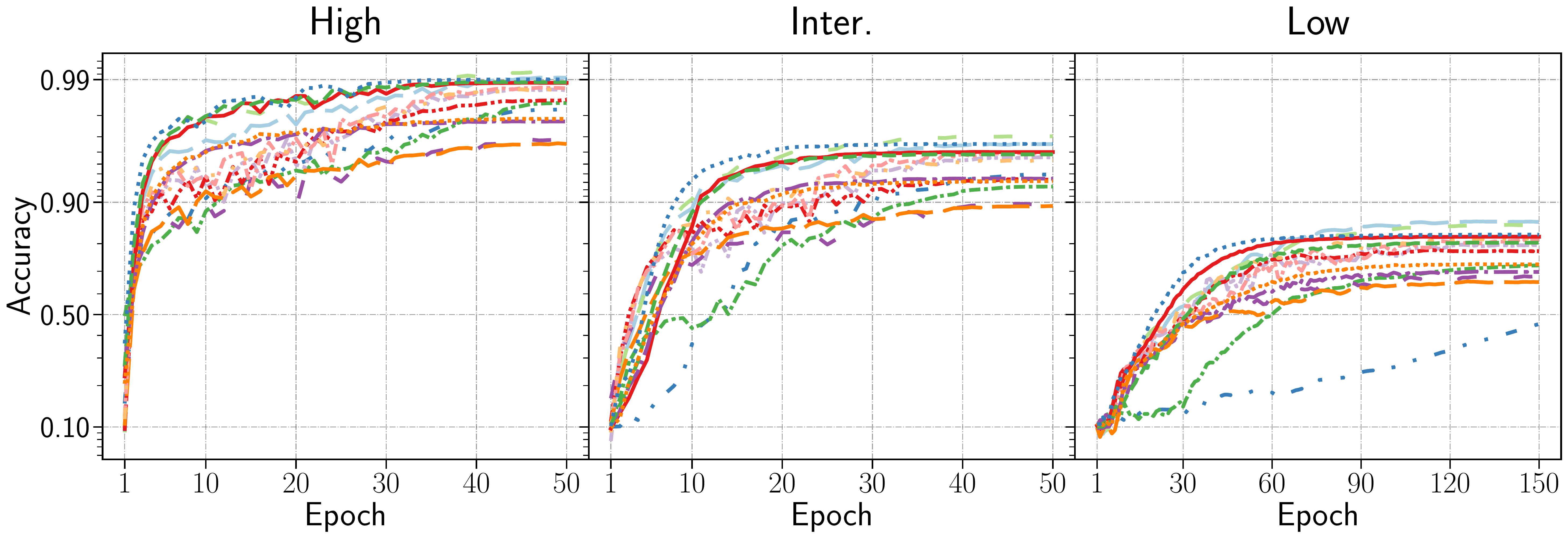}
        \caption{MNIST-rot with augmentation}
        \vspace*{0.5cm}
    \end{subfigure}
    ~
    \begin{subfigure}{1.0\textwidth}
        \centering
        \includegraphics[width=0.99\columnwidth]{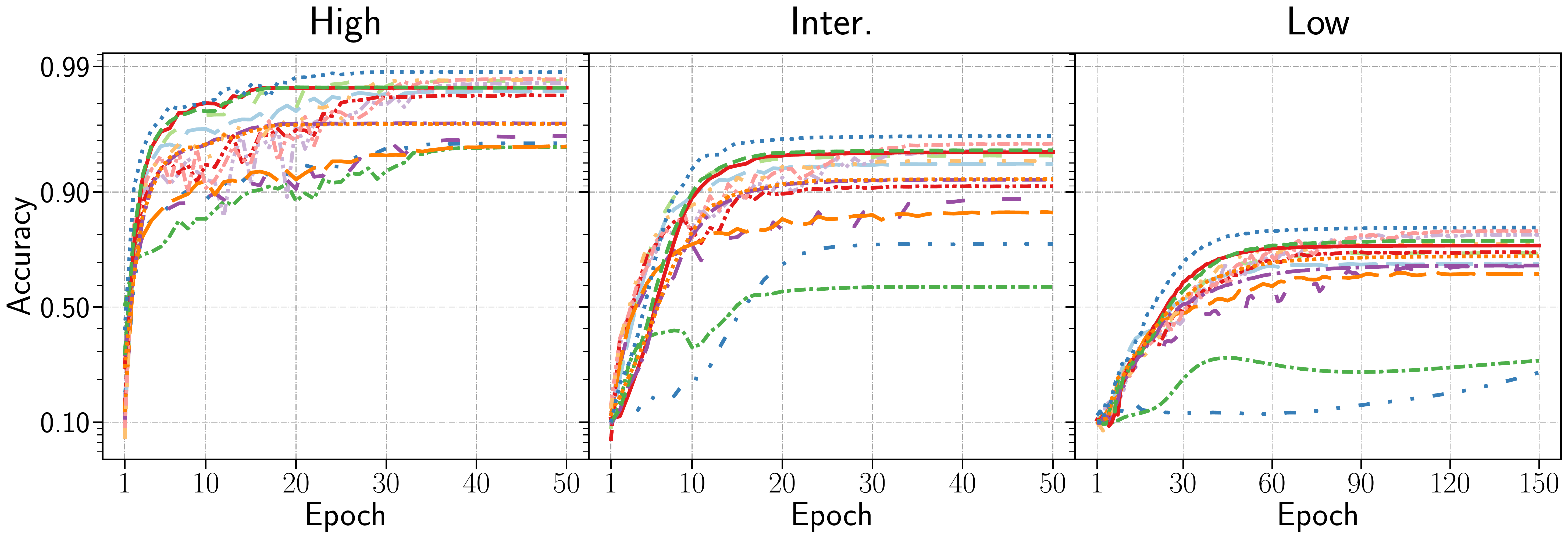}
        \caption{MNIST-rot without augmentation}
        \vspace*{0.5cm}
    \end{subfigure}
    ~
    \begin{subfigure}{1.0\textwidth}
        \centering
        \includegraphics[width=0.99\columnwidth]{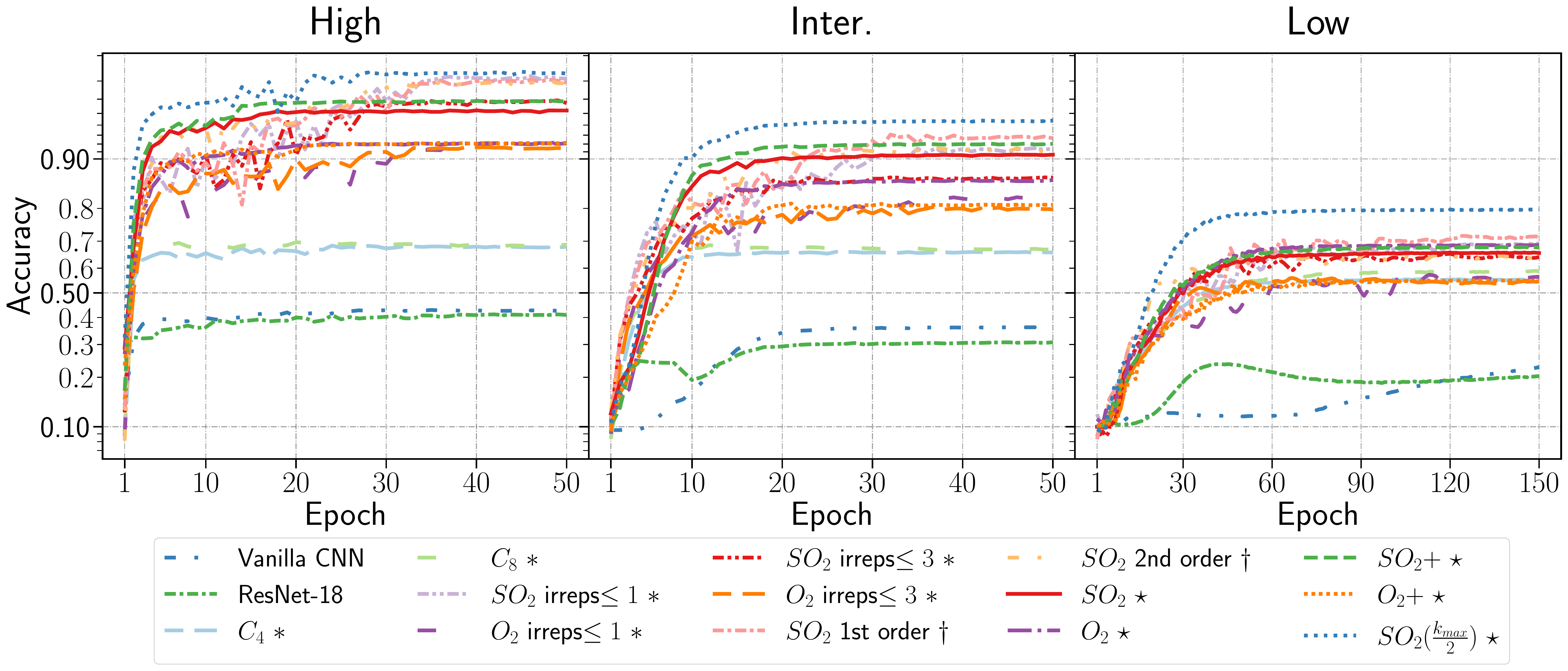}
        \caption{MNIST}
    \end{subfigure}
    
    \caption{Learning curves obtained for the different methods on the MNIST-rot and MNIST data sets. Those learning curves are averaged over $3$ independent runs. The legend is the same for all the graphs. The symbols $\ast$, $\dagger$ and $\star$ refer to the use of $E(2)$-CNNs, HNets and B-CNNs, respectively.}
    \label{sec6:figure:mnist_results}
\end{figure*}

From a general point of view, one can observe that a proper use of $E(2)$-CNNs, HNets and B-CNNs can lead to better performances than vanilla CNNs, even if the number of parameters is much smaller in the case of equivariant models ($\pm 115,000$ parameters against $\pm 11,000,000$ for ResNet-18). Vanilla CNNs techniques are only able to compete with equivariant models in high data settings, and when performing data augmentation (first column). This clearly highlights the fact that vanilla CNNs are sensitive to quality and the amount of data in order to learn the invariances. Furthermore, even in the most favorable situation for vanilla CNNs,  convergence is much slower than for equivariant models.

Next, by taking a closer look at the equivariant models, it appears that the $E(2)$-CNNs that use the straightforward discrete groups $C_4$ and $C_8$ perform quite well, and are even the best performing models when used along with data augmentation (first 3 columns). However, performances fall a little on the MNIST-rot data set without data augmentation (middle 3 columns), and becomes really bad compared to the $(S-)O(2)$ equivariant models when trained on the MNIST data set, when they cannot see rotated versions of the digits (last 3 columns). Even if those models are better than vanilla CNNs, it also appears that they still rely on training to learn really continuous rotation invariance, which was something expected.  It is also interesting to mention that using a symmetry group that is not appropriate may be worse than not using any symmetry group at all. For example, in high data setting with data augmentation, vanilla CNNs perform better than $O(2)$-based models.

Finally, almost all the $(S-)O(2)$ equivariant models seem to achieve very similar performances on MNIST-rot, with and without data augmentation. Nonetheless, one can observe that the $SO(2)$ B-CNNs with the strong cutoff policy is always in the top-3 performing models, and achieve significantly better results than all the other models when only trained on MNIST (last 3 columns). This highlights the fact that the $SO(2)$ invariance achieved by design in the B-CNNs is stronger than the one achieved by other models, allowing generalization to rotated versions of digits, even if none of those are observed during training.

\subsection{Results on MNIST(-rot)-back}

Table~\ref{sec6:table:mnistback_results} presents the results obtained on the MNIST(-rot)-back data sets, which are variants of the MNIST data set with randomly rotated digits and black and white images as background. For the sake of completeness, Figure~\ref{sec6:figure:mnistback_results} also presents all the corresponding training curves.

\begin{sidewaystable}[hp]
    \centering
    \begin{adjustbox}{width=\columnwidth}
    \begin{tabular}{ |l||l|l|l|l||l|l|l||l|l|l|c| }
    
        \cline{3-11}
        \multicolumn{1}{c}{} & \multicolumn{1}{c}{} & \multicolumn{6}{|c||}{\textbf{MNIST-rot-back}} & \multicolumn{3}{|c|}{\textbf{MNIST-back}} \\
        \cline{3-11}
        \multicolumn{1}{c}{} & \multicolumn{1}{c}{} & \multicolumn{3}{|c||}{\textbf{With data aug.}} & \multicolumn{3}{|c||}{\textbf{Without data aug.}} & \multicolumn{3}{|c|}{\textbf{No rotation during training}} \\
        \hline
        \textbf{Method} & \textbf{Group} & \multicolumn{1}{|c|}{High} & \multicolumn{1}{|c|}{Inter.} & \multicolumn{1}{|c||}{Low} & \multicolumn{1}{|c|}{High} & \multicolumn{1}{|c|}{Inter.} & \multicolumn{1}{|c||}{Low} & \multicolumn{1}{|c|}{High} & \multicolumn{1}{|c|}{Inter.} & \multicolumn{1}{|c|}{Low} & \textbf{Reference} \\

        \hline
        Vanilla CNN & $\{e\}$ & $88.43\pm0.24$ & $70.49\pm0.35$ & $23.31\pm2.71$ & $76.33\pm0.65$ & $32.39\pm0.85$ & $14.77\pm0.57$ & $34.59\pm0.24$ & $25.73\pm0.45$ & $14.58\pm1.53$ & - \\
        ResNet-18 & $\{e\}$ & $87.34\pm0.09$ & $60.59\pm0.77$ & $30.32\pm0.48$ & $68.51\pm0.31$ & $33.80\pm0.64$ & $14.36\pm0.37$ & $32.56\pm0.07$ & $26.03\pm0.42$ & $13.65\pm1.30$ & \cite{resnet-18} \\

        \hline
        \multirow{2}{*}{$E(2)$-CNN / regular} & $C_4$ & $89.52\pm0.32$ & $75.62\pm0.27$ & $31.33\pm3.03$ & $84.46\pm0.52$ & $51.61\pm1.36$ & $24.17\pm1.07$ & $45.37\pm0.65$ & $38.46\pm0.99$ & $21.03\pm1.42$ & \multirow{6}{*}{\cite{E2_equiv}} \\
         & $C_8$ & $\mathbf{89.93}\pm0.23$ & $\mathbf{77.87}\pm1.27$ & $\mathbf{34.68}\pm5.13$ & $\mathbf{86.59}\pm0.42$ & $56.00\pm5.02$ & $23.08\pm1.16$ & $45.80\pm1.78$ & $39.47\pm1.42$ & $24.35\pm2.39$ & \\
         
         \multirow{2}{*}{$E(2)$-CNN / $\text{irreps} \leq 1$} & $SO(2)$ & $81.97\pm0.07$ & $60.06\pm0.81$ & $26.43\pm2.33$ & $79.15\pm1.41$ & $57.02\pm0.83$ & $27.04\pm1.49$ & $77.93\pm0.35$ & $56.17\pm2.40$ & $25.31\pm2.18$ & \\
         & $O(2)$ & $74.71\pm0.48$ & $48.66\pm2.25$ & $23.78\pm1.62$ & $69.75\pm0.31$ & $44.30\pm1.21$ & $20.88\pm0.78$ & $64.38\pm0.59$ & $42.18\pm1.00$ & $21.02\pm0.48$ & \\

         \multirow{2}{*}{$E(2)$-CNN / $\text{irreps} \leq 3$} & $SO(2)$ & $82.17\pm0.86$ & $61.16\pm1.08$ & $32.21\pm0.85$ & $79.14\pm1.31$ & $52.18\pm1.04$ & $25.44\pm2.25$ & $74.67\pm0.55$ & $52.49\pm2.50$ & $25.02\pm2.59$ & \\
         & $O(2)$ & $76.99\pm0.43$ & $54.38\pm1.92$ & $25.55\pm1.85$ & $70.74\pm1.29$ & $43.20\pm0.95$ & $20.99\pm1.25$ & $65.19\pm1.05$ & $38.69\pm0.83$ & $20.39\pm1.46$ & \\
        \hline

        \hline
        HNets / 1st order & $SO(2)$ & $82.24\pm0.29$ & $60.75\pm0.88$ & $26.72\pm1.63$ & $81.93\pm0.65$ & $57.36\pm1.13$ & $24.24\pm1.00$ & $78.89\pm0.56$ & $53.40\pm3.22$ & $21.78\pm2.25$ & \multirow{2}{*}{\cite{worrall_hnets_2016}} \\
         
        HNets / 2nd order & $SO(2)$ & $83.11\pm0.37$ & $62.52\pm0.26$ & $29.53\pm3.26$ & $81.91\pm0.48$ & $54.68\pm1.11$ & $26.17\pm1.24$ & $78.79\pm0.37$ & $52.81\pm2.11$ & $23.52\pm0.75$ & \\

        \hline
        \multirow{4}{*}{B-CNNs / $k_{\rm max}$} & $SO(2)$ & $\mathbf{90.74}\pm0.30$ & $\mathbf{80.44}\pm0.37$ & $\mathbf{37.84}\pm5.11$ & $\mathbf{89.28}\pm0.38$ & $\mathbf{77.45}\pm0.75$ & $31.69\pm2.23$ & $\mathbf{84.43}\pm0.39$ & $\mathbf{74.72}\pm0.24$ & $\mathbf{35.82}\pm4.86$ & \multirow{5}{*}{Our work} \\
         & $O(2)$ & $88.22\pm0.17$ & $76.14\pm0.64$ & $32.63\pm5.99$ & $86.40\pm0.42$ & $72.79\pm0.99$ & $25.99\pm3.21$ & $79.71\pm0.23$ & $70.69\pm2.07$ & $\mathbf{39.25}\pm2.97$ & \\
         & $SO(2)$+ & $\mathbf{90.82}\pm0.18$ & $\mathbf{80.29}\pm1.21$ & $\mathbf{36.54}\pm2.59$ & $\mathbf{88.93}\pm0.13$ & $\mathbf{77.51}\pm0.57$ & $\mathbf{35.17}\pm2.35$ & $\mathbf{84.59}\pm0.60$ & $\mathbf{74.34}\pm0.65$ & $\mathbf{34.10}\pm2.13$ & \\
         & $O(2)$+ & $88.11\pm0.19$ & $74.49\pm1.15$ & $29.94\pm2.22$ & $86.19\pm0.34$ & $72.75\pm0.27$ & $\mathbf{31.95}\pm1.86$ & $79.37\pm0.54$ & $68.60\pm1.70$ & $33.95\pm3.73$ &  \\
         
        B-CNNs / $k_{\rm max} / 2$ & $SO(2)$ & $87.35\pm0.16$ & $76.48\pm0.20$ & $30.72\pm2.06$ & $86.36\pm0.05$ & $\mathbf{75.35}\pm0.18$ & $\mathbf{36.39}\pm1.10$ & $\mathbf{84.05}\pm0.14$ & $\mathbf{75.33}\pm0.77$ & $25.65\pm2.98$ & \\
        \hline
        
    \end{tabular}%
    \end{adjustbox}
    \caption{Classification accuracy obtained for the different methods on the MNIST-rot-back and MNIST-back data sets, with and without using data augmentation. The \textit{High}, \textit{Inter.} and \textit{Low} data regimes correspond to $12,000$, $1,200$ and $120$ images for the training set (same percentage of samples of each target class), respectively. For each column, bold is used to highlight the top-3 performing models. Accuracy and standard deviation are assessed using $3$ independent runs. The corresponding training curves are presented in Figure~\ref{sec6:figure:mnistback_results}.}
    \label{sec6:table:mnistback_results}
\end{sidewaystable}

\begin{figure*}[hp]
    \centering
    \begin{subfigure}{1.0\textwidth}
        \centering
        \includegraphics[width=0.99\columnwidth]{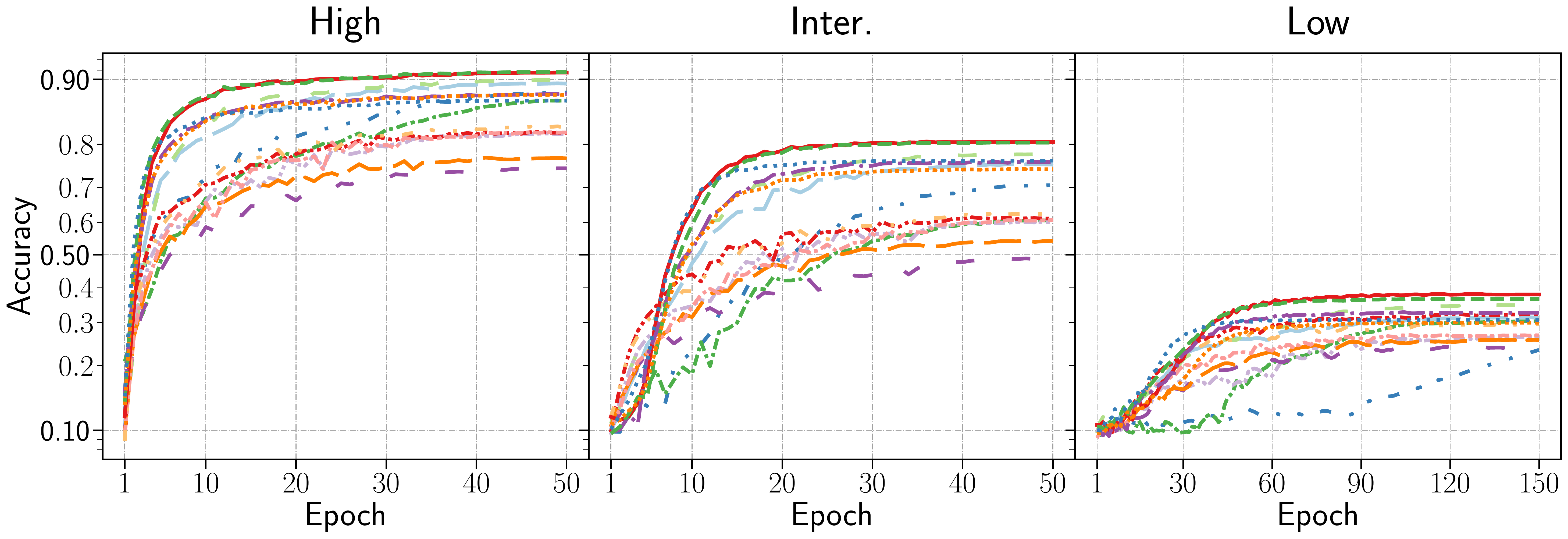}
        \caption{MNIST-rot-back with augmentation}
        \vspace*{0.5cm}
    \end{subfigure}
    ~
    \begin{subfigure}{1.0\textwidth}
        \centering
        \includegraphics[width=0.99\columnwidth]{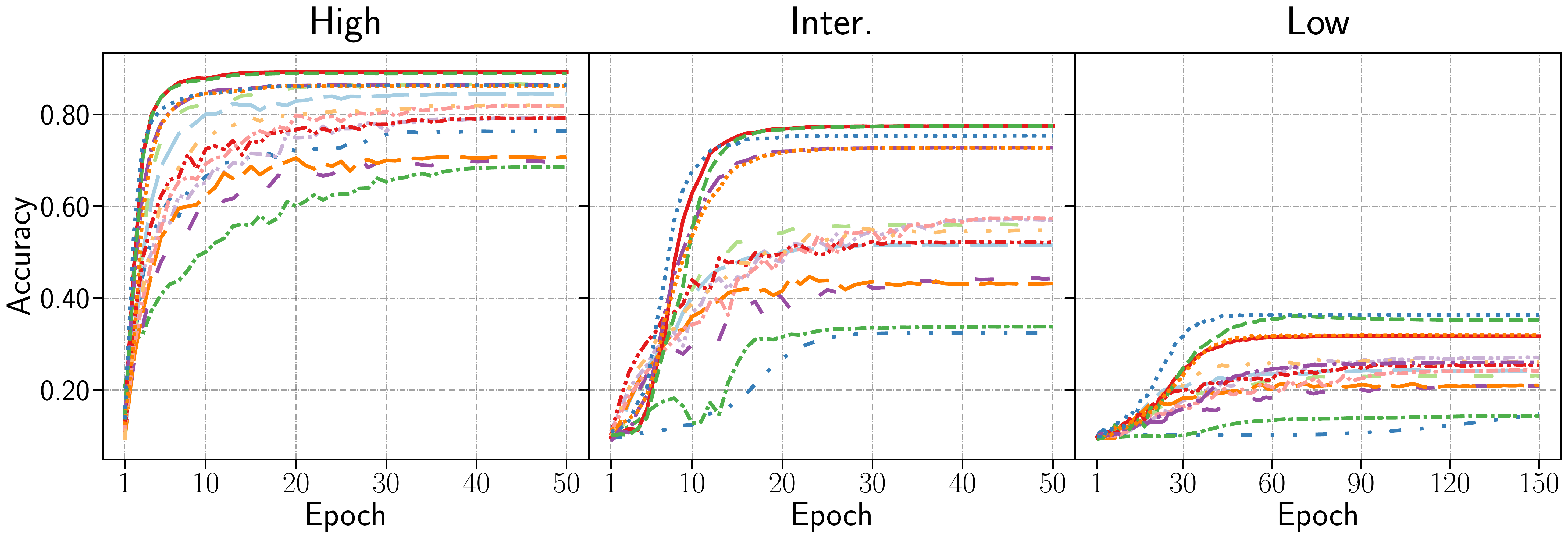}
        \caption{MNIST-rot-back without augmentation}
        \vspace*{0.5cm}
    \end{subfigure}
    ~
    \begin{subfigure}{1.0\textwidth}
        \centering
        \includegraphics[width=0.99\columnwidth]{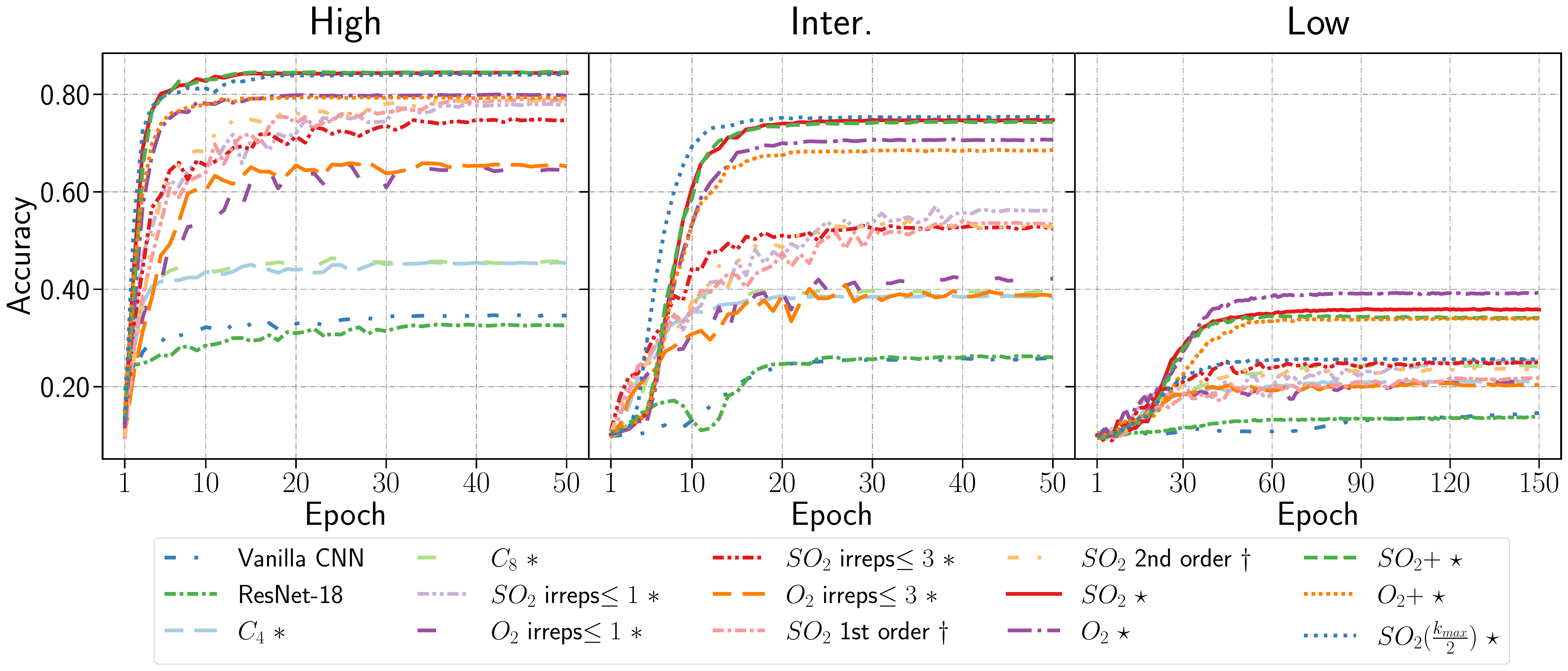}
        \caption{MNIST-back}
    \end{subfigure}
    
    \caption{Learning curves obtained for the different methods on the MNIST-back and MNIST-rot-back data sets. Those learning curves are averaged over $3$ independent runs. The legend is the same for all the graphs. The symbols $\ast$, $\dagger$ and $\star$ refer to the use of $E(2)$-CNNs, HNets and B-CNNs, respectively.}
    \label{sec6:figure:mnistback_results}
\end{figure*}

For vanilla CNNs, the conclusions are the same as for MNIST(-rot). Performances quickly drop when using a smaller amount of data, or when data augmentation is not performed properly.

Now, by opposition to the observation for the MNIST(-rot) data set, it appears that B-CNNs are from a general point of view significantly better than other equivariant models, in each setup. For the high data setting on MNIST-back (without seeing rotated images during training), Figure~\ref{sec6:figure:mnistback_results} clearly reveals several groups of plateau corresponding to vanilla CNNs, discrete $E(2)$-CNNs, $O(2)$ $E(2)$-CNNs, $SO(2)$ $E(2)$-CNNs and HNets, and finally all the B-CNNs models with the $SO(2)$ ones being in top of them. Again, in addition to the better performances, one should also highlight a faster convergence for the B-CNNs.

\subsection{Results on Galaxy10 DECals}

Table~\ref{sec6:table:Galaxy_results} presents the results obtained on the Galaxy10 DECals data set. For the sake of completeness, Figure~\ref{sec6:figure:Galaxy_results} also presents all the corresponding training curves.

\begin{sidewaystable}[hp]
    \centering
    \begin{adjustbox}{width=\columnwidth}
    \begin{tabular}{ |l||l|l|l|l||l|l|l|c| }
    
        \cline{3-8}
        \multicolumn{1}{c}{} & \multicolumn{1}{c}{} & \multicolumn{6}{|c|}{\textbf{Galaxy10 DECals}} \\
        \cline{3-8}
        \multicolumn{1}{c}{} & \multicolumn{1}{c}{} & \multicolumn{3}{|c||}{\textbf{With data aug.}} & \multicolumn{3}{|c|}{\textbf{Without data aug.}} \\
        \hline
        \textbf{Method} & \textbf{Group} & \multicolumn{1}{|c|}{High} & \multicolumn{1}{|c|}{Inter.} & \multicolumn{1}{|c||}{Low} & \multicolumn{1}{|c|}{High} & \multicolumn{1}{|c|}{Inter.} & \multicolumn{1}{|c|}{Low} & \textbf{Reference} \\

        \hline
        Vanilla CNN & $\{e\}$ & $83.15\pm0.38$ & $72.32\pm0.45$ & $46.38\pm4.35$ & $77.28\pm0.52$ & $66.59\pm0.20$ & $40.56\pm1.00$ & - \\
        ResNet-18 & $\{e\}$ & $85.66\pm0.67$ & $72.81\pm0.60$ & $43.20\pm0.48$ & $71.06\pm0.31$ & $38.75\pm0.91$ & $33.21\pm0.58$ & \cite{resnet-18} \\

        \hline
        \multirow{2}{*}{$E(2)$-CNN / regular} & $C_4$ & $\mathbf{87.08}\pm0.50$ & $\mathbf{80.36}\pm0.57$ & $\mathbf{53.69}\pm0.34$ & $83.92\pm0.19$ & $76.35\pm0.41$ & $51.29\pm1.97$ & \multirow{6}{*}{\cite{E2_equiv}} \\
         & $C_8$ & $\mathbf{87.70}\pm0.43$ & $\mathbf{80.90}\pm0.33$ & $\mathbf{57.33}\pm0.77$ & $\mathbf{84.60}\pm0.34$ & $\mathbf{78.76}\pm0.38$ & $\mathbf{57.92}\pm0.80$ & \\
         
         \multirow{2}{*}{$E(2)$-CNN / $\text{irreps} \leq 1$} & $SO(2)$ & $85.28\pm0.42$ & $75.42\pm0.41$ & $48.33\pm2.62$ & $82.98\pm0.24$ & $70.79\pm0.43$ & $40.64\pm0.83$ & \\
         & $O(2)$ & $83.80\pm0.67$ & $74.12\pm0.80$ & $\mathbf{53.91}\pm0.48$ & $83.02\pm0.73$ & $69.76\pm0.83$ & $45.14\pm2.96$ & \\

         \multirow{2}{*}{$E(2)$-CNN / $\text{irreps} \leq 3$} & $SO(2)$ & $84.99\pm0.18$ & $74.23\pm0.17$ & $45.85\pm0.46$ & $81.97\pm0.37$ & $67.64\pm0.27$ & $26.18\pm2.29$ & \\
         & $O(2)$ & $84.10\pm0.41$ & $73.42\pm0.69$ & $50.55\pm2.13$ & $81.74\pm0.29$ & $65.25\pm2.36$ & $25.07\pm1.75$ & \\
        \hline

        \hline
        HNets / 1st order & $SO(2)$ & $\mathbf{85.77}\pm0.81$ & $75.37\pm0.72$ & $48.25\pm0.63$ & $83.55\pm0.26$ & $72.00\pm0.27$ & $42.98\pm0.92$ & \multirow{2}{*}{\cite{worrall_hnets_2016}} \\
         
        HNets / 2nd order & $SO(2)$ & $85.36\pm0.01$ & $73.70\pm0.49$ & $48.65\pm0.72$ & $82.23\pm0.55$ & $70.25\pm0.75$ & $32.48\pm1.87$ & \\

        \hline
        \multirow{4}{*}{B-CNNs / $k_{\rm max}$} & $SO(2)$ & $85.08\pm0.38$ & $76.95\pm0.68$ & $51.33\pm2.46$ & $83.99\pm0.40$ & $76.28\pm0.50$ & $\mathbf{51.99}\pm3.17$ & \multirow{5}{*}{Our work} \\
         & $O(2)$ & $84.65\pm0.55$ & $77.33\pm0.56$ & $50.60\pm1.17$ & $\mathbf{84.82}\pm0.30$ & $\mathbf{76.64}\pm0.98$ & $49.31\pm1.46$ & \\
         & $SO(2)$+ & $85.11\pm0.19$ & $77.53\pm0.15$ & $47.75\pm2.30$ & $83.80\pm0.19$ & $75.33\pm1.02$ & $51.36\pm3.41$ & \\
         & $O(2)$+ & $84.70\pm0.11$ & $76.95\pm1.34$ & $51.12\pm3.78$ & $84.40\pm0.12$ & $76.60\pm0.41$ & $49.04\pm3.14$ & \\
         
        B-CNNs / $k_{\rm max} / 2$ & $SO(2)$ & $85.25\pm0.48$ & $\mathbf{78.25}\pm0.32$ & $51.92\pm1.84$ & $\mathbf{84.67}\pm0.78$ & $\mathbf{77.87}\pm0.20$ & $\mathbf{54.84}\pm3.21$ & \\
        \hline
        
    \end{tabular}%
    \end{adjustbox}
    \caption{Classification accuracy obtained for the different methods on the Galaxy10 DECals data set, with and without using data augmentation. The \textit{High}, \textit{Inter.} and \textit{Low} data regimes correspond to $14,188$, $1,418$ and $141$ images for the training set (same percentage of samples of each target class), respectively. For each column, bold is used to highlight the top-3 performing models. Accuracy and standard deviation are assessed using $3$ independent runs. The corresponding training curves are presented in Figure~\ref{sec6:figure:Galaxy_results}.}
    \label{sec6:table:Galaxy_results}
\end{sidewaystable}

\begin{figure*}[hp]
    \centering
    \begin{subfigure}{1.0\textwidth}
        \centering
        \includegraphics[width=0.99\columnwidth]{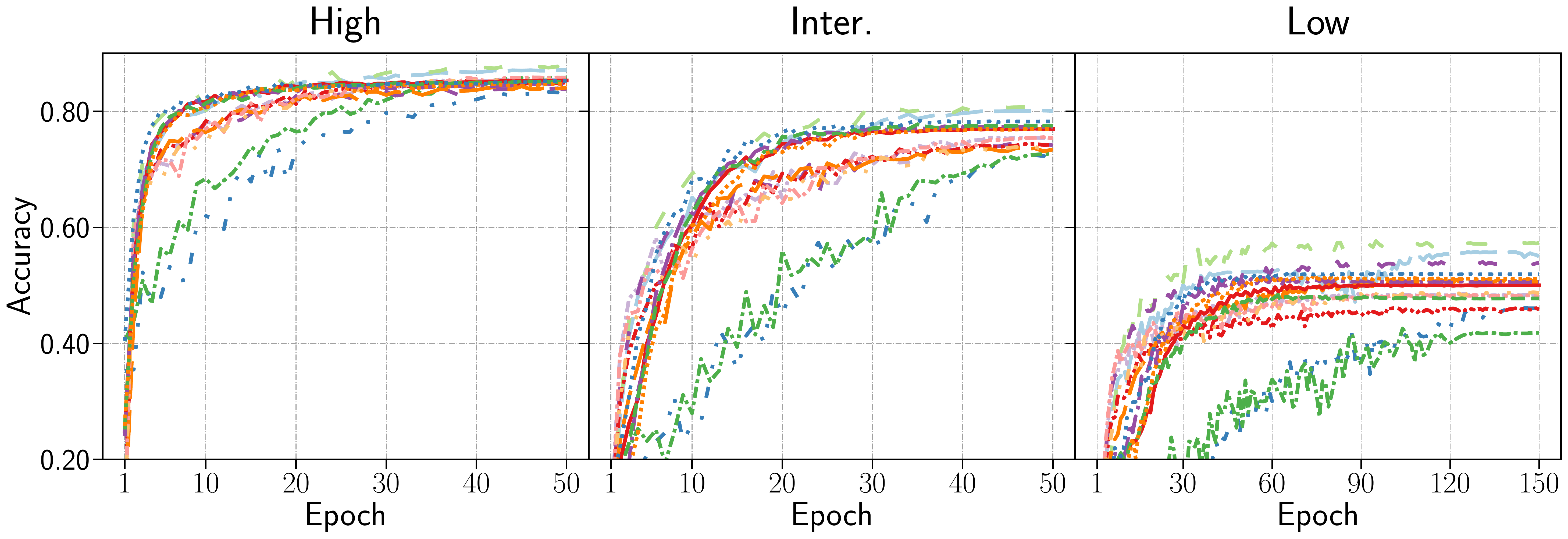}
        \caption{Galaxy10 DECals with augmentation}
        \vspace*{0.5cm}
    \end{subfigure}
    ~
    \begin{subfigure}{1.0\textwidth}
        \centering
        \includegraphics[width=0.99\columnwidth]{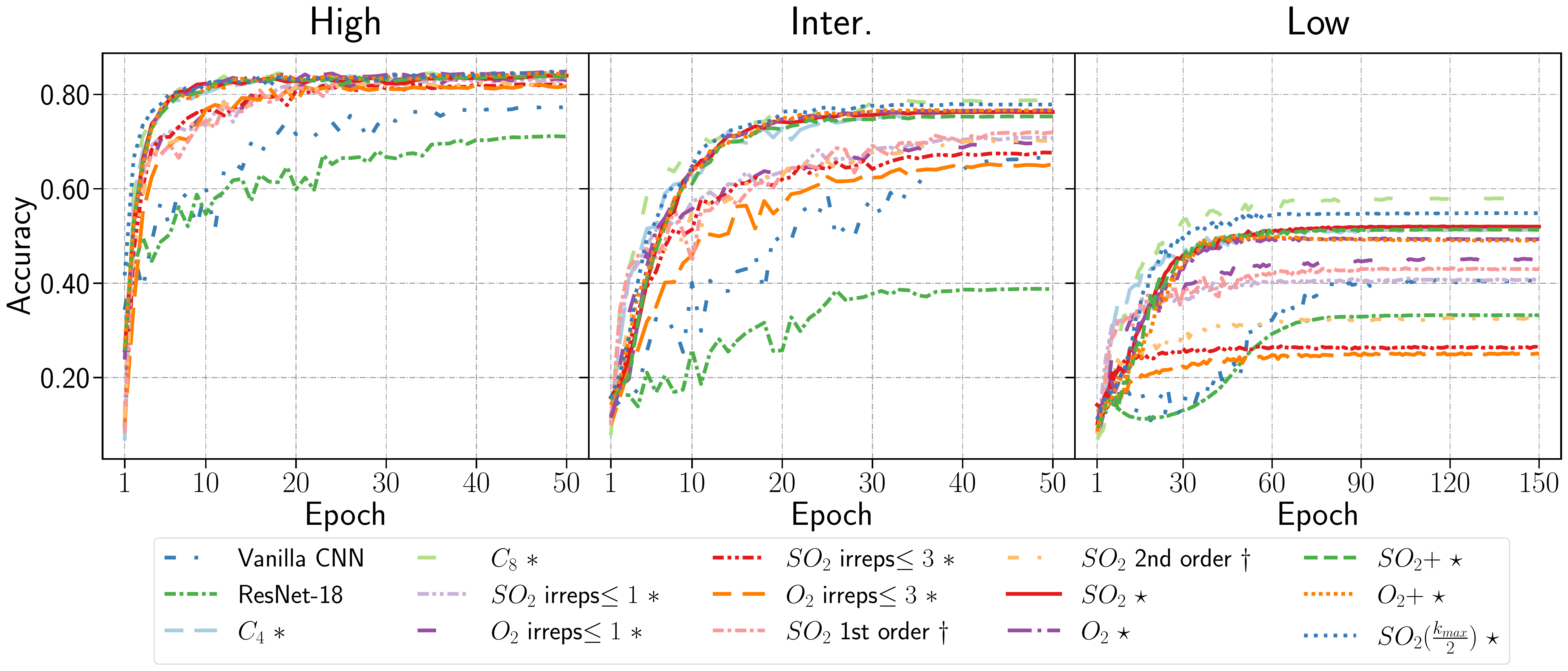}
        \caption{Galaxy10 DECals without augmentation}
    \end{subfigure}
    
    \caption{Learning curves obtained for the different methods on the Galaxy10 DECals data set. Those learning curves are averaged over $3$ independent runs. The legend is the same for all the graphs. The symbols $\ast$, $\dagger$ and $\star$ refer to the use of $E(2)$-CNNs, HNets and B-CNNs, respectively.}
    \label{sec6:figure:Galaxy_results}
\end{figure*}

From those results, one can see again that using the symmetry group $C_4$ and $C_8$ already allow $E(2)$-CNNs to achieve very good results. For Galaxy10 DECals, this observation stands for each setup, even without using data augmentation.

$SO(2)$-based B-CNNs with the strong cutoff policy is again one of the best performing model when used without data augmentation. 

It is interesting to see that using the $O(2)$ group does not always lead to better performances compared to results obtained using $SO(2)$, despite the fact planar reflections are meaningful for this application. 

\subsection{Results on Malaria}

Table~\ref{sec6:table:Malaria_results} presents the results obtained on the Malaria data set. For the sake of completeness, Figure~\ref{sec6:figure:Malaria_results} also presents all the corresponding training curves. Interestingly, for this data set, B-CNNs seem to perform slightly worse than other methods. However, one can see that the ResNet-18 is among the best performing models in high data setting with data augmentation and remains competitive in other settings. This highlights the fact that $(S-)O(2)$ invariance may be less useful than for the other tested applications. Still, performances are most of the time very close to each others. 

\begin{sidewaystable}[hp]
    \centering
    \begin{adjustbox}{width=\columnwidth}
    \begin{tabular}{ |l||l|l|l|l||l|l|l|c| }
    
        \cline{3-8}
        \multicolumn{1}{c}{} & \multicolumn{1}{c}{} & \multicolumn{6}{|c|}{\textbf{Malaria}} \\
        \cline{3-8}
        \multicolumn{1}{c}{} & \multicolumn{1}{c}{} & \multicolumn{3}{|c||}{\textbf{With data aug.}} & \multicolumn{3}{|c|}{\textbf{Without data aug.}} \\
        \hline
        \textbf{Method} & \textbf{Group} & \multicolumn{1}{|c|}{High} & \multicolumn{1}{|c|}{Inter.} & \multicolumn{1}{|c||}{Low} & \multicolumn{1}{|c|}{High} & \multicolumn{1}{|c|}{Inter.} & \multicolumn{1}{|c|}{Low} & \textbf{Reference} \\

        \hline
        Vanilla CNN & $\{e\}$ & $97.27\pm0.11$ & $96.00\pm0.13$ & $92.50\pm1.27$ & $95.89\pm0.15$ & $94.20\pm0.27$ & $70.23\pm2.86$ & - \\
        ResNet-18 & $\{e\}$ & $\mathbf{97.30}\pm0.11$ & $95.81\pm0.17$ & $91.29\pm1.65$ & $96.10\pm0.14$ & $93.47\pm0.37$ & $63.03\pm1.12$ & \cite{resnet-18} \\

        \hline
        \multirow{2}{*}{$E(2)$-CNN / regular} & $C_4$ & $97.18\pm0.16$ & $95.98\pm0.19$ & $91.37\pm0.26$ & $96.45\pm0.17$ & $94.94\pm0.26$ & $80.37\pm2.46$ & \multirow{6}{*}{\cite{E2_equiv}} \\
         & $C_8$ & $97.12\pm0.31$ & $\mathbf{96.10}\pm0.12$ & $92.62\pm0.57$ & $96.48\pm0.10$ & $94.90\pm0.34$ & $88.41\pm0.69$ & \\
         
         \multirow{2}{*}{$E(2)$-CNN / $\text{irreps} \leq 1$} & $SO(2)$ & $\mathbf{97.33}\pm0.21$ & $\mathbf{96.18}\pm0.27$ & $\mathbf{92.96}\pm1.27$ & $\mathbf{96.80}\pm0.12$ & $\mathbf{95.90}\pm0.23$ & $\mathbf{94.31}\pm0.22$ & \\
         & $O(2)$ & $97.00\pm0.44$ & $95.80\pm0.52$ & $\mathbf{93.16}\pm1.29$ & $96.54\pm0.22$ & $\mathbf{95.79}\pm0.17$ & $\mathbf{93.88}\pm0.62$ & \\

         \multirow{2}{*}{$E(2)$-CNN / $\text{irreps} \leq 3$} & $SO(2)$ & $97.07\pm0.35$ & $95.54\pm0.67$ & $91.89\pm1.82$ & $96.66\pm0.24$ & $95.64\pm0.15$ & $91.09\pm2.46$ & \\
         & $O(2)$ & $96.89\pm0.47$ & $95.32\pm0.89$ & $92.68\pm1.54$ & $96.64\pm0.19$ & $95.53\pm0.23$ & $91.24\pm1.95$ & \\
        \hline

        \hline
        HNets / 1st order & $SO(2)$ & $\mathbf{97.43}\pm0.17$ & $\mathbf{96.24}\pm0.32$ & $\mathbf{93.58}\pm1.15$ & $\mathbf{96.88}\pm0.24$ & $\mathbf{96.15}\pm0.13$ & $\mathbf{94.31}\pm0.34$ & \multirow{2}{*}{\cite{worrall_hnets_2016}} \\
         
        HNets / 2nd order & $SO(2)$ & $97.15\pm0.32$ & $95.71\pm0.61$ & $92.06\pm2.08$ & $\mathbf{96.92}\pm0.13$ & $95.80\pm0.06$ & $90.49\pm1.70$ & \\

        \hline
        \multirow{4}{*}{B-CNNs / $k_{\rm max}$} & $SO(2)$ & $96.76\pm0.27$ & $95.53\pm0.12$ & $92.33\pm1.71$ & $96.29\pm0.23$ & $95.21\pm0.12$ & $82.35\pm7.14$ & \multirow{5}{*}{Our work} \\
         & $O(2)$ & $96.89\pm0.11$ & $95.50\pm0.18$ & $87.99\pm7.37$ & $96.35\pm0.08$ & $95.44\pm0.25$ & $90.62\pm4.30$ & \\
         & $SO(2)$+ & $96.81\pm0.08$ & $95.36\pm0.12$ & $92.33\pm0.76$ & $96.18\pm0.21$ & $95.04\pm0.06$ & $89.34\pm4.34$ & \\
         & $O(2)$+ & $96.86\pm0.26$ & $95.48\pm0.33$ & $92.09\pm0.69$ & $96.58\pm0.06$ & $95.47\pm0.35$ & $89.50\pm4.63$ & \\
         
        B-CNNs / $k_{\rm max} / 2$ & $SO(2)$ & $96.97\pm0.14$ & $95.42\pm0.10$ & $92.26\pm0.92$ & $96.61\pm0.12$ & $95.37\pm0.30$ & $91.31\pm2.00$ & \\
        \hline
        
    \end{tabular}
    \end{adjustbox}
    \caption{Classification accuracy obtained for the different methods on the Malaria data set, with and without using data augmentation. The \textit{High}, \textit{Inter.} and \textit{Low} data regimes correspond to $22,046$, $2,204$ and $220$ images for the training set (same percentage of samples of each target class), respectively. For each column, bold is used to highlight the top-3 performing models. Accuracy and standard deviation are assessed using $3$ independent runs. The corresponding training curves are presented in Figure~\ref{sec6:figure:Malaria_results}.}
    \label{sec6:table:Malaria_results}
\end{sidewaystable}

\begin{figure*}[hp]
    \centering
    \begin{subfigure}{1.0\textwidth}
        \centering
        \includegraphics[width=0.99\columnwidth]{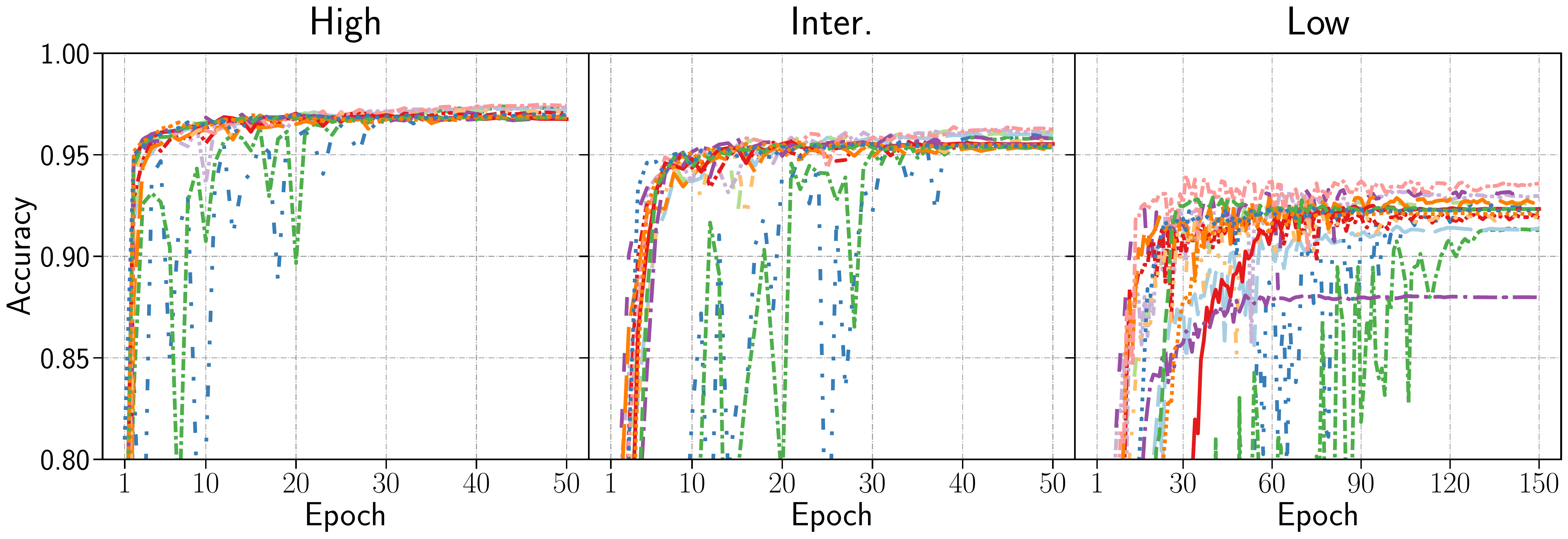}
        \caption{Malaria with augmentation}
        \vspace*{0.5cm}
    \end{subfigure}
    ~
    \begin{subfigure}{1.0\textwidth}
        \centering
        \includegraphics[width=0.99\columnwidth]{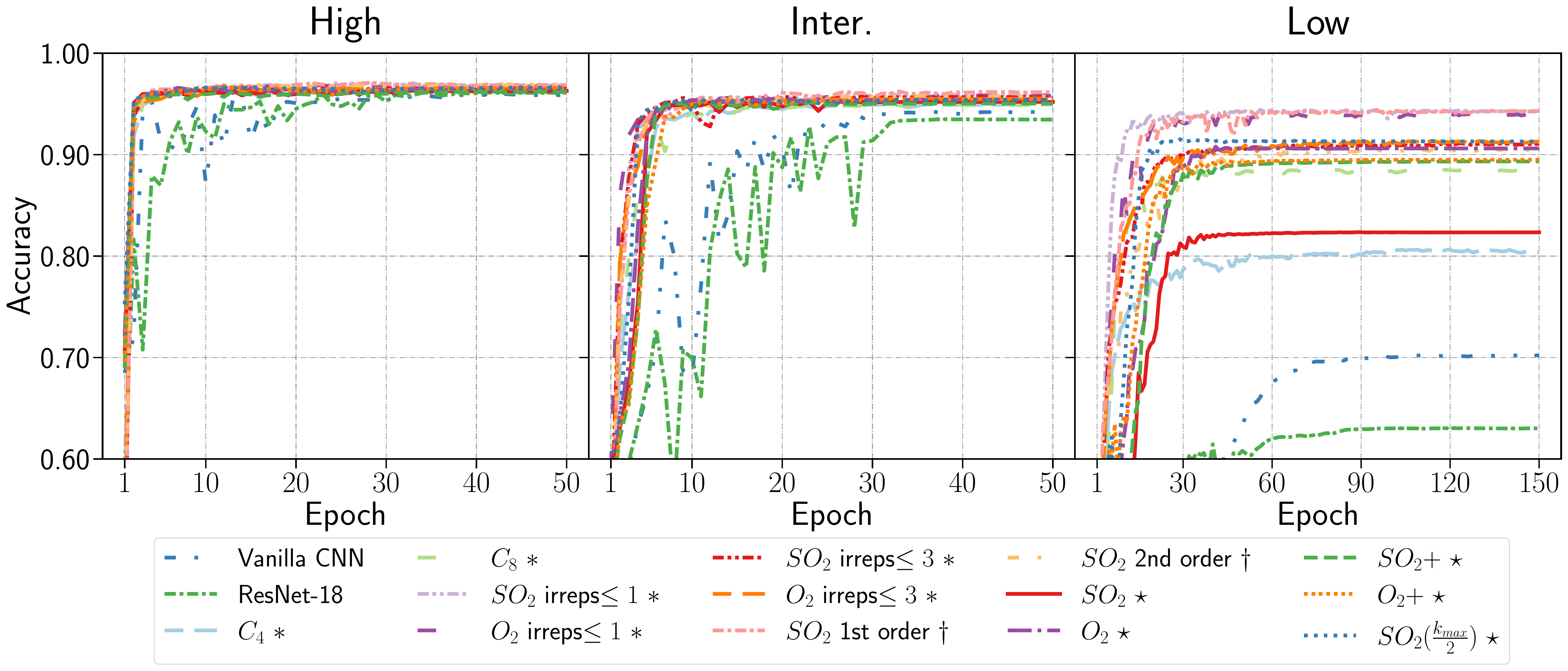}
        \caption{Malaria without augmentation}
    \end{subfigure}
    
    \caption{Learning curves obtained for the different methods on the Malaria data set. Those learning curves are averaged over $3$ independent runs. The legend is the same for all the graphs. The symbols $\ast$, $\dagger$ and $\star$ refer to the use of $E(2)$-CNNs, HNets and B-CNNs, respectively.}
    \label{sec6:figure:Malaria_results}
\end{figure*}

\section{Discussion}\label{sec:discussion}

This section first provides a global discussion regarding the different experiments and results presented in the previous section. Then, we discuss the choice of using models with automatic symmetry discovery, or models as B-CNNs based on applying (strong) constraints to guarantee user-defined symmetry.

\subsection{Global Discussion of the Results}

From the experiments and the preliminary discussions in the previous section, several insights can be retrieved.

\paragraph{Equivariant models vs. vanilla CNNs} 

In the particular case where many data are available, vanilla CNNs do a very decent job by being only marginally below the top-accuracy. Thanks to data augmentation, those models seem to be able to learn meaningful invariances. However, by taking a look at the training curves, it appears clearly that convergence is much slower. This is easily explained by the fact that vanilla CNNs should learn the invariances, while it is not the case for equivariant models. Therefore, computation time and energy may be saved by using instead an equivariant model with  training through much less epochs. Furthermore, this drawback is emphasized when vanilla CNNs should work with less data and/or without data augmentation, up to leading to very poor performances in those cases.

\paragraph{Discrete vs. continuous groups}

As already spotted by~\cite{E2_equiv}, using discrete groups already largely improve performances, and even sometimes constitute the best performing models. Nonetheless, experiments also highlight that it does not guarantee equivariance and often still requires a larger amount of data as well as data augmentation.

\paragraph{B-CNNs vs. other equivariant models}

In our experiments, B-CNNs are most of the time at least able to achieve state-of-the-art performances. In low data settings, they are often the best performing models. In particular, B-CNNs with strong cutoff policies seem very efficient. They achieve top-1 accuracy in $8$ setups and top-3 accuracy in $17$ setups, among a total of $30$ different setups (all the columns for all the data sets). Now, by considering all the B-CNNs model at the same time, they achieve together top-1 accuracy in $16$ setups and top-3 accuracy in $22$ setups. For comparison, it is better than $E(2)$-CNNs and HNets, which achieve top-1 accuracy in $9$ and $5$ setups, and top-3 accuracy in $22$ and $13$ setups, respectively. Also, from the experiments on MNIST and MNIST-back (no rotation during training), one can see that the invariance achieved by design in B-CNNs is better than for other methods as performances for example for the strong cutoff policy are significantly above. Finally, we can observe that B-CNNs are able to achieve good performances even without data augmentation, which is not/less the case for other methods. As data augmentation increases the computational cost of training (because of the increase of the number of training data and/or the number of training epochs), B-CNNs may therefore be a more favorable approach. 

\subsection{Automatic Symmetry Discovery  vs. Constraints-based Models}

This work focuses on constraint-based models that assume that users know \textit{a priori} the appropriate symmetry group(s) for the application at hand. However, it can sometimes be hard to obtain this prior knowledge as it requires good understanding of the data/application. Methods like the one proposed by~\cite{dehmamy2021automatic} (L-CNNs) therefore attempt to infer the invariance(s) that need to be enforced automatically during training. Here, we advocate that the constraint-based approach remains relevant and often competitive.

Firstly, B-CNNs and other constraint-based methods can be adapted to handle symmetries in a data-driven fashion. For example, one can consider a method similar to the one in Section~\ref{subsec:multi_scale} to let the model choose meaningful symmetries. By designing the architecture with multiple networks in parallel that provide different invariances (one could simultaneously consider $SO(2)$, $O(2)$, $SO(2)$+ and $O(2)$+), the model can benefit from multiple views of the problem and use the features that are the most relevant.  In a data-driven fashion, the relevant part(s) of the network will be retained so as to enforce appropriate invariance(s). This approach does not rely on the hypothetical ability of vanilla CNNs to learn specific types of invariance, but rather builds on models that are designed for that.

Secondly, B-CNNs and other constraint-based methods have the advantage to guarantee specific invariances. Instead of using data augmentation and relying on a proper learning of the invariances, mathematically sound mechanisms are used, such as Bessel coefficients for B-CNNs. However, these mechanisms can only deal with an invariance that can be described with reasonable mathematical complexity. Yet, handling symmetries in a data-driven fashion (discovering useful symmetries during training, with vanilla CNNs or other more adapted methods like L-CNNs) is not a one-fits-all solution and some invariances may be impossible to learn without additional mechanisms.

Thirdly, the way constraints are enforced in B-CNNs allows them to exhibit an invariance that can even not be present in the training data set. Hence, as shown in the above experiments, B-CNNs do not rely on data augmentation that increases the computational cost\footnote{Data augmentation is considered as costly because it leads to (i)~an increase of the number of training data, and/or (ii)~an increase of the required number of training epochs.}, nor do they require to see training data with different orientations for the rotation invariance. This is a consequence of the mathematical soundness of our approach.

To conclude, automatic symmetry discovery and constraints-based models are two para\-digms that should be used in different situations. While automatic symmetry discovery is useful when no prior knowledge is available and the invariance may be complex to describe mathematically, it relies on a appropriate learning of the invariances that may fail. On the other side, constraints-based models like our work require a prior knowledge of the invariances involved in the applications, but can provide strong guarantees.

\section{Conclusion and Future Work}

This work provides a comprehensive explanation of B-CNNs, including their mathematical foundations and key findings. Improvements are presented and compared to the initial work of~\cite{BCNN}, including making B-CNNs also equivariant to reflections and multi-scale. Furthermore, the previous troublesome meta-parameters $m_{\rm max}$ and $j_{\rm max}$ that were hard to fine-tune have been replaced with a single meta-parameter $k_{\rm max}$ for which an optimal choice can be computed using the Nyquist frequency.

An extensive empirical study has been conducted to assess the performance of B-CNNs compared to already existing techniques. One can conclude that B-CNNs have, most of the time, better performances than the other state-of-the-art methods, and achieve in the worst cases roughly the same performances. In low data settings, they actually outperform other models most of the time. This is mainly due to the B-CNNs ability to maintain robust invariances without resorting to data augmentation techniques, which is often not the case for other models. Finally, B-CNNs do not involve particular, more exotic (such as complex-valued feature maps), representations for feature maps and are therefore highly compatible with already existing deep learning techniques and frameworks. 

Regarding future work, it could be interesting to tailor B-CNNs for segmentation tasks, given their relevance in fields such as biomedical and satellite imaging. Such domains benefit greatly from rotation and reflection equivariant models, making B-CNNs a promising candidate for these tasks. Finally, a major actual concern in deep learning is the robustness regarding adversarial attacks or, more generally, small perturbations in the image. It could be interesting to evaluate if the use of Bessel coefficients and the $(S-)O(2)$ equivariant constraint make B-CNNs more robust to those specific perturbations or not.

\acks{The authors thank Jérôme Fink and Pierre Poitier for their comments and the fruitful discussions on this paper.

A.M. is funded by the Fund for Scientific Research (F.R.S.-FNRS) of Belgium. V.D. benefits from the support of the Walloon region with a Ph.D. grant from FRIA (F.R.S.-FNRS). A.B. is supported by a Fellowship of the Belgian American Educational Foundation. This research used resources of PTCI at UNamur, supported by the F.R.S.-FNRS under the convention n. 2.5020.11.}

\newpage

\appendix
\section{}\label{appendix:A}

In this Appendix we prove that the Bessel basis described in Section~4.1 can be used as an orthonormal basis by carefully choosing $k_{\nu,j}$.

\begin{theorem}
    Let $D^2$ be a circular domain of radius $R$ in $\mathbb{R}^2$. Let $J_\nu\left(x\right)$ be the Bessel function of the first kind of order $\nu$, and let $k_{\nu,j}$ be defined such that $J_\nu'\left(k_{\nu,j}R\right)=0$, $\forall \nu,j \in \mathbb{N}$. Then
    \[
    	\Bigg\{N_{\nu,j} J_\nu\left(k_{\nu,j} \rho\right)  e^{i\nu\theta}\Bigg\}, \text{ where }N_{\nu,j} = 1 / \sqrt{2\pi\int_0^R\rho J_{\nu}^2\left(k_{\nu,j} \rho\right)d\rho}
    \]
    is an orthonormal basis well-defined to express any squared-integrable functions $f$ such that $f: D^2 \subset \mathbb{R}^2 \longrightarrow \mathbb{R}$.
\end{theorem}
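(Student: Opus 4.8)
The plan is to verify the two defining properties of an orthonormal basis in turn — orthonormality of the family, then completeness — and for each to reduce the two-dimensional statement to classical one-dimensional facts: Fourier series in the angle, and Fourier--Bessel/Dini series in the radius.

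First I would write the inner product on $L^2(D^2)$ in polar coordinates,
$$\langle f,g\rangle = \int_0^{2\pi}\!\!\int_0^R f(\rho,\theta)\,\overline{g(\rho,\theta)}\,\rho\,d\rho\,d\theta,$$
and substitute two basis elements $\psi_{\nu,j}=N_{\nu,j}J_\nu(k_{\nu,j}\rho)e^{i\nu\theta}$ and $\psi_{\mu,\ell}$. The integral factorizes: the angular part is $\int_0^{2\pi}e^{i(\nu-\mu)\theta}\,d\theta=2\pi\,\delta_{\nu\mu}$, so the only surviving case is $\mu=\nu$, where everything reduces to the radial integral $\int_0^R J_\nu(k_{\nu,j}\rho)J_\nu(k_{\nu,\ell}\rho)\,\rho\,d\rho$.

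For that radial integral I would use the Sturm--Liouville form of Bessel's equation: with $a=k_{\nu,j}$, $b=k_{\nu,\ell}$, the functions $u(\rho)=J_\nu(a\rho)$ and $v(\rho)=J_\nu(b\rho)$ satisfy $(\rho u')'+(a^2\rho-\nu^2/\rho)u=0$ and the analogous identity in $b$. Multiplying the first by $v$, the second by $u$, subtracting and integrating over $[0,R]$ collapses the derivative terms to the Lagrange boundary term $\big[\rho(vu'-uv')\big]_0^R$ and leaves $(a^2-b^2)\int_0^R\rho\,uv\,d\rho$. At $\rho=0$ the boundary term vanishes (I would check the $\nu=0$ case, including the admissible root $k_{0,0}=0$, separately); at $\rho=R$ it vanishes because $u'(R)=a\,J_\nu'(aR)=k_{\nu,j}J_\nu'(k_{\nu,j}R)=0$ by the hypothesis on $k_{\nu,j}$, and likewise for $v$. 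Hence for $j\neq\ell$ the radial integral is zero, which together with the angular computation gives orthogonality. Normalization is then immediate, since $N_{\nu,j}$ is defined precisely so that $\|\psi_{\nu,j}\|^2=N_{\nu,j}^2\cdot2\pi\int_0^R\rho J_\nu^2(k_{\nu,j}\rho)\,d\rho=1$.

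The remaining and, I expect, harder part is completeness. I would argue from the completeness of the two one-dimensional systems: $\{e^{i\nu\theta}\}_{\nu\in\mathbb Z}$ is complete in $L^2([0,2\pi])$, and for each fixed $\nu$ the system $\{J_\nu(k_{\nu,j}\rho)\}_{j\ge0}$ is complete in $L^2([0,R],\rho\,d\rho)$ — this last being the classical Fourier--Bessel/Dini expansion theorem for the Neumann-type endpoint condition $J_\nu'(k_{\nu,j}R)=0$, which I would cite rather than reprove. Given $f\in L^2(D^2)$, Fubini together with Parseval on the circle shows that the angular coefficients $f_\nu(\rho)=\frac{1}{2\pi}\int_0^{2\pi}f(\rho,\theta)e^{-i\nu\theta}\,d\theta$ satisfy $f_\nu\in L^2([0,R],\rho\,d\rho)$ and $\sum_\nu\|f_\nu\|^2<\infty$; expanding each $f_\nu$ in its Fourier--Bessel series and recombining produces exactly the expansion of Equation~\eqref{eq:sec4:Bessel_recomposition}, convergent in $L^2(D^2)$. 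Since $f$ is real-valued one can either pass to the real $\cos\nu\theta/\sin\nu\theta$ form or note that the complex coefficients then obey the conjugacy relation of Equation~\eqref{eq:sec4:properties}. The points requiring care are the $\rho\to0$ endpoint behaviour (especially for small $\nu$) and pinning down the exact hypotheses of the Dini-series completeness theorem invoked for the radial expansion.
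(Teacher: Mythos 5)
Your orthonormality argument is essentially the paper's own: the angular integral $\int_0^{2\pi}e^{i(\nu-\mu)\theta}\,d\theta=2\pi\delta_{\nu\mu}$ reduces everything to the radial integral, and the vanishing of $\int_0^R\rho\,J_\nu(k_{\nu,j}\rho)J_\nu(k_{\nu,\ell}\rho)\,d\rho$ for $j\neq\ell$ follows from the boundary condition $J_\nu'(k_{\nu,j}R)=0$. The only difference there is presentational: the paper quotes Lommel's integrals directly, whereas you rederive them from the Sturm--Liouville form of Bessel's equation via the Lagrange boundary term — the same computation, just unpacked. You also rightly observe that normalization is immediate from the definition of $N_{\nu,j}$, so the paper's explicit evaluation of $N_{\nu,j}^2$ via the second Lommel integral is not strictly needed for that step. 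Where you genuinely go beyond the paper is completeness: the paper's proof stops at orthonormality and never justifies that the family spans $L^2(D^2)$, even though the theorem asserts it is a basis for all square-integrable functions. Your reduction — Fourier completeness in $\theta$, then Dini-series completeness of $\{J_\nu(k_{\nu,j}\rho)\}_j$ in $L^2([0,R],\rho\,d\rho)$ for each fixed $\nu$, recombined via Parseval and Fubini — is the standard and correct way to fill that gap, and you correctly flag the one delicate point: for the Neumann-type condition with $\nu=0$ the classical Dini theorem requires the extra constant mode, which is exactly the $k_{0,0}=0$ element the paper includes in its basis. So your proposal is sound and, on the completeness half, strictly more thorough than the paper's proof.
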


\begin{proof}
    To prove this, we will use the fact that
    \begin{align}
        \int_0^{2\pi} e^{i\theta\left(\nu'-\nu\right)} d\theta &= \int_0^{2\pi} \cos\left(\theta\left(\nu'-\nu\right)\right) + i \int_0^{2\pi} \sin\left(\theta\left(\nu'-\nu\right)\right) \nonumber \\
        &= 2\pi \delta_{\nu,\nu'},
        \label{eq:AppendixA:Lemma1}
    \end{align}
    since $\nu'-\nu$ is always an integer in our use of Bessel functions. We use also Lommel's integrals, which are in our particular case
    \begin{align}
        &\int_0^R \rho J_\nu\left(k_{\nu,j}\rho\right) J_\nu\left(k_{\nu,j'}\rho\right) d\rho = \nonumber \\
        &\begin{cases}
            \frac{1}{k_{\nu,j'}^2 - k_{\nu,j}^2} \left[\rho\left(k_{\nu,j} J'_\nu\left(k_{\nu,j}\rho\right)J_\nu\left(k_{\nu,j'}\rho\right) - k_{\nu,j'} J'_\nu\left(k_{\nu,j'}\rho\right)J_\nu\left(k_{\nu,j}\rho\right)\right)\right]_0^R\text{ if } k_{\nu,j} \neq k_{\nu,j'},\\
            \left[\frac{\rho^2}{2}\left[{J'}_\nu^2\left(k_{\nu,j}\rho\right) + \left(1-\frac{\nu^2}{k_{\nu,j}^2\rho^2}\right)J_\nu\left(k_{\nu,j}\rho\right)\right]\right]_0^R\text{ otherwise}.
        \end{cases}
        \label{eq:AppendixA:early_Lemma2}
    \end{align}
    By taking into account that $J'_\nu\left(k_{\nu,j}R\right) = 0$, Lommel's integrals lead to
    \begin{align}
        \int_0^R \rho J_\nu\left(k_{\nu,j}\rho\right) J_\nu\left(k_{\nu,j'}\rho\right) d\rho &=
        \begin{cases}
            0\text{ if } k_{\nu,j} \neq k_{\nu,j'}\\
            \left(\frac{R^2}{2} - \frac{\nu^2}{2k_{\nu,j}^2}\right) J_\nu\left(k_{\nu,j}R\right)\text{ otherwise}.
        \end{cases} \nonumber \\
        &= \left(\frac{R^2}{2} - \frac{\nu^2}{2k_{\nu,j}^2}\right) J_\nu\left(k_{\nu,j}R\right) \delta_{j,j'}.
        \label{eq:AppendixA:Lemma2}
    \end{align}
    Now, by using Equation~\eqref{eq:AppendixA:Lemma1}
    \begin{align}
        &\int_0^{2\pi} \int_0^R \rho \left[N_{\nu,j} J_\nu\left(k_{\nu,j} \rho\right) e^{i\nu\theta}\right]^* \left[N_{\nu',j'} J_{\nu'}\left(k_{\nu',j'} \rho\right) e^{i\nu'\theta}\right] d\theta d\rho \nonumber \\
        = &2\pi\delta_{\nu,\nu'}\int_0^R \rho N_{\nu,j} J_\nu\left(k_{\nu,j} \rho\right) N_{\nu,j'} J_{\nu}\left(k_{\nu,j'} \rho\right) d\rho, \nonumber
    \end{align}
    which, by using Equation~\eqref{eq:AppendixA:Lemma2}, leads to
    \begin{align}
        &2\pi\delta_{\nu,\nu'}\int_0^R \rho N_{\nu,j} J_\nu\left(k_{\nu,j} \rho\right) N_{\nu,j'} J_{\nu}\left(k_{\nu,j'} \rho\right) d\rho \nonumber \\
        = &2\pi N_{\nu,j}^2 \left(\frac{R^2}{2} - \frac{\nu^2}{2k_{\nu,j}^2}\right) J_\nu\left(k_{\nu,j}R\right) \delta_{\nu,\nu'}\delta_{j,j'}. \nonumber
    \end{align}
    To conclude this proof, one can show by using Equation~\eqref{eq:AppendixA:Lemma2} again that
    \begin{align}
        N_{\nu,j}^2 &= \frac{1}{2\pi \int_0^R \rho J_\nu^2\left(k_{\nu,j}\rho\right)d\rho} \nonumber \\
        &= \frac{1}{2\pi \left(\frac{R^2}{2} - \frac{\nu^2}{2k_{\nu,j}^2}\right) J_\nu\left(k_{\nu,j}R\right)}, \nonumber
    \end{align}
    and then finally,
    \begin{equation}
        \int_0^{2\pi} \int_0^R \rho \left[N_{\nu,j} J_\nu\left(k_{\nu,j} \rho\right) e^{i\nu\theta}\right]^* \left[N_{\nu',j'} J_{\nu'}\left(k_{\nu',j'} \rho\right) e^{i\nu'\theta}\right] d\theta d\rho = \delta_{\nu,\nu'}\delta_{j,j'}, \nonumber
    \end{equation}
    which is the definition of an orthonormal basis\footnote{Note that the proof for $k_{\nu,j}$ defined by $J_\nu\left(k_{\nu,j}R\right)=0$ is now straightforward since it only sweeps the non-zero term in Equation~\eqref{eq:AppendixA:early_Lemma2}. The following remains the same.}.
\end{proof} 

\section{}\label{appendix:B}

In this Appendix we prove the properties that link $\varphi_{\nu,j}$ and $\varphi_{-\nu,j}$.

\begin{theorem}
    Let $\varphi_{\nu,j}, \forall \nu,j \in \mathbb{N}$ be the Bessel coefficients of a particular function $\Psi\left(\rho,\theta\right): D^2 \subset \mathbb{R}^2 \longrightarrow \mathbb{R}$ defined on a circular domain of radius $R$, that is,
    \[
    	\varphi_{\nu,j}=\int_0^{2\pi} \int_0^R \rho \left[N_{\nu,j} J_\nu\left(k_{\nu,j} \rho\right) e^{i\nu\theta}\right]^* \Psi\left(\rho,\theta\right) d\theta d\rho.
    \]
    Then, these coefficients are not all independent. They are linked by the relations
    \[
        \varphi_{-\nu,j} = \left(-1\right)^\nu \varphi_{\nu,j}^* \Longleftrightarrow
        \begin{cases}
            \Re\left(\varphi_{-\nu,j}\right)=\left(-1\right)^\nu \Re\left(\varphi_{\nu,j}\right)  \\
            \Im\left(\varphi_{-\nu,j}\right)=\left(-1\right)^{\nu+1} \Im\left(\varphi_{\nu,j}\right).
        \end{cases}
    \]
\end{theorem}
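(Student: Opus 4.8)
The plan is to compute $\varphi_{-\nu,j}$ directly from its integral definition and massage it into the form $(-1)^\nu\varphi_{\nu,j}^*$, after which the two scalar identities for the real and imaginary parts fall out by inspection. First I would record the auxiliary facts that $k_{-\nu,j}=k_{\nu,j}$ and $N_{-\nu,j}=N_{\nu,j}$. The former follows because $k_{\nu,j}$ is fixed by the condition $J_\nu'(k_{\nu,j}R)=0$ together with the identity $J_{-\nu}'(x)=(-1)^\nu J_\nu'(x)$ (a consequence of Equation~\eqref{eq:sec4:Bessel_functions_property_1}), so $J_\nu$ and $J_{-\nu}$ have the same zeros; the latter then follows since $N_{-\nu,j}^{-2}=2\pi\int_0^R\rho J_{-\nu}^2(k_{-\nu,j}\rho)\,d\rho = 2\pi\int_0^R\rho\big((-1)^\nu J_\nu(k_{\nu,j}\rho)\big)^2 d\rho = N_{\nu,j}^{-2}$.

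Next I would substitute $-\nu$ for $\nu$ in the definition of $\varphi_{\nu,j}$, obtaining
\[
    \varphi_{-\nu,j}=\int_0^{2\pi}\int_0^R \rho\, N_{-\nu,j}\, J_{-\nu}\!\left(k_{-\nu,j}\rho\right) \left(e^{-i\nu\theta}\right)^{*} \Psi\left(\rho,\theta\right)\,d\theta\,d\rho ,
\]
where I have used that $N_{-\nu,j}$ and $J_{-\nu}$ are real so only the exponential is conjugated. Then I would apply the three reductions in turn: $\left(e^{-i\nu\theta}\right)^{*}=e^{i\nu\theta}$, $k_{-\nu,j}=k_{\nu,j}$ and $N_{-\nu,j}=N_{\nu,j}$, and $J_{-\nu}(k_{\nu,j}\rho)=(-1)^\nu J_\nu(k_{\nu,j}\rho)$ by Equation~\eqref{eq:sec4:Bessel_functions_property_1}. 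This yields
\[
    \varphi_{-\nu,j}=(-1)^\nu\int_0^{2\pi}\int_0^R \rho\, N_{\nu,j}\, J_{\nu}\!\left(k_{\nu,j}\rho\right) e^{i\nu\theta}\, \Psi\left(\rho,\theta\right)\,d\theta\,d\rho .
\]

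To finish, I would observe that the integral on the right is exactly $\varphi_{\nu,j}^{*}$: conjugating the defining integral of $\varphi_{\nu,j}$ flips $\left(e^{i\nu\theta}\right)^{*}=e^{-i\nu\theta}$ back to $e^{i\nu\theta}$, leaves $\rho$, $N_{\nu,j}$, $J_\nu(k_{\nu,j}\rho)$ untouched (all real), and leaves $\Psi(\rho,\theta)$ untouched since $\Psi$ is real-valued by hypothesis. Hence $\varphi_{-\nu,j}=(-1)^\nu\varphi_{\nu,j}^{*}$. Taking real and imaginary parts and using $\Re(\varphi_{\nu,j}^{*})=\Re(\varphi_{\nu,j})$, $\Im(\varphi_{\nu,j}^{*})=-\Im(\varphi_{\nu,j})$ gives $\Re(\varphi_{-\nu,j})=(-1)^\nu\Re(\varphi_{\nu,j})$ and $\Im(\varphi_{-\nu,j})=(-1)^{\nu+1}\Im(\varphi_{\nu,j})$, as claimed. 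There is no real obstacle here — the only thing to be careful about is tracking which factors are real (so that conjugation only acts on the exponential and on $\Psi$), and remembering that the reality of $\Psi$ is precisely what is needed to identify the reduced integral with $\varphi_{\nu,j}^{*}$ rather than with a genuinely different object.
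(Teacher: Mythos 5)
Your proof is correct and follows essentially the same route as the paper's Appendix~\ref{appendix:B}: establish $k_{-\nu,j}=k_{\nu,j}$ and $N_{-\nu,j}=N_{\nu,j}$, then substitute $-\nu$ into the defining integral and use $J_{-\nu}=(-1)^\nu J_\nu$ together with the reality of $\Psi$ to identify the result with $(-1)^\nu\varphi_{\nu,j}^*$ (the paper obtains $J'_{-\nu}=(-1)^\nu J'_\nu$ via the recurrence $J'_\nu=\tfrac{1}{2}(J_{\nu-1}-J_{\nu+1})$ rather than by direct differentiation, an immaterial difference). One tiny slip: you write that ``$J_\nu$ and $J_{-\nu}$ have the same zeros'' where you mean their \emph{derivatives} $J'_\nu$ and $J'_{-\nu}$, since $k_{\nu,j}$ is defined by $J'_\nu(k_{\nu,j}R)=0$.
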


\begin{proof}
    To prove this, we will use different properties of the Bessel functions. Firstly,
    \begin{equation}
        J_{-\nu}\left(x\right)=\left(-1\right)^\nu J_{\nu}\left(x\right), \nonumber
    \end{equation}
    and secondly,
    \begin{equation}
        J'_{\nu}\left(x\right)=\frac{1}{2}\left(J_{\nu-1}\left(x\right) - J_{\nu+1}\left(x\right)\right). \nonumber
    \end{equation}
    Then, by using these two relations, one can show that
    \begin{align}
        J'_{-\nu}\left(x\right) &= \frac{1}{2}\left(J_{-\nu-1}\left(x\right) - J_{-\nu+1}\left(x\right)\right) \nonumber \\
        &= \frac{\left(-1\right)^{\nu+1}}{2}\left(J_{\nu+1}\left(x\right) - J_{\nu-1}\left(x\right)\right) \nonumber \\
        &= \left(-1\right)^{\nu} J'_{\nu}\left(x\right)
        \label{eq:AppendixB:Lemma1}
    \end{align}
    However, if $k_{-\nu,j}$ is such that $J'_{-\nu}\left(k_{-\nu,j}R\right)=0$, it also leads thanks to Equation~\eqref{eq:AppendixB:Lemma1} to $J'_{\nu}\left(k_{-\nu,j}R\right)=0$. And then, the only possibility is that $k_{-\nu,j} = k_{\nu,j}$ (because we still have $J'_{\nu}\left(k_{\nu,j}R\right)=0$).
    
    Now, regarding the normalization factor,
    \begin{align}
        N_{-\nu,j}^{-1} &= \sqrt{2\pi\int_0^R \rho J_{-\nu}^2\left(k_{-\nu,j}\rho\right) d\rho} \nonumber \\
        &= \sqrt{2\pi\int_0^R \rho J_{\nu}^2\left(k_{\nu,j}\rho\right) \left(-1\right)^{2\nu} d\rho} \nonumber \\
        &= N_{\nu,j}^{-1}. \nonumber
    \end{align}
    
    One can now put all this together to show that
    \begin{align}
        \varphi_{-\nu,j} &= \int_0^{2\pi} \int_0^R \rho \left[N_{-\nu,j} J_{-\nu}\left(k_{-\nu,j} \rho\right) e^{-i\nu\theta}\right]^* \Psi\left(\rho,\theta\right) d\theta d\rho \nonumber \\
        &= \left(-1\right)^{\nu} \int_0^{2\pi} \int_0^R \rho \left[N_{\nu,j} J_{\nu}\left(k_{\nu,j} \rho\right) e^{-i\nu\theta}\right]^* \Psi\left(\rho,\theta\right) d\theta d\rho \nonumber \\
        &= \left(-1\right)^{\nu} \varphi_{\nu,j}^*, \nonumber
    \end{align}
    which leads to the end of the proof.
\end{proof}

\section{}\label{appendix:C}

In this Appendix, we prove that the rotation invariant operation described in Section~\ref{sec5:subsec:rotation_invariant_operation} is pseudo-injective. It means that results will be different if images are different. Pseudo makes reference to the exception when an image is compared to a rotated version of itself.

\begin{theorem}
    Let $\{\varphi_{\nu,j}\}$ be the Bessel coefficients of a particular function $\Psi\left(\rho,\theta\right): D^2 \subset \mathbb{R}^2 \longrightarrow \mathbb{R}$ defined on a circular domain of radius $R$. Let $\{\varphi'_{\nu,j}\}$ be the Bessel coefficients of another particular function $\Psi'\left(\rho,\theta\right)$ defined on the same domain $D^2$. Finally, let $\{\kappa_{\nu,j}\}$ be some arbitrary complex numbers. Then,
    \[
        \sum_\nu\big|\sum_j \kappa^*_{\nu,j} \varphi_{\nu,j}\big|^2 = \sum_\nu\big|\sum_j \kappa^*_{\nu,j} \varphi'_{\nu,j}\big|^2 \Rightarrow \exists\alpha: \Psi\left(\rho,\theta\right) = \Psi'\left(\rho,\theta-\alpha\right), \forall \rho, \forall \theta.
    \]
\end{theorem}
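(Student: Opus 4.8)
The plan is to work entirely in the Fourier--Bessel domain and reduce the statement to a fact about the coefficient vectors at each angular order.

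\textbf{Step 1 (reduce to the coefficients).} By the orthonormality of the Bessel basis proved in Appendix~\ref{appendix:A}, two square-integrable functions on $D^2$ coincide if and only if all their Bessel coefficients coincide. By Equation~\eqref{eq:sec4:Effect_of_rotations_on_Bessel_coefficients}, the Bessel coefficients of $\Psi'(\rho,\theta-\alpha)$ are $\varphi'_{\nu,j}e^{-i\nu\alpha}$. Hence it suffices to exhibit a single angle $\alpha$ with $\varphi'_{\nu,j}=\varphi_{\nu,j}e^{i\nu\alpha}$ for all $\nu,j$, and the rest of the argument targets this.

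\textbf{Step 2 (decouple the angular orders).} For a fixed order $\mu$, apply the hypothesis to the choice $\kappa_{\nu,j}=0$ for every $\nu\neq\mu$. Since both sides are sums over $\nu$ of terms each depending only on $\kappa_{\nu,\cdot}$, the identity collapses to $\big|\sum_j\kappa^*_{\mu,j}\varphi_{\mu,j}\big|^2=\big|\sum_j\kappa^*_{\mu,j}\varphi'_{\mu,j}\big|^2$. Writing the (finite) complex vectors $v_\mu=(\varphi_{\mu,j})_j$, $v'_\mu=(\varphi'_{\mu,j})_j$ and letting $w=(\kappa_{\mu,j})_j$ be arbitrary, this says $\big|\langle w,v_\mu\rangle\big|=\big|\langle w,v'_\mu\rangle\big|$ for all $w$, with $\langle\cdot,\cdot\rangle$ the Hermitian inner product.

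\textbf{Step 3 (elementary lemma).} I would then prove: if $\big|\langle w,v\rangle\big|=\big|\langle w,v'\rangle\big|$ for all $w$, then either $v=v'=0$ or $v'=\lambda v$ with $|\lambda|=1$. If $v=0$, then $\langle w,v'\rangle=0$ for all $w$, so $v'=0$; if $v\neq0$, put $w=v$ to get $\|v\|^2=\big|\langle v,v'\rangle\big|\le\|v\|\,\|v'\|$ and symmetrically $\|v'\|\le\|v\|$, hence $\|v\|=\|v'\|$ and $\big|\langle v,v'\rangle\big|=\|v\|\,\|v'\|$; the Cauchy--Schwarz equality case gives $v'=\lambda v$, and $\|v'\|=\|v\|$ forces $|\lambda|=1$. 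Applied to each $\mu$, this yields real numbers $\phi_\mu$ with $\varphi'_{\mu,j}=e^{i\phi_\mu}\varphi_{\mu,j}$ for every $j$ (take $\phi_\mu=0$ when $v_\mu=0$).

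\textbf{Step 4 (assemble one rotation angle) --- the main obstacle.} It remains to show the per-order phases are coherent: that a single $\alpha$ satisfies $\phi_\mu\equiv\mu\alpha\pmod{2\pi}$ for every $\mu$ with $v_\mu\neq0$. For $\mu=0$, reality of $\Psi$ and $\Psi'$ (Equation~\eqref{eq:sec4:properties}) forces $\varphi_{0,j},\varphi'_{0,j}\in\mathbb{R}$, so $e^{i\phi_0}=\pm1$; for the least positive order $\mu^\star$ with $v_{\mu^\star}\neq0$ one would set $\alpha=\phi_{\mu^\star}/\mu^\star$ and then verify $\phi_\mu=\mu\alpha$ for the remaining orders. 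I expect this last step to be the real difficulty: the hypothesis, being a sum over $\nu$ of single-order contributions, is already saturated by the single-order choices of $\kappa$ used in Step~2, so tying together the relative phases of distinct harmonics requires extra input --- the reality constraints of Equation~\eqref{eq:sec4:properties}, a normalization, or reading the ``pseudo'' exception slightly more broadly so as to absorb the residual freedom (the global sign of the $\nu=0$ part, and any order-by-order rephasing that leaves all the quantities $\sum_\nu\big|\sum_j\kappa^*_{\nu,j}\varphi_{\nu,j}\big|^2$ unchanged).
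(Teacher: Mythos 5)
Your Steps 1--3 are correct and follow the same overall route as the paper's Appendix~\ref{appendix:C}: restrict the test coefficients to a single angular order, deduce that for each $\nu$ the coefficient vectors satisfy $\big|\langle w,v_\nu\rangle\big|=\big|\langle w,v'_\nu\rangle\big|$ for all $w$, and conclude $v'_\nu=\lambda_\nu v_\nu$ with $\big|\lambda_\nu\big|=1$. Your Cauchy--Schwarz argument for this last step is in fact more careful than the paper's, which phrases the same conclusion as ``equal density matrices imply the states differ by a phase factor'' and cites it as known.

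The obstacle you isolate in Step 4 is a genuine gap, and it is present in the paper's own proof as well: the paper passes silently from ``each order differs by some unit-modulus factor $\lambda_\nu$'' to ``the factors are $e^{i\nu\alpha}$ for a single $\alpha$'' (its footnote only checks the converse direction). Your diagnosis of why this cannot be repaired from the stated hypothesis is also correct: the quantity $\sum_\nu\big|\sum_j\kappa^*_{\nu,j}\varphi_{\nu,j}\big|^2$ is invariant under any independent rephasing $\varphi_{\nu,j}\mapsto e^{i\phi_\nu}\varphi_{\nu,j}$ of the angular orders, so no choice of $\kappa$ can detect the relative phases between distinct orders, and the reality constraints of Equation~\eqref{eq:sec4:properties} do not help since they only couple $\nu$ to $-\nu$. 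Concretely, if $\Psi$ has nonzero content at orders $\nu=1$ and $\nu=2$, the real function $\Psi'$ obtained by setting $\varphi'_{1,j}=e^{i\beta}\varphi_{1,j}$ with $\beta\notin\{0,\pi\}$, leaving all other nonnegative orders unchanged and extending to negative $\nu$ by the reality relation, satisfies the hypothesis for every $\kappa$ yet equals $\Psi(\rho,\theta-\alpha)$ for no $\alpha$. So the implication as literally stated is false; what Steps 1--3 actually establish is that $\Psi$ and $\Psi'$ agree up to an independent unimodular rephasing of each angular order, of which rotation is the special coherent case $\phi_\nu=-\nu\alpha$. Read as a proof of that weaker (and correct) statement, your proposal is complete.
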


\begin{proof}
    To make developments easier, one can use the bra-ket notation commonly used in quantum mechanics to denote quantum states. In this notation, $\big|v\rangle$ is called a \textit{ket} and denotes a vector in an abstract complex vector space, and $\langle v\big|$ is called a \textit{bra} and corresponds to the same vector but in the dual vector space. It follows that $\big|v\rangle^\dag = \langle v\big|$, and the inner-product between two vectors is conveniently expressed by $\langle v\big|f\rangle$, and the outer-product by $\big|f\rangle\langle v\big|$.
    
    By using the fact that $\big|z\big|^2=z z^*$ and the bra-ket notation,
    \begin{equation}
        \sum_\nu\big|\sum_j \kappa^*_{\nu,j} \varphi_{\nu,j}\big|^2 = \sum_\nu\big|\sum_j \kappa^*_{\nu,j} \varphi'_{\nu,j}\big|^2 \nonumber
    \end{equation}
    leads to
    \begin{equation}
        \sum_\nu \langle \mathbf{\kappa}_\nu \big| \mathbf{\varphi}_\nu \rangle  \langle \mathbf{\kappa}_\nu \big| \mathbf{\varphi}_\nu \rangle^* = \sum_\nu \langle \mathbf{\kappa}_\nu \big| \mathbf{\varphi '}_\nu \rangle  \langle \mathbf{\kappa}_\nu \big| \mathbf{\varphi '}_\nu \rangle^*, \nonumber
    \end{equation}
    where $\mathbf{\kappa}_\nu$ (resp., $\mathbf{\varphi}_\nu$) is a vector that contains all the different values $\kappa_{\nu,j}$ (resp., $\varphi_{\nu,j}$) for this particular $\nu$. This Equation can further be written
    \begin{equation}
        \sum_\nu \langle \mathbf{\kappa}_\nu \big| \mathbf{\varphi}_\nu \rangle  \langle \mathbf{\varphi}_\nu \big| \mathbf{\kappa}_\nu \rangle = \sum_\nu \langle \mathbf{\kappa}_\nu \big| \mathbf{\varphi '}_\nu \rangle  \langle \mathbf{\varphi '}_\nu \big| \mathbf{\kappa}_\nu \rangle.
        \label{eq:AppendixC:condition}
    \end{equation}
    However, since the $\kappa_{\nu,j}$'s are totally arbitrary, the only possibility to satisfy Equation~\eqref{eq:AppendixC:condition} is that
    \begin{equation}
        \sum_\nu \big| \mathbf{\varphi}_\nu \rangle  \langle \mathbf{\varphi}_\nu \big|  = \sum_\nu \big| \mathbf{\varphi '}_\nu \rangle  \langle \mathbf{\varphi '}_\nu \big|. \nonumber
    \end{equation}
    In quantum mechanics, $\big|\mathbf{\varphi}_\nu\rangle \langle \mathbf{\varphi}_\nu\big|$ is called the density matrix of $\mathbf{\varphi}_\nu$, and it is known that the only way to achieve identical density matrices for different states $\mathbf{\varphi}_\nu$ and $\mathbf{\varphi'}_\nu$ is that they should only differ by a phase factor\footnote{Indeed, if $\mathbf{\varphi}_\nu = \mathbf{\varphi'}_\nu e^{i\nu\alpha}$, then $\big|\mathbf{\varphi'}_\nu\rangle \langle \mathbf{\varphi'}_\nu\big|=\big|e^{i\nu\alpha}\mathbf{\varphi}_\nu\rangle \langle e^{i\nu\alpha}\mathbf{\varphi}_\nu\big|=e^{i\nu\alpha}e^{-i\nu\alpha}\big|\mathbf{\varphi}_\nu\rangle \langle \mathbf{\varphi}_\nu\big|=\big|\mathbf{\varphi}_\nu\rangle \langle \mathbf{\varphi}_\nu\big|$}.
\end{proof}

\vskip 0.2in
\bibliography{main}

\end{document}